\newcommand\inner[2]{\langle #1, #2 \rangle}
\renewcommand{\P}{\mathbb{P}}
\newcommand{\E}{\mathbb{E}}
\newcommand{\Var}{{\rm Var}}
\newcommand{\cN}{\mathcal{N}}
\newcommand{\R}{\mathbb{R}}
\renewcommand{\S}{\mathbb{S}}
\newcommand{\<}{\langle}
\renewcommand{\>}{\rangle}
\renewcommand{\op}{{\rm op}}
\DeclareSymbolFont{rsfs}{U}{rsfs}{m}{n}
\DeclareSymbolFontAlphabet{\mathscrsfs}{rsfs}
\def\bA{{\boldsymbol A}}
\def\bV{{\boldsymbol V}}
\def\bW{{\boldsymbol W}}
\def\tK{\widetilde{K}}
\def\cR{\mathcal{R}}
\def\supp{{\rm supp}}
\def\de{{\rm d}}
\def\Tr{{\rm Tr}}
\def\de{{\rm d}}
\def\cE{{\mathcal E}}
\def\cA{{\mathcal A}}
\def\proj{{\mathsf P}}
\def\reals{{\mathbb R}}
\def\proj{{\mathsf P}}
\def\proj{{\mathsf P}}
\def\reals{{\mathbb R}}
\def\proj{{\mathsf P}}
\def\tQ{\tilde{Q}}
\def\cE{{\mathcal E}}
\def\de{{\rm d}}
\def\tK{\widetilde{K}}
\def\cE{{\mathcal E}}
\def\bA{{\boldsymbol A}}
\def\cM{{\mathcal M}}
\def\tQ{\Tilde Q}
\def\tQ{\Tilde Q}
\def\Id{{{\rm Id}}}
\newcommand{\gap}{\mathrm{gap}}
\theoremstyle{plain}
\newtheorem{theorem}{Theorem}[section]
\newtheorem{proposition}[theorem]{Proposition}
\newtheorem{lemma}[theorem]{Lemma}
\newtheorem{corollary}[theorem]{Corollary}
\theoremstyle{definition}
\newtheorem{definition}[theorem]{Definition}
\newtheorem{assumption}[theorem]{Assumption}
\theoremstyle{remark}
\newtheorem{remark}[theorem]{Remark}
\icmltitlerunning{Understanding Catastrophic Forgetting In LoRA via Mean-Field Attention Dynamics}
\DeclareMathOperator*{\argmax}{arg\,max}
\begin{document}
\twocolumn[
\icmltitle{Understanding Catastrophic Forgetting In LoRA via Mean-Field Attention Dynamics
}



\icmlsetsymbol{equal}{*}

\begin{icmlauthorlist}
\icmlauthor{Hugo Koubbi}{equal,ensps}
\icmlauthor{Louis Hernandez}{equal,craft}
\icmlauthor{Matthieu Boussard}{craft}
\end{icmlauthorlist}

\icmlaffiliation{craft}{Craft AI, 34 Rue Guersant, Paris, France}
\icmlaffiliation{ensps}{Université Paris Dauphine, France}

\icmlcorrespondingauthor{Hugo Koubbi}{hugo.koubbi@dauphine.psl.eu}

\icmlkeywords{Machine Learning, ICML}

\vskip 0.3in
]



\printAffiliationsAndNotice{\icmlEqualContribution} 
 
\begin{abstract}
Low-Rank Adaptation (LoRA) is the dominant parameter-efficient fine-tuning method due to its favorable compute-performance trade-off, yet it suffers from catastrophic forgetting. We study forgetting through a tractable \emph{mean-field self-attention} toy model, where tokens evolve as an interacting particle system and LoRA acts as a low-rank perturbation. Using tools from partial differential equations and dynamical systems, we characterize regimes suggesting a phase transition between forgetting and non-forgetting behavior. We show that one phase transition appears with respect to the norm of the perturbation, and the other with respect to the depth of the Transformers.
We further bound the time-to-deviation in terms of the perturbation size and spectral quantities, and corroborate the predicted trends with experiments and exploratory analyses on real models under LoRA fine-tuning.
\end{abstract}

\section{Introduction}
\label{sec:intro}

Since their introduction~\cite{vaswani2017attention}, transformer architectures have been scaled to large language models (LLMs) with unprecedented capabilities. However, even for open-source LLMs such as~\citet{touvron2023llama,jiang2023mistral,yang2025qwen3technicalreport}, fine-tuning remains a practical bottleneck under limited hardware, because of both memory footprint and computational cost.

To mitigate this issue, parameter-efficient fine-tuning 
methods \cite{peft}, such as Low-rank Adaptation (LoRA) \cite{hu2021lora}, have been proposed. LoRA reduces the number of trainable parameters by learning low-rank updates to the attention projection matrices, while keeping the pre-trained backbone fixed. In practice, it preserves much of the pre-trained model’s performance, despite significantly lowering the computational and memory costs of fine-tuning.

However, a practical challenge is to acquire new capabilities without degrading previously learned ones, leading to a phenomenon known as  \emph{catastrophic forgetting} \cite{li2017learningforgetting}.
The complexity of full-scale LLMs makes them intractable to analyze mathematically; we study a simplified toy model aimed at understanding forgetting from a theoretical perspective.
Following the recent \emph{mean-field Transformer} viewpoint (see~\citet{rigollet2025mean} for a survey), we model the forward pass as the evolution of an interacting particle system of tokens. From this point, we use changes in the emergent representation geometry as a proxy for forgetting.

\paragraph{\textbf{Mean-field modeling.}}
The Neural ODE framework \citep{Haber_2017,weinan2017proposal,chen2019neural} views depth as (discrete) time and analyzes continuum(-depth) limits of residual networks. In particular, a ResNet \cite{he2016deep} can be seen as a forward Euler discretization of
\begin{equation}\label{eq:resnet_ode}
    \dot{x}(t) = f_{\theta}(x(t)),
\end{equation}
where $\theta$ denotes learned parameters and $f_\theta:\mathbb{R}^{d}\rightarrow \mathbb{R}^{d}$ is a vector field acting on representations.

A rapidly growing line of work recasts the forward pass of a deep (encoder) Transformer as the evolution of a cloud of interacting \emph{tokens}: depth plays the role of time, token embeddings are viewed as particles (often constrained to $\mathbb S^{d-1}$ after normalization), and the network induces a flow map $\mu_0 \mapsto \mu_T$ (with $T>0$) on the space of probability measures by evolving an interacting particle system together with its mean-field limit
\citep{geshkovski2023emergence,geshkovski2024dynamic,geshkovski2025mathematical,chen2025criticalattentionscalinglongcontext,chen2025quantitative,bruno2024emergence,bruno2026multiscale,karagodin2024clustering,karagodin2026normalization,alcalde2025attention,cowsik2025geometric,koubbi2026homogenized,fedorov2026clustering,agazzi2026stochasticscalinglimitssynchronization}.
In particular, \citet{sander2022sinkformers} propose a continuous-depth idealization of self-attention in which token embeddings $(x_i(t))_{i=1}^n \subset \mathbb{R}^d$ evolve according to
\begin{equation}\label{eq:self_attention_ode}
     \dot{x}_{i}(t)
     =\sum_{j=1}^{n}
     \frac{\exp\left(\langle Q(t)x_i(t),K(t)x_j(t)\rangle\right)}
          {\sum_{k=1}^{n}\exp\left(\langle Q(t)x_i(t),K(t)x_k(t)\rangle\right)}
     \, V(t) x_{j}(t),
\end{equation}
where $(x_1(0),\dots,x_n(0))\in \mathbb{R}^{d\times n}$ are the initial tokens and $(Q(t),K(t),V(t))_{t\geq 0}\in 
(\R^{d\times d})^{3}$ are the given attention matrices typically called (Query, Key, Value). Denote the state  $(x_1(T),\dots,x_n(T))$, produced by Eq.\eqref{eq:self_attention_ode}, as the \emph{Transformer representation} at depth (time) $T$. In the rest of the paper, we work under the following assumption.
\begin{assumption}\label{ass:time_indepdenet_weights}
The matrices $(Q(t),K(t),V(t))_{t\geq 0}$ are constant in time, i.e., 
\begin{equation}
    K(t)=K,\quad Q(t)=Q,\quad V(t)=V.
\end{equation}
\end{assumption}
This particle-system viewpoint has already yielded a detailed mathematical picture of the \emph{asymptotic geometry} of representations, including clustering and representation collapse.

Following the literature on mean-field transformers (see Section \ref{sec:related_work}), we study a tractable model: we assume a \textbf{single learned attention head that is tied (identical) across layers}. Under this tied-weights assumption, the forward pass reduces to an interacting particle system that can drive token embeddings toward a \emph{clustering}
 regime\footnote{We empirically observe a \emph{collapse of the representation} in Llama 2 \cite{touvron2023llama}, see Section \ref{sec: Llama2cluster}, motivating the relevance of the toy model.}.

From this point onward, we interpret changes in the emergent representation geometry as a \emph{proxy} for forgetting. In our toy model, the long-time cluster configuration summarizes the representations produced by the pre-trained dynamics, and the model’s predictions are functions of these representations. LoRA modifies the forward dynamics, which can move the clusters and thus change the representations fed to the output layer; we interpret such deviations in cluster configurations as a proxy for forgetting. Empirically, this geometric drift correlates with degraded base-task performance (e.g., higher base perplexity), which we report in several experiments.

Mathematically, we quantify these deviations using the Wasserstein distance between the empirical measures of the tokens, or by qualitatively comparing the two limiting clusters. We correlate these proxies with the degradation of base-task perplexity (next-token prediction on a fixed dataset).

\paragraph{\textbf{LoRA modeling.}}
At layer $\ell$, we consider LoRA-modified attention matrices $(\widetilde Q^\ell,\widetilde K^\ell,\widetilde V^\ell)$ of the form, for $M\in \{Q,K,V\}$;
\begin{equation}\label{eq:lora_def}
   \widetilde M^{\ell} = M+ \Delta M^{\ell},
\end{equation}
with low-rank updates for $M\in \{V,K,Q\}$ given by 
\begin{equation*}\label{eq:lora_updates}
   \Delta M^{\ell}=(M_A^{\ell})^{\top}M_B^{\ell},
\end{equation*}
where $(V_A^{\ell},V_B^{\ell},K_A^{\ell},K_B^{\ell},Q_A^{\ell},Q_B^{\ell}) \in (\mathbb{R}^{r\times d})^{6}$ and $r \ll d$ is the LoRA rank.

We investigate two stylized regimes for the LoRA factors across depth:
\begin{itemize}
    \item \textbf{Deterministic (tied) adapters:}\quad
    for all $\ell\in \{1,\dots,L\}$, $M^{\ell}=M$ (and similarly for the other factors). for $M\in \{Q,K,V\}$.
    \item \textbf{Random adapters:}\label{eq:random adapters}\quad for all $\ell\in \{1,\dots,L\}$
    $(\Delta M^{\ell})_{\ell=1}^{L}\overset{\mathrm{i.i.d.}}{\sim}\rho$ (and similarly for the other factors),
    where $\rho$ is a Gaussian distribution on matrices for $M\in \{Q,K,V\}$.
\end{itemize}

Deterministic model can be viewed as a worst-case scenario for forgetting, while the random-adapter model serves as a proxy for an ``average-case'' effect and admits sharp predictions via homogenization-type arguments (see Section~\ref{sec: proof phase transition in emergence}).  Our goal is to understand how LoRA updates
\begin{enumerate}
    \item affect the forward-pass dynamics (and the induced representation map $(x_1,\ldots,x_n)\mapsto (x_1(L),\ldots,x_n(L))$),
    \item and characterize a transition between \emph{representation stability} and \emph{representation degradation or collapse} induced by these updates.
\end{enumerate}
Throughout, we view the backbone $(Q,K,V)$ as already trained and focus on the effect of LoRA at inference-time, rather than modeling the optimization dynamics itself. 

\begin{figure*}
\centering
\begin{subfigure}[t]{.3\textwidth}
    \centering
    \includegraphics[width=\linewidth]{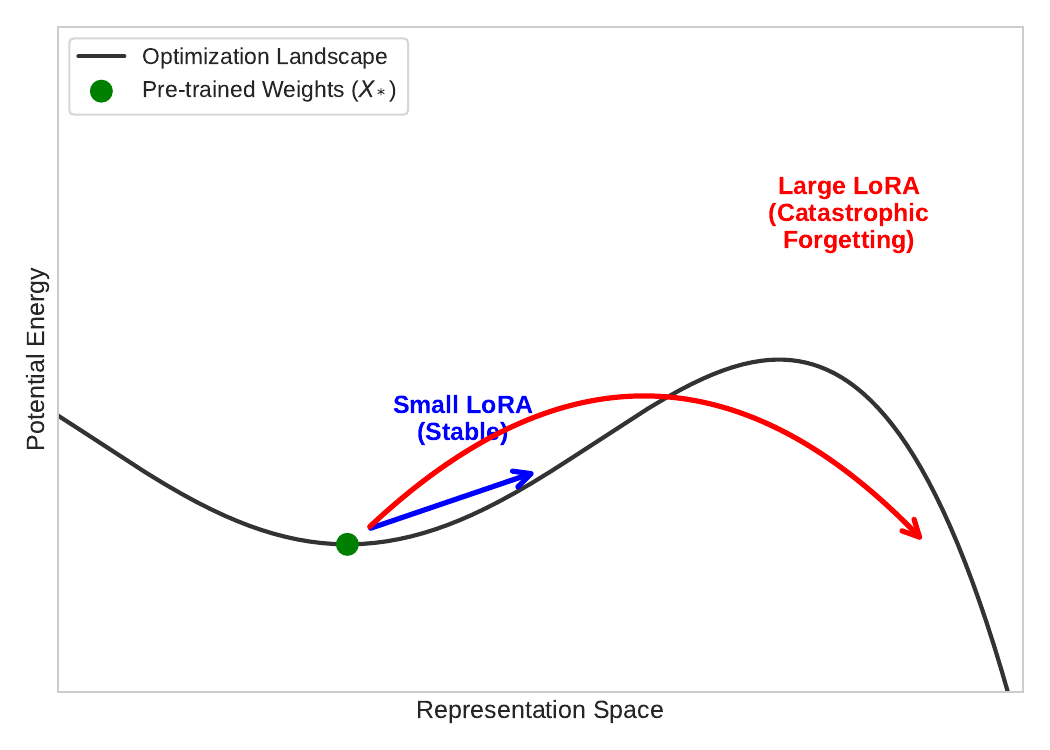}
    \caption{\textbf{Norm-based Bifurcation (Theorem \ref{thm:phase_transition_POSTLAYERNORM}):} Visualizes the threshold phenomenon where the dynamics remain trapped in the pre-trained basin (stable) only if the LoRA perturbation norm is sufficiently small. }
\end{subfigure}
~
\begin{subfigure}[t]{.3\textwidth}
    \centering
    \includegraphics[width=\linewidth]{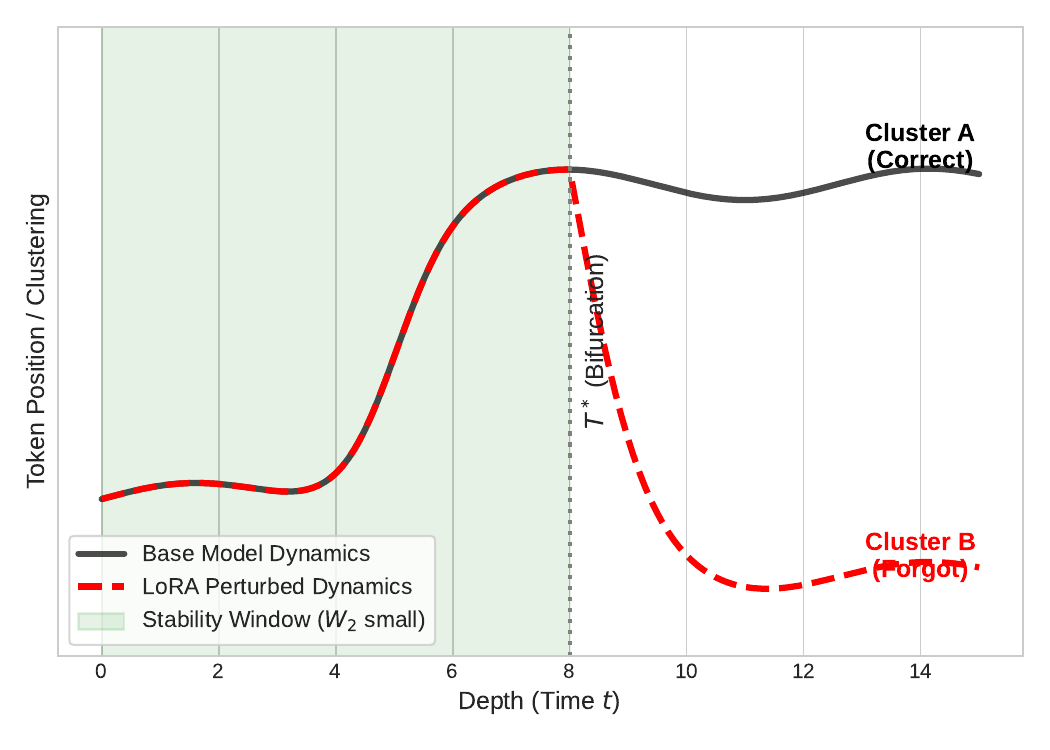}
    \caption{\textbf{Depth-based Bifurcation (Theorem \ref{thm:depth_phase_transition}):} Depicts the ``stability window" where perturbed tokens track the base model until a critical depth $T^*$, after which they diverge toward a new clustering geometry.}
\end{subfigure}
~
\begin{subfigure}[t]{.3\textwidth}
    \centering
    \includegraphics[width=\linewidth]{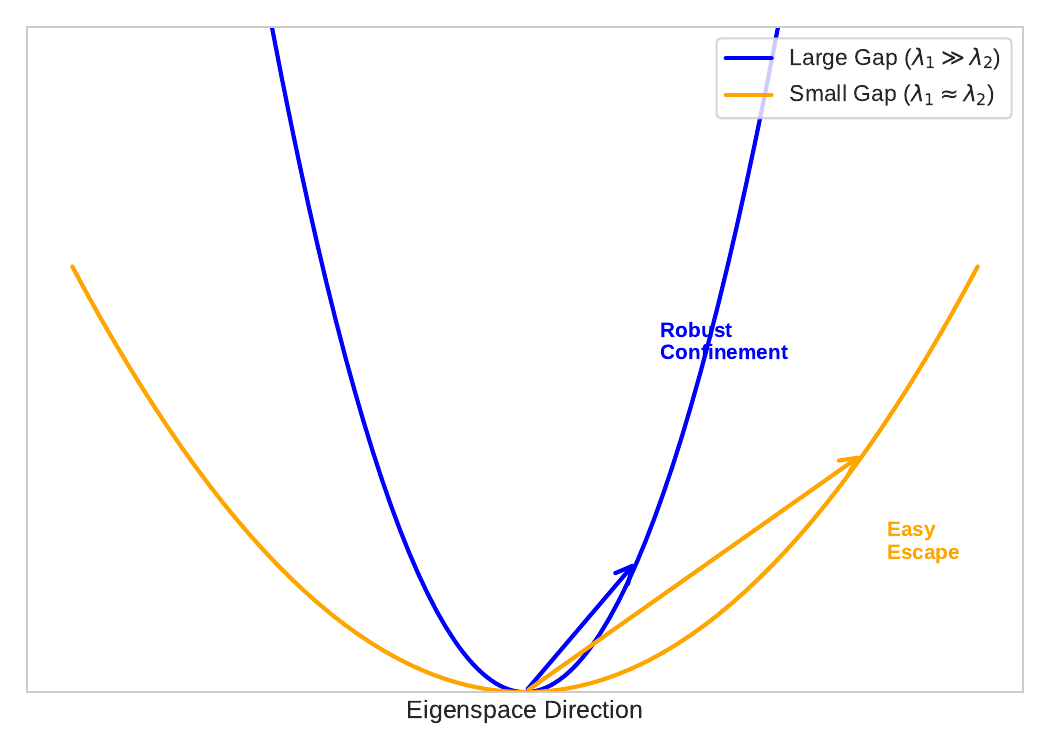}
    \caption{\textbf{Spectral Control (Proposition \ref{prop:Stability_maximizer}):} Illustrates how the eigenvalue gap $\gap$ governs the curvature of the potential well; a larger gap enforces tighter confinement of the representation.}
\end{subfigure}
\caption{Schematic representation of the theoretical results.}
\end{figure*}

\paragraph{Contributions.}
Our contributions are:
\begin{itemize}
\item \textbf{A tractable forgetting model.} We propose a mean-field self-attention toy model with tied weights, where LoRA acts as a low-rank perturbation, and we quantify forgetting via representation-geometry drift (cluster displacement or Wasserstein proxy) that empirically correlates with base-task degradation.
\item \textbf{General perturbation stability.} We prove a quantitative stability bound in Wasserstein distance for the mean-field dynamics under perturbations of $(Q,K,V)$ (Proposition~\ref{proposition:wasserstein_stabilite}).
\item \textbf{Long-time stability and spectral role.} In the Post-LayerNorm setting, we identify a \emph{spectral condition} on $(Q,K,V)$ under which the limiting cluster direction is stable, yielding an explicit bound on the induced drift (Proposition~\ref{prop:Stability_maximizer}).
\item \textbf{Phase transitions and experiments.} We characterize (i) a norm-controlled transition for random adapters via a time-to-deviation estimate (Theorem~\ref{thm:phase_transition_POSTLAYERNORM}) and (ii) a depth-controlled transition (Theorem~\ref{thm:depth_phase_transition}), and we empirically verify our results with synthetic (Figures~\ref{fig: Phasetransition}) and LLM-side evidence (Figures~\ref{fig:forgetting_loss}, \ref{fig:training_forgetting}, \ref{fig:lora_geometric_alignment}).
\end{itemize}
\subsection*{Related Work}\label{sec:related_work}
\paragraph{LoRA fine-tuning}
The increase in the number of LLM parameters makes fine-tuning increasingly costly. One way of reducing the cost is to reduce the number of parameters to be trained. The algorithm LoRA \cite{hu2021lora} developed for NLP applications proposes adapting pre-trained foundation model such as Llama \cite{touvron2023llama}, Mistral \cite{jiang2023mistral}, by freezing all its weights and training attention matrices $\Tilde{Q},\Tilde{K}$ and $\Tilde{V}$ of low rank (see \eqref{eq:lora_def}).
Since the introduction of LoRA fine-tuning \cite{hu2021lora}, many variants have emerged \cite{dettmers2023qlora,wang2023orthogonal}, and we refer to surveys \cite{yang2024low,mao2025survey} for more details.

\paragraph{Catastrophic forgetting}
Catastrophic forgetting refers to performance regressions on previously acquired capabilities after adapting a pretrained model to new data. 
 Recent empirical studies report substantial forgetting under instruction tuning and continual fine-tuning, and analyze it across knowledge, reasoning, and domain generalization benchmarks
\cite{luo2023empirical_cf_llm,kotha2024understanding_cf_llm,li2024revisiting_cf_llm,huang2024mitigating_cf_llm,zhou2024instruction_vector_cf}.
Within parameter-efficient fine-tuning, LoRA often reduces forgetting compared to full fine-tuning, but does not eliminate it~\cite{biderman2024lora_learns_less}. Several recent works propose orthogonalization or projection constraints to reduce interference between the update subspace and dominant pretrained directions~\cite{xiong2025oploraorthogonalprojectionlora,wang2023orthogonal}, which aligns with the spectral-stability mechanisms highlighted by our Proposition~\ref{prop:Stability_maximizer}.
 
\paragraph{Mean-field Transformers.}
Self-attention acts on sets $\{x_1,\dots,x_n\}$ in a permutation-equivariant manner, making measure-valued descriptions natural.
A set of tokens can be represented as an empirical measure $\mu=\frac1n\sum_{i=1}^n\delta_{x_i}$, and attention can be viewed as a map on measures
\cite{debie2019stochastic,vuckovic2020mathematical,zweig2021functional,sander2022sinkformers}.

The clustering effect mathematically proved by the line of work on mean-field Transformers, (see ~\cite{rigollet2025mean} for a survey) is linked to the \emph{signal propagation}, \emph{rank-collapse} phenomena literature; see \citet{dong2021attention, feng2022rank, noci2022signal, joudaki2023impact, zhao2023are, zhai2023stabilizing, noci2024shaped, bao2024self}. Empirically, the signal propagation is linked to the trainability of the neural networks \cite{cowsik2025geometric}. Some of the scaling laws derived by \cite{cowsik2025geometric} have since been used in training large language models  such as OLMO2 7B \& 13B, \cite{olmo20242}). 
\section{Setup: Mean-Field Self-Attention}
\paragraph{\textbf{Notation}}

We write \(\ip{\cdot}{\cdot}\) and \(\norm{\cdot}\) for the Euclidean inner product and norm on \(\R^d\). The unit sphere is denoted by \(\S^{d-1}\), and
\[
    \proj_{x}:=\Id-xx^\top
\]
denotes the orthogonal projection onto \(x^{\perp}\).
For a probability measure \(\mu\), we write \(\supp(\mu)\) for its support. For a configuration \(X=(x_1,\dots,x_n)\), we denote its empirical measure by
\begin{equation}\label{eq:empirical_measure}
    \mu_X:=\frac1n\sum_{i=1}^n\delta_{x_i}.
\end{equation}
Untilded quantities refer to the base model, while tilded quantities refer to the LoRA-perturbed model. Thus
\[
    A:=K^\top Q,
    \qquad
    \Tilde{A}:=\tK^\top\tQ,
\]
so that \(\<Qx,Ky\>=\<Ax,y\>\).
When \(V=V^\top\succeq0\), we denote its eigenvalues by
\[
    \lambda_1>\lambda_2\ge\cdots\ge\lambda_d
\]
and choose an orthonormal eigenbasis \((u_1,\dots,u_d)\). We write
\[
    \gap:=\lambda_1-\lambda_2
\]
for the spectral gap.


\paragraph{\textbf{Self-Attention dynamics.}}\textbf{Non-normalized dynamics.}
Let $(K,Q,V) \in (\mathbb{R}^{d\times d})^{3}$ be a triple of attention matrices. Consider an initialization of tokens $(x_{1},\dots,x_{n}) \in (\mathbb{R}^{d})^{n}$. \emph{Without a normalization layer}, tokens evolve through
\begin{equation}\label{eq:evolution_self_attention}
\begin{split}
     \dot{x}_{i}(t)&=\sum_{j=1}^{n} \frac{\exp\left(\langle Qx_i(t),Kx_j(t)\rangle\right)}{\sum_{k=1}^{n}\exp\left(\langle Qx_i(t),Kx_k(t)\rangle \right)} Vx_{j}(t),\\ x_i(0)&=x_i.
\end{split}
\end{equation}
Equation \eqref{eq:evolution_self_attention} corresponds to the forward pass of tokens through layers of trained Transformers without non-linearities or normalization. Previous work has shown that the dynamics \eqref{eq:evolution_self_attention} can diverge under general assumptions \cite{geshkovski2023emergence}. We can normalize \eqref{eq:evolution_self_attention} by considering the variables $z_{i}(t):=e^{-tV}x_{i}(t)$, which satisfy the dynamics
\begin{equation}\label{d: self-attention dynamics z}
\dot{z}_{i}(t)=\sum_{j=1}^{n}\frac{\exp\left(\langle Qe^{tV}z_{i}(t),Ke^{tV}z_{j}(t)\rangle\right) }{\sum_{k=1}^{n}\exp\left(\langle Qe^{tV}z_{i}(t),Ke^{tV}z_{k}(t)\rangle\right) } V(z_j-z_i).   
\end{equation}
\textbf{Post-Layer normalization dynamics.}
Another way to prevent divergence is to add a normalization layer. In the case of \emph{Post-LayerNorm}, the tokens evolve on the sphere; given an initialization $(x_1,\ldots,x_n)\in (\mathbb{S}^{d-1})^{n}$, they follow
\begin{equation}\label{eq:PostLayerNorm_evolution_self_attention}
\begin{split}
\dot{x}_{i}(t)&=\proj_{x_{i}(t)} \sum_{j=1}^{n}\frac{\exp\left(\langle Ax_i(t),x_j(t)\rangle\right)}{\sum_{k=1}^{n}\exp\left(\langle Ax_i(t),x_k(t)\rangle\right)}Vx_{j}(t),\\ 
x_{i}(0)&=x_i.
\end{split}
\end{equation}
The models given by Eq.~\eqref{eq:PostLayerNorm_evolution_self_attention} and Eq.~\eqref{d: self-attention dynamics z} are known in the literature as \emph{interacting particle systems}, and are reminiscent of the extensive literature on the synchronization of such systems \citep{kuramoto1975self,krause2000discrete,lu2019understanding,tadmor2022swarming}. In the rest of the paper, we use the terms particle and token interchangeably.

Writing $X(t)=(x_1(t),\dots,x_n(t))\in(\mathbb{S}^{d-1})^{n}$, we sometimes use the compact notation
\begin{equation}\label{eq:evolution_matrix}
    \dot x_i(t)=\proj_{x_i(t)}\,B_{\bV,\bA}[\mu_{X(t)}](x_i(t)),\qquad x_i(0)=x_i,
\end{equation}
\begin{equation}\label{EQ:VELOCITY_FIELD_SELF_ATTENTION}
B_{\bV,\bA}[\mu](x)
=
\frac{1}{Z_{\bA}[\mu](x)}
\int
e^{\beta \langle \bA x, y\rangle}
\bV y \, \mu(\de y),
\end{equation}
with normalizing constant
\[
Z_{\bA}[\mu](x)
=
\int
e^{\beta \langle \bA x, y\rangle}
\, \mu(\de y).
\]
In the special case $V=A$,
the projected flow \eqref{eq:PostLayerNorm_evolution_self_attention} can be interpreted as a (Riemannian) gradient flow of the interaction energy
\begin{equation}\label{eq:energy_Wasserstein_particles}
    \mathrm{E}_{A}(x_1,\dots,x_n)
    :=\sum_{i=1}^{n}\sum_{j=1}^{n}\exp\left(\langle Ax_i,x_j\rangle\right),
\end{equation}
with respect to a suitable metric; see \citet{geshkovski2025mathematical,burger2025analysis} for precise statements. 

By the Stable manifold theorem, (ascending) gradient flows generically converge towards a local maximizer. This observation is used by \citet{geshkovski2025mathematical,criscitiello2024synchronization,polyanskiy2025synchronization} to prove convergence towards the unique local maximizer $x_1=\ldots=x_n$ when $V=I_d$. It is conjectured that the local maximizer of $\mathrm{E}_{A}$ is supported in the span of the eigenvectors associated with the largest eigenvalues, as suggested by \citet{burger2025analysis,abellaconsensus}.

The dynamics given by Eq.~\eqref{d: self-attention dynamics z} and Eq.~\eqref{eq:PostLayerNorm_evolution_self_attention} exhibit asymptotic clustering phenomena as proved in \citet{geshkovski2023emergence,geshkovski2025mathematical} under specific conditions. Additionally, we conducted experiments to identify a clustering phenomenon in Llama 2 (see Section~\ref{sec: Llama2cluster}). 

\paragraph{\textbf{Continuity equation (mean-field viewpoint).}}
Because self-attention is permutation-equivariant, it is natural to describe the evolution at the level of measures.
Let $(\mu_t)_{t\geq 0}$ be defined as in Eq.~\eqref{eq:empirical_measure}. Define the mean-field attention vector field
\begin{equation}\label{eq:vectorfield}
    \mathcal{X}[\mu](x)
    :=\frac{\displaystyle\int_{\mathbb{R}^d}\exp\left(\langle Qx,Ky\rangle\right)\,Vy\,\mu(\mathrm dy)}
    {\displaystyle\int_{\mathbb{R}^d}\exp\left(\langle Qx,Ky\rangle\right)\,\mu(\mathrm dy)}.
\end{equation}
Formally, $\mu_t$ solves the continuity equation
\begin{equation}\label{eq:continuity_equation}
\partial_t\mu_t+\nabla\cdot\left(\mathcal{X}[\mu_t]\mu_t\right)=0,
\qquad \mu_{t=0}=\mu_0.
\end{equation}
We refer to Section~\ref{sec: General results} for well-posedness and additional details.
The particle description is the \emph{Lagrangian} viewpoint, while \eqref{eq:continuity_equation} is the corresponding \emph{Eulerian} (measure-valued) formulation. From now on, we leverage both viewpoints as they offer complementary tools.
\section{Stability Under Perturbations}
\label{sec: stability result parameters}

This section examines how token evolution, starting from identical initial conditions, is influenced by variations in the attention matrix parameters.

\subsection{Finite-Time Wasserstein Stability}
We first present a general stability result in the Wasserstein metric.

\begin{proposition}\label{proposition:wasserstein_stabilite}
Let $R>0$, and let $\mu_0 \in\mathcal{P}_c(\mathbb{R}^d)$ be a probability measure with support in $B(0, R)$. 
Consider the vector fields $\chi$ and $\widetilde{\chi}$ defined in Eq.~\eqref{eq:vectorfield} with attention weights $(Q,K,V)$ and $(\widetilde{Q},\widetilde{K},\widetilde{V})$, respectively.
Let $(\mu_{t})_{t\geq 0}$ and $(\nu_{t})_{t\geq 0}$ be the solutions to Eq.~\eqref{eq:continuity_equation} associated with $\chi$ and $\widetilde{\chi}$. Then, for all $t\geq 0$,
\begin{equation}\label{eq:bound_wasserstein}
    W_{2}(\mu_t,\nu_t)^{2}\leq L_t(\Delta A,\Delta V) \cdot \exp\left(2C_t e^{3D_t}\right),
\end{equation} 
where the constants are defined as:
\begin{align*}
 L_t\nonumber &= 6\left(\|\Delta V\|^{2}_{\mathrm{op}} \vee \|\Delta A\|_{\mathrm{op}}\|V\|_{\mathrm{op}} \right),\\ 
 D_t &= 2\|A\|_{\mathrm{op}} \|V\|_{\mathrm{op}} M_0^{2} e^{2\|V\|_{\mathrm{op}}t}t,\\ 
 R_t &= M_0 e^{\max\{ \| V\|_{\mathrm{op}},\|\widetilde{V}\|_{\mathrm{op}}\}t},\\
C_t &= 4\|V\|_{\mathrm{op}}^{2}\|A\|_{\mathrm{op}}^{2}R_t^{2}\left(1+e^{2R_t\|A\|_{\mathrm{op}}}\right)^{2} t.
\end{align*}
\end{proposition} 

Note that if $\|\Delta V\|_{\mathrm{op}}\leq \varepsilon$ and $\|\Delta A\|_{\mathrm{op}}\leq \varepsilon$, then for a constant $c>0$ depending on $(A,V)$, Eq.~\eqref{eq:bound_wasserstein} yields
\[
W_{2}(\mu_t,\nu_t)\leq c\, \varepsilon \, e^{c\, e^{ct}}.
\]
This result highlights the robustness of the model against small perturbations in parameters over short time scales. A similar estimate holds for the solution of Eq.~\eqref{eq:PostLayerNorm_evolution_self_attention}. However, the bound \eqref{eq:bound_wasserstein} grows doubly exponentially with time $t$.

\subsection{Long-Time Stability via Spectral Structure}
To obtain tighter stability guarantees that persist over long times, we must account for the geometry of the low-rank perturbations. We focus here on the \emph{Post-LayerNorm} setting; since tokens are normalized, the Wasserstein distance is uniformly bounded. We seek to characterize perturbations that preserve the representation geometry in the infinite-time limit.

Let $(V,Q,K)$ be a triple of matrices such that $A=K^{\top}Q=V\succeq 0$. The analysis in \citet{burger2025analysis} suggests that the local maxima of the energy are supported in the span of the eigenvectors associated with the largest eigenvalues.
Our main insight is a precise characterization of the stability of these equilibria under LoRA dynamics.

To facilitate our analysis, we adopt the following assumption regarding the spectral\footnote{We examined the eigenvalue distributions of the attention matrices in BERT and Llama 2 (see Appendix~\ref{sec: spectrum}).} properties of $V$.

\begin{assumption}\label{ass:specturm_V}
Let $V\in \mathbb{R}^{d\times d}$ be a positive semi-definite matrix ($V\succeq 0$) with eigenvalues $(\lambda_i)_{i=1}^d$ satisfying $\lambda_1>\lambda_2>\ldots >\lambda_d.$ Furthermore, assume the initial tokens $(x_1,\ldots,x_n)$ satisfy $\langle x_i,u_1\rangle\geq \gamma>0$, where $u_1$ is the eigenvector associated with $\lambda_1$.
\end{assumption}

This assumption is standard in the consensus literature; see e.g., \citet{abellaconsensus}.
In the following proposition, we analyze the LoRA setting by modeling the update as $\Delta V=\Delta A$. Choose $\tilde{u}_{1}$ be the leading normalized largest eigenvector of $\Delta V$ such that $\<u_1,\tilde{u}_1\>\geq 0.$

\begin{proposition}
\label{prop:Stability_maximizer}
Let $V\in\mathbb{R}^{d\times d}$ and $(x_1,\ldots,x_n)$ satisfy Assumption~\ref{ass:specturm_V}. Let $(X(t))_{t\ge 0}\subset(\mathbb S^{d-1})^n$ denote the Post-LayerNorm dynamics \eqref{eq:PostLayerNorm_evolution_self_attention} driven by $V$, and let $(\widetilde X(t))_{t\ge 0}\subset(\mathbb S^{d-1})^n$ be the corresponding LoRA dynamics. Suppose that $\Delta V\in \mathrm{Sym}(d).$
Define $a \coloneqq u_1^\top \Delta V\,u_1, \,  b:=P_\perp \Delta V u_1$
and $E:=P_\perp \Delta V P_\perp$. If  
\begin{equation}\label{eq:stability_condition_rewrite}
\gap + a \;>\; 2\|b\|+\|E\|_{\op}.
\end{equation}
Then, 
\[
X(t)\underset{t\to \infty}{\to} (u_1,\ldots,u_1)
\quad \text{and} \quad
\widetilde X(t)\underset{t\to \infty}{\to} (\widetilde{u}_1,\ldots,\widetilde{u}_1).
\]
Furthermore, we have 
\begin{equation}\label{eq:perturbation_inequality}
    \|u_1-\widetilde u_1\|
    \lesssim
    \frac{2\|b\|+\|E\|_{\op}}{\gap+a}.
\end{equation}
\end{proposition}

This result suggests that the eigengap $\gap$ plays a key role in stability. The LoRA update space can be decomposed into stable and unstable directions: the component along $\mathrm{span}(u_1)$ (determined by $a$) can reinforce or degrade stability, while orthogonal components (captured by $E$) act as perturbations bounded by the gap. We empirically verify this gap in pre-trained models; see Figure~\ref{fig:PhaseTransition V_pert}.

\begin{remark}
    We can refine the above result by decomposing the matrix $\Delta V$ into block matrices. We introduce $E:u_{1}^{\perp}\rightarrow u_{1}^{\perp}$ is the restriction of $\Delta V$ to $u_{1}^{\perp}$, and $\Lambda$ is the restriction of $V$ to $u_{1}^{\perp}$. Assume that $E$ and $\Lambda$ commutes, then for large $t$ we have 
\begin{equation}\label{eq:perturbation_inequality_2}
\|X(t)-\widetilde{X}(t)\|^{2}\simeq \sum_{j\colon e_{j}\neq 0}\left(\frac{\alpha_{j}}{\lambda_1-\lambda_j-e_j(r)}\right)^{2},
\end{equation}
where $\alpha_j:= \langle \Delta V u_1, u_j \rangle$.
where $(e_j(r))_{j=1}^{d}$ are the eigenvalues of $E$. This result highlights that higher-rank LoRA updates can induce larger forgetting when they align with eigenspaces corresponding to smaller spectral gaps. The proof is analogous to the one presented, and relies on the introduced decomposition.
\end{remark}

Our result motivates constraining LoRA updates to avoid directions already utilized by the pre-trained weights—for instance, by projecting the LoRA perturbation onto the orthogonal complement of the learned subspace. This idea has been explored in recent orthogonalized variants of LoRA \cite{xiong2025oploraorthogonalprojectionlora,wang2023orthogonal} and is supported by the stability mechanism in Eq.~\eqref{eq:stability_condition_rewrite}, \eqref{eq:perturbation_inequality} and \eqref{eq:perturbation_inequality_2}.
\section{Phase Transitions in Representation Drift}
\label{sec:phase_transition}

In this section, we characterize the bifurcation between the pre-trained behavior and the perturbed dynamics. We investigate the impact of the LoRA norm in Section~\ref{sub:phase_transition_LoRA_norm} and of the depth in Section~\ref{sec:phase_transition_depth}.

\subsection{Phase Transition With Respect to LoRA Norm}\label{sub:phase_transition_LoRA_norm}
\subsubsection{Theoretical result}

In this subsection, we consider LoRA weights being random adapters (i.e. i.i.d drawn from $\rho\in \mathcal{P}(\reals^{d\times d}\times \reals^{d\times d})$). For the sake of clarity, we assume $V=A$, and we only consider LoRA perturbations for the Values matrix $\Delta V$ i.e. 
\[
\widetilde A^\ell := A ,\quad  \widetilde V^\ell := V+\Delta V^\ell.
\]
Recall that the original dynamics associated to $V$ is converging (under Assumption \ref{ass:specturm_V}) to a cluster $X_*=(u_1,\ldots,u_1)$ where $u_1$ is defined in \ref{ass:specturm_V}.
We now inspect how the random LoRA updates are changing the dynamic.

Let \(L\) be the depth and let \(\ell\in\{0,\dots,L-1\}\) denote the layer index.
Following the formalism of \citet{koubbi2026homogenized}, considering the Post-LayerNorm setting, the (normalized) token iterate for a transformers of depth $L$ is
\begin{align}
    \label{eq:lora_update_main}
x_{i}^{\ell+1}
\;&=\;
\mathrm{N}\,\!\, \Big(x_i^{\ell} +\frac{1}{L}\,B_{\widetilde V^\ell,\widetilde A^\ell}[\mu_{X^{\ell}}](x_i^{\ell})\Big),
\,\\
x_{i}^{0}&=u_1+\delta_i\in \S^{d-1},\nonumber
\end{align}
where $B$ is defined in \eqref{EQ:VELOCITY_FIELD_SELF_ATTENTION} and $\mathrm{N}$ denotes the normalization on the sphere i.e. $\mathrm{N}(x)=\frac{x}{\|x\|}.$

According to computations in Section 2.2 in  \citet{koubbi2026homogenized}, since the increments are centered and independent, their typical scale after $L$ iterates is of order $\frac{\sqrt{L\Var(\Delta V)}}{L}$, so in order to the LoRA weights affect the dynamics, we need to have $\|\Delta V^{\ell}\|\simeq \sqrt{L}$. Otherwise, the transformers iterates are similar to the original dynamics.
\begin{assumption}\label{ass:diffusive_main}
Let $(\Delta V^{\ell})_{\ell\in \mathbb{N}}$ be i.i.d random variables drawn from a common distribution $\rho \in \mathcal{P}(\reals^{d\times d})$ such that 
\begin{equation}
    (\Delta V^\ell)_{\ell=1}^{L}\overset{i.i.d}{\sim} \eta_{L}\,\sum_{a=1}^{r}\,s_a u_a^{\ell} (v_a^{\ell})^{\top},
\end{equation}
where $(u_a^{\ell},v_a^{\ell})\overset{i.i.d}{\sim}\mathcal{N}(0,\frac{I_d}{d})$, $s_a>0$ and $\eta_{L}>0$.
\end{assumption}

The next theorem proves that under this noise scaling, the LoRA dynamics is confined in a small neighborhood of $X^*$ up to a critical value of the noise. We state here an informal version of the theorem proved in appendix. 
\begin{theorem}\label{thm:phase_transition_POSTLAYERNORM}
 Under Assumptions~\ref{ass:specturm_V}, and ~\ref{ass:diffusive_main}, and $\|\delta \|\ll 1$
 \begin{itemize}
     \item If $\eta_{L}\ll \sqrt{L}$, then 
     \begin{equation}\label{eq:dev_bound_main_MAIN_1}
         \E[\|x_{i}^{\ell}-u_1\|^{2}]\lesssim O(1/L).
     \end{equation}
     \item If $\eta_{L}=\sqrt{L}$, then for $\ell$ large enough, and $\kappa$ small enough
     \begin{equation}\label{eq:dev_bound_main_2_MAIN_1}
    \sum_{i=1}^{n}\E\|x_{i}^{\ell}-u_1\|^{2} \simeq n\sum_{i=2}^{d}\frac{\kappa}{2(\lambda_1-\lambda_i)+d\kappa}+O(1/L),
    \end{equation} where
    $\kappa\coloneqq \frac{1}{d^{2}}\sum_{a=1}^rs_a^2$ is the magnitude of the noise.
 \end{itemize}
\end{theorem}
Equations~\eqref{eq:dev_bound_main_MAIN_1} and~\eqref{eq:dev_bound_main_2_MAIN_1} reveal a scaling transition governed by the size of the random LoRA perturbation:
\begin{itemize}
    \item If $\eta_L\ll \sqrt L$, the cumulative random effect of the LoRA updates vanishes in the large-depth limit, and the dynamics remain trapped near $X_\star$.
    
    \item If $\eta_L\simeq \sqrt L$, the accumulated perturbation has a non-trivial diffusive effect. In this regime, the dynamics are confined, in mean square, to a neighborhood of $X_\star$ whose size is controlled by
    \[
        \kappa \sum_{i=1}^d
        \frac{1}{2(\lambda_1-\lambda_i)+d\kappa}.
    \]
    This expression highlights the stabilizing role of the spectral gaps: directions with smaller gaps contribute more strongly to the deviation.
    
    \item If $\eta_L\gg \sqrt L$, the perturbative homogenization argument no longer applies. We expect the random LoRA fluctuations to dominate the pretrained drift, potentially leading to synchronization around a strongly random moving direction, but this regime is outside the scope of the theorem.
\end{itemize}

Thus, the term \emph{phase transition} should be understood as a transition between a perturbative regime, where random LoRA updates average out across depth, and a diffusive regime, where their accumulated effect survives in the continuum-depth limit. The proof further shows that, in the critical scaling $\eta_L\simeq\sqrt L$, the tokens synchronize around a common random direction; see \cref{fig:THM4.2} for an illustration. This synchronization mechanism is not specific to LoRA and is related in spirit to recent stochastic-synchronization phenomena studied in~\cite{agazzi2026stochasticscalinglimitssynchronization,engel2026random}.

The apparent rank-independence in the random-adapter experiment is consistent with the rotational invariance of the Gaussian model in Assumption~\ref{ass:diffusive_main}, provided the perturbations are compared at fixed Frobenius norm, or equivalently fixed noise magnitude $\kappa$. In that case, the rank mainly changes how the energy of the perturbation is distributed across random directions, but not its average orientation relative to the pretrained spectral structure. By contrast, learned LoRA updates are not isotropic (see \cref{fig:lora_geometric_alignment}). A natural way to model rank-dependent forgetting would be to replace the isotropic Gaussian assumption by an anisotropic covariance, allowing the LoRA updates to preferentially align with unstable or weakly stable eigendirections of the pretrained dynamics.

\subsubsection{Empirical Evidence}
We first test our results regarding the evolution of the perturbed loss with \textbf{random adapters}. In Figure~\ref{fig:forgetting_loss}, we observe a phase transition for the perturbed loss with respect to the perturbation norm.  This is consistent with Theorem~\ref{thm:phase_transition_POSTLAYERNORM}. Notice that forgetting is independent of the rank in this observation, likely due to the rotational invariance of the Gaussian distribution.

\begin{figure}[h!]
    \centering
    \includegraphics[width=\columnwidth]{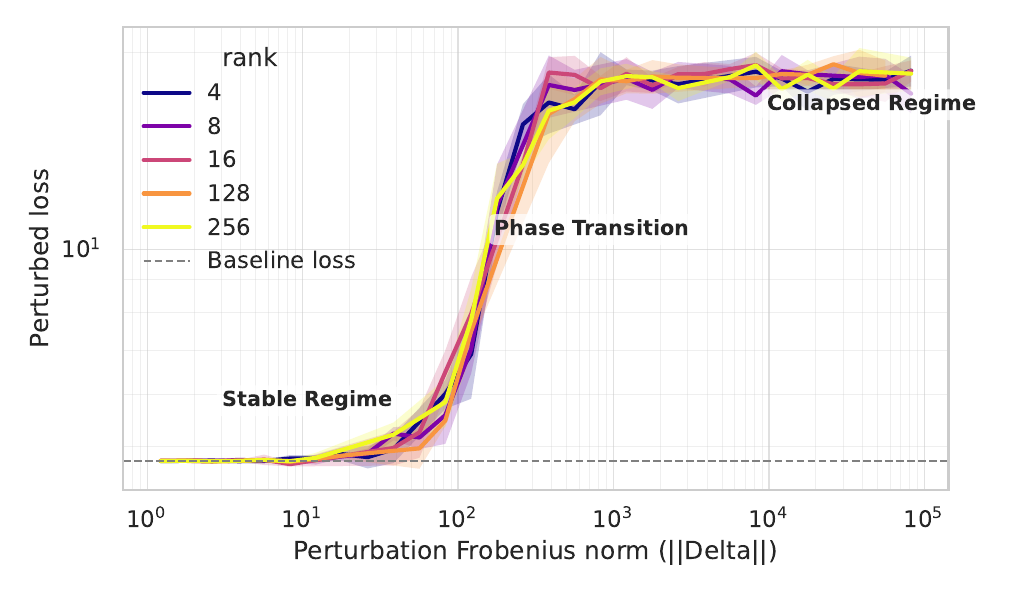}
    \caption{Evolution of the loss when adding Gaussian matrices of rank $r\in \{4,8,16,128,256\}$ with respect to the Frobenius norm of the perturbation. The perturbed loss is measured by next-token prediction on a base dataset distinct from the fine-tuning dataset. The base model is Qwen 3 0.6B \cite{yang2025qwen3technicalreport}.} 
    \label{fig:forgetting_loss}
\end{figure}

\paragraph{Extrapolation to training}
Our theorems rely on perturbative methods and might initially seem irrelevant for understanding forgetting during training. However, empirically (see Figure~\ref{fig:training_forgetting}), we observe that the norm remains a crucial factor. Note that the rank does play a role here, a phenomenon our toy model does not fully capture, except via the spectral alignment intuition in Proposition~\ref{prop:Stability_maximizer}.

\begin{figure}[h!]
    \centering
    \includegraphics[width=\linewidth]{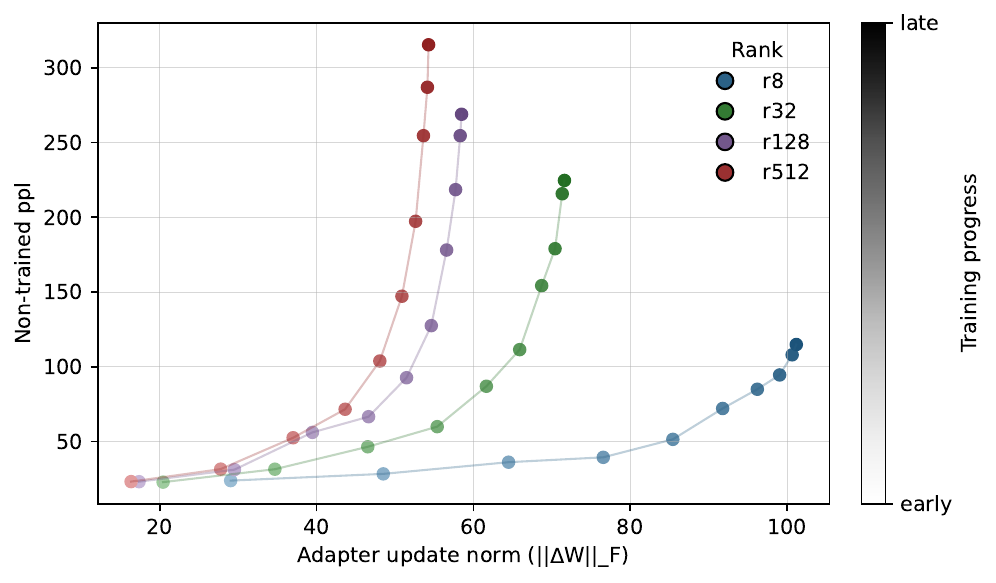}
    \caption{Evolution of evaluation perplexity during LoRA training for different ranks $r\in \{4,8,16,128,256\}$. The perturbed loss is next-token prediction on a base dataset distinct from the fine-tuning dataset.}
    \label{fig:training_forgetting}
\end{figure}

\paragraph{Geometric Alignment with Stable Directions}
\label{sec:geometric_alignment_main}

To understand \emph{why} forgetting occurs, we analyzed the geometry of the learned LoRA updates. To connect Proposition~\ref{prop:Stability_maximizer} with trained Transformers, we measure whether learned LoRA updates align with the dominant spectral directions of the pretrained value matrices. Since the value projections $V^\ell$ of a real Transformer are not necessarily symmetric, we define $u_1^\ell$ as the top right singular vector of $V^\ell$, equivalently the leading eigenvector of $(V^\ell)^\top V^\ell$. This vector plays the role of the stable direction $u_1$ in the symmetric toy model.


\begin{figure}[ht]
    \centering
    \includegraphics[width=\columnwidth]{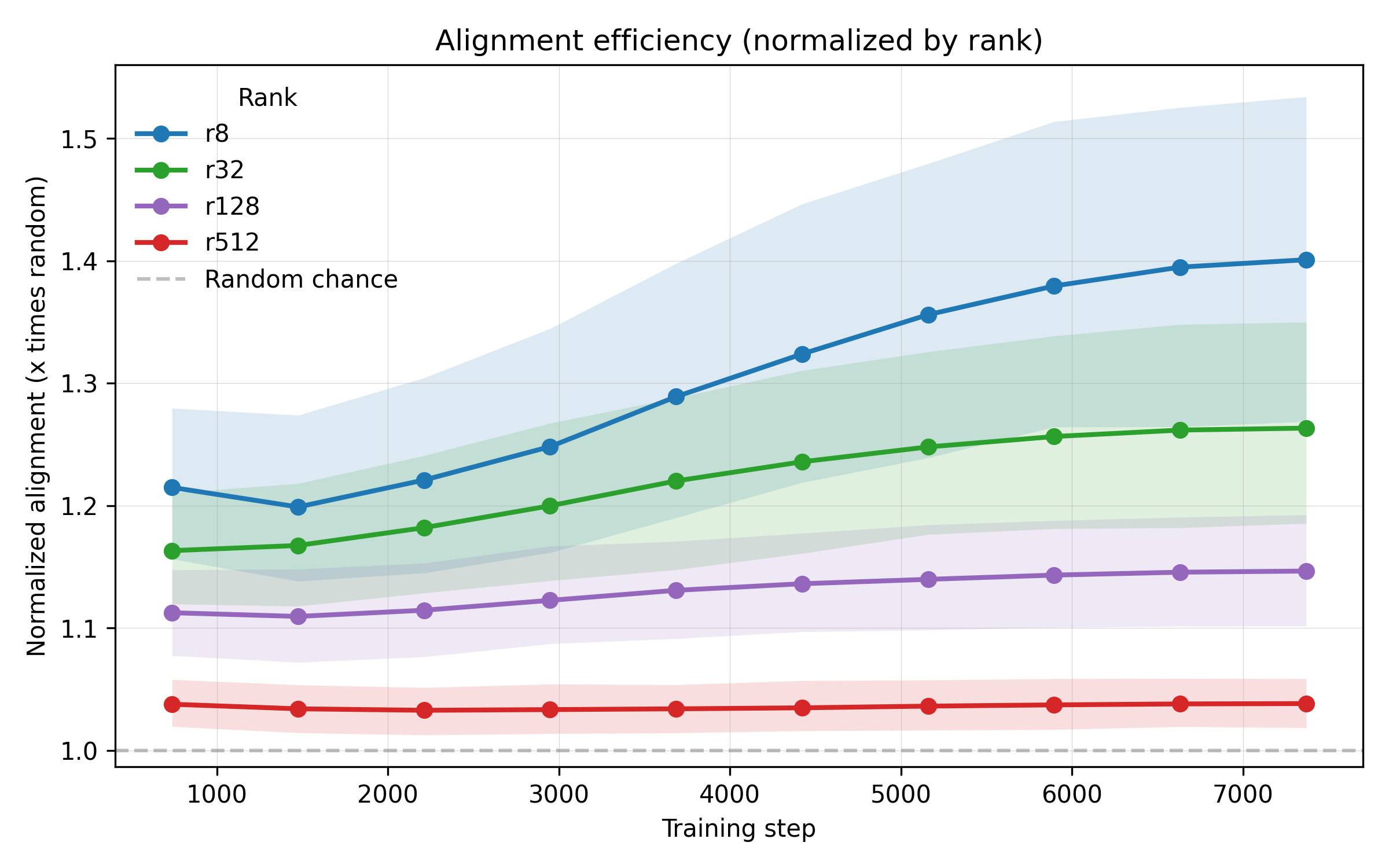} 
    \caption{\textbf{Alignment with the Stable Direction ($u_1$).} Evolution of normalized subspace alignment during training. Values $>1.0$ indicate that the LoRA update targets the base model's principal feature ($u_1$) more than a random vector would. Low-rank adapters ($r=8, 32$) show a tendency to interfere with this stable direction.}
    \label{fig:lora_geometric_alignment}
\end{figure}

For each layer $\ell$, we consider the learned LoRA update
\[
    \Delta V^\ell = (V_A^\ell)^\top V_B^\ell.
\]
We then measure the alignment between $u_1^\ell$ and the input subspace of the update, namely the row space of $\Delta V^\ell$:
\[
    \mathrm{Align}_\ell
    :=
    \frac{d}{r}
    \left\|
        \Pi_{\mathrm{Row}(\Delta V^\ell)} u_1^\ell
    \right\|^2.
\]
The normalization is chosen so that $\mathrm{Align}_\ell=1$ in expectation for a uniformly random $r$-dimensional subspace of $\mathbb R^d$. Values larger than $1$ therefore indicate that the LoRA update is more aligned with the dominant pretrained direction than a random update of the same rank.

As shown in Figure~\ref{fig:lora_geometric_alignment}, alignment scores are consistently higher than the random baseline ($y=1.0$). Notably, lower-rank adapters ($r=8, 32$) exhibit a strong ``locking on" effect, where alignment with $u_1$ increases 
during training. This confirms that LoRA updates are not isotropic; they selectively interfere with the dominant features of the pre-trained model. 
This interference drives the representation away from its initial cluster, precipitating the observed forgetting. (See Appendix~\ref{app:geometric_alignment_details} for details). 

\subsection{Phase Transition With Respect to Depth Scaling}
\label{sec:phase_transition_depth}

We now investigate bifurcations with respect to the depth of the neural
network. We work under the following clustering hypothesis.

\begin{assumption}[Clustering hypothesis]\label{assump:vertices_clustering}
Let $(Q,K,V)$ be a triple of attention matrices. For the initial configuration
under consideration, assume that there exists a finite set
\[
    \mathcal C=\{c_1,\dots,c_k\}\subset\mathbb R^d
\]
such that, for every $i\in\{1,\dots,n\}$, there exists
$j\in\{1,\dots,k\}$ satisfying $z_i(t)\xrightarrow[t\to+\infty]{}c_j $.
\end{assumption}

Under this assumption, the original dynamics clusters for sufficiently large
times. In particular, when $V=I_d$ and $A:=Q^\top K\succeq 0$, the triple
$(Q,K,V)$ satisfies Assumption~\ref{assump:vertices_clustering}; see
Theorem~3.1 of \citet{geshkovski2023emergence}.

\begin{definition}
Let $\mathcal C=\{c_1,\dots,c_k\}$ be the limiting cluster set associated with
the initial configuration. For $\delta>0$, define
\[
    S_\delta(t)
    :=
    \left\{
        i\in\{1,\dots,n\}:
        \mathrm d(z_i(t),\mathcal C)\leq\delta
    \right\}.
\]
For the modified dynamics, we similarly define $\widetilde S_\delta(t).$
Since the pre-trained dynamics clusters, for every $\delta>0$ we set
\[
    T_\delta
    :=
    \inf\left\{
        T\geq0:
        \forall t\geq T,\quad S_\delta(t)=\{1,\dots,n\}
    \right\}.
\]
\end{definition}

For each limiting cluster $c_a\in\mathcal C$, we denote by
\[
    I_a
    :=
    \left\{
        i\in\{1,\dots,n\}:
        z_i(t)\xrightarrow[t\to+\infty]{}c_a
    \right\}
\]
the set of original tokens converging to $c_a$. We say that $c_a$ is occupied
if $I_a\neq\varnothing$.

The following assumption identifies a direction along which the modified value
matrix creates an instability.

\begin{assumption}\label{assump:spectral_escape}
Let $\widetilde V$ be diagonalizable with real spectrum. Assume that the leading eigenvalue is simple and dominant:
\[
    \lambda_1>|\lambda_2|\geq\cdots\geq|\lambda_d|,
    \qquad
    \mathrm{gap}:=\lambda_1-|\lambda_2|>0.
\]
Set $\ell:=\varphi_1^*.$ We assume the following conditions.
\begin{enumerate}
    \item The leading attention coefficient is positive:
    \[
        c_{11}:=\langle Q\varphi_1,K\varphi_1\rangle>0.
    \]

    \item There exists a unique occupied cluster maximizing the projection
    along $\ell$. Namely, there exists an occupied cluster $c_+$ such that
    \[
        \ell(c_+)>\ell(c)
        \qquad
        \text{for every occupied cluster }c\neq c_+.
    \]
    We define the maximality gap  $$D_+
        :=
        \ell(c_+)
        -
        \underset{\substack{c\neq c_+\\ c\ \mathrm{occupied}}}{\max}\ell(c)
        >0.$$

    \item There exists an occupied non-maximal cluster $c_-$ such that $\ell(c_-)>0,$ $D_-:=\ell(c_+)-\ell(c_-)>0.$ We set $m_*:=\ell(c_-)>0.$

    \item  There exist indices
    $i_-\in I_-$ and $j_+\in I_+$ such that $\|\widetilde z_{i_-}(T_\delta)-c_-\|\leq 2\delta,$  and $\|\widetilde z_{j_+}(T_\delta)-c_+\|\leq 2\delta.$

    \item There exists $M>0$ such that, as long as
    $\widetilde S_{2\delta}(t)=\{1,\dots,n\}$, one has
    \[
        |\varphi_p^*(\widetilde z_i(t))|\leq M
        \qquad
        \forall p\in\{1,\dots,d\}
        \forall i\in\{1,\dots,n\}.
    \]
\end{enumerate}
\end{assumption}
Define $C_{\mathrm{sub}}
    :=
    M^2
    \sum_{(p,q)\neq(1,1)}|\langle Q\varphi_p,K\varphi_q\rangle|,$
    and set
\[a_*:=\frac{c_{11}m_*D_+}{4},\quad p_*:=
    \frac{D_-}{2(D_-+4M)}.
\]
We define the first exit time from the original clustered regime by
\[
    T^*(\delta)
    :=
    \inf\left\{
        t\geq T_\delta:
        \widetilde S_{2\delta}(t)\neq\{1,\dots,n\}
    \right\}.
\]
By convention, $T^*(\delta)=+\infty$ if the set above is empty.

\begin{theorem}
\label{thm:depth_phase_transition}
Let $(Q,K,V)$ satisfy Assumption~\ref{assump:vertices_clustering}.
\begin{enumerate}
    \item \textbf{Stability for small depths.}
    For every $\delta>0$, there exists $\eta(\delta)>0$ such that, if $\|\widetilde V-V\|_{\op}\leq \eta(\delta),$
    then
    \[
        \forall t\in[T_\delta,10T_\delta],
        \qquad
        \widetilde S_{2\delta}(t)=\{1,\dots,n\}.
    \]

    \item \textbf{Bifurcation at larger depths.}
    Assume in addition that Assumption~\ref{assump:spectral_escape} holds.
    Then
    \[
        T^*(\delta)
        \leq
        \max\{T_\delta,T_{\mathrm{dom}}(\delta)\}
        +
        \frac{16\|\ell\|\delta}{\lambda_1D_-},
    \]
    with
\begin{align*}
    T_{\mathrm{dom}}(\delta)
    :=
    \max\Bigg\{
        \frac{1}{\mathrm{gap}}
        &\log\left(\frac{4C_{\mathrm{sub}}}{a_*}\right),\\
        &
        \quad \frac{1}{2\lambda_1}
        \log\left(
            \frac{2}{a_*}
            \log\left(\frac{n}{p_*}\right)
        \right)
    \Bigg\}.
\end{align*}

\end{enumerate}
\end{theorem}
An illustration of the result can be found in \cref{fig: Phasetransition}.
The proof can be found in Appendix \ref{sec:proof_thm_5.6}. It is clear from our findings that assigning a single time frame $T_{\delta}$ for cluster formation across different token initializations is not feasible. 
\begin{figure}[!htb]
    \centering
    \begin{minipage}{0.22\textwidth}
        \centering
        \includegraphics[width=0.9\linewidth, height=0.13\textheight]{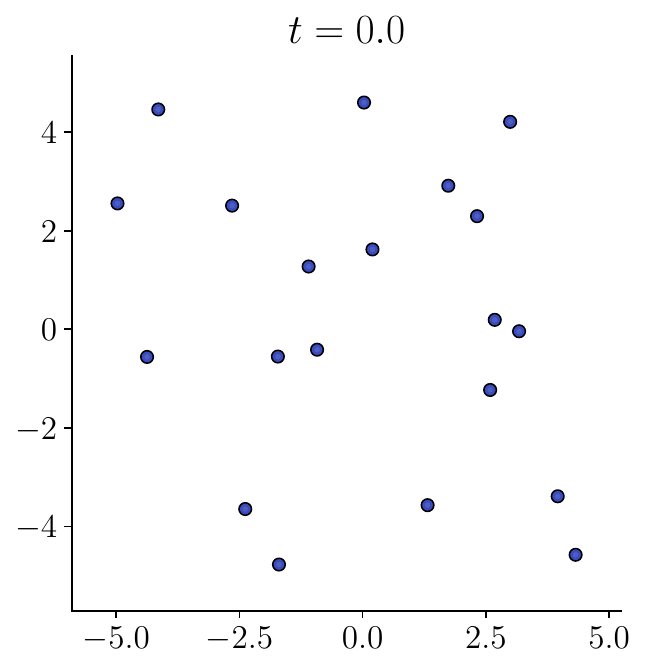}
        \includegraphics[width=0.9\linewidth, height=0.13\textheight]{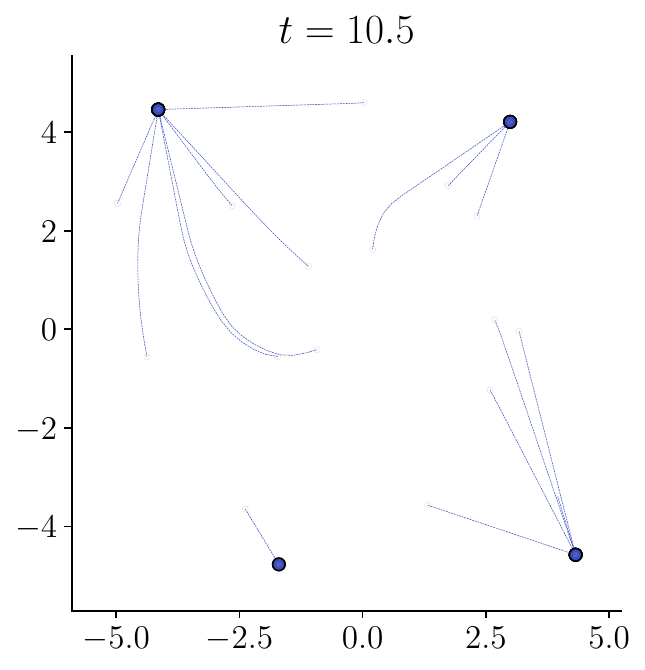}
        \includegraphics[width=0.9\linewidth, height=0.13\textheight]{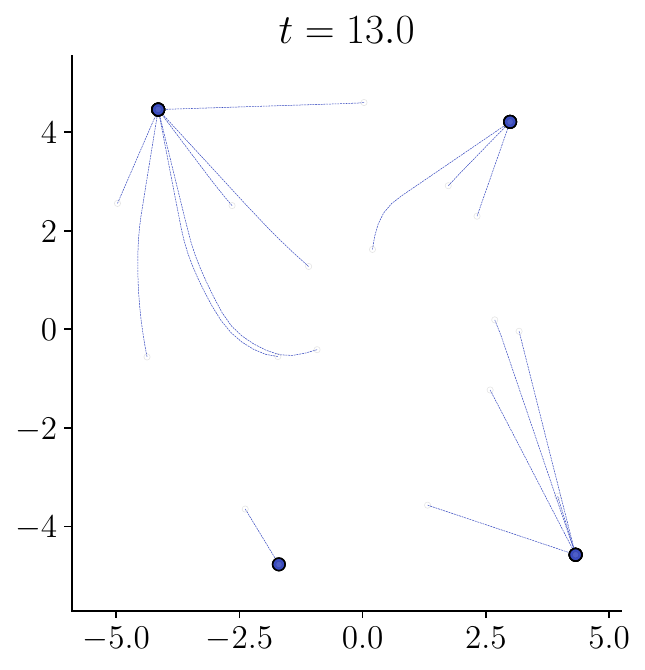}
    \end{minipage}%
    \begin{minipage}{0.22\textwidth}
        \centering
        \includegraphics[width=0.9\linewidth, height=0.13\textheight]{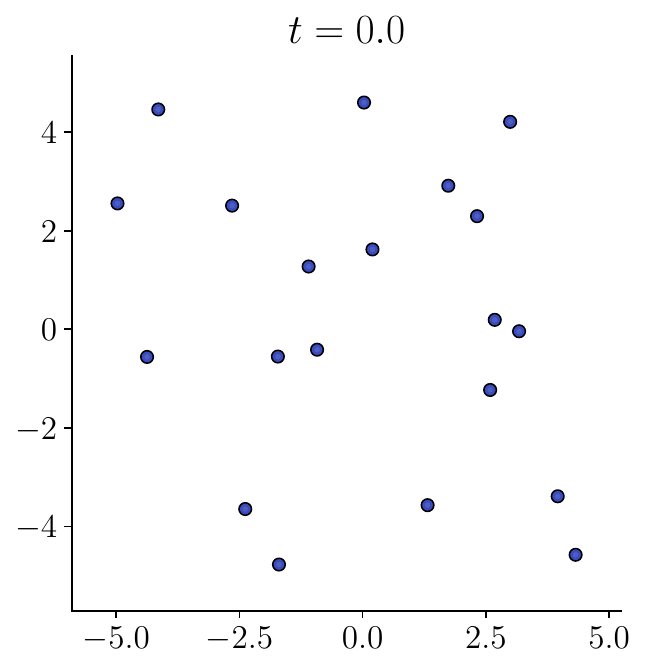}
        \includegraphics[width=0.9\linewidth, height=0.13\textheight]{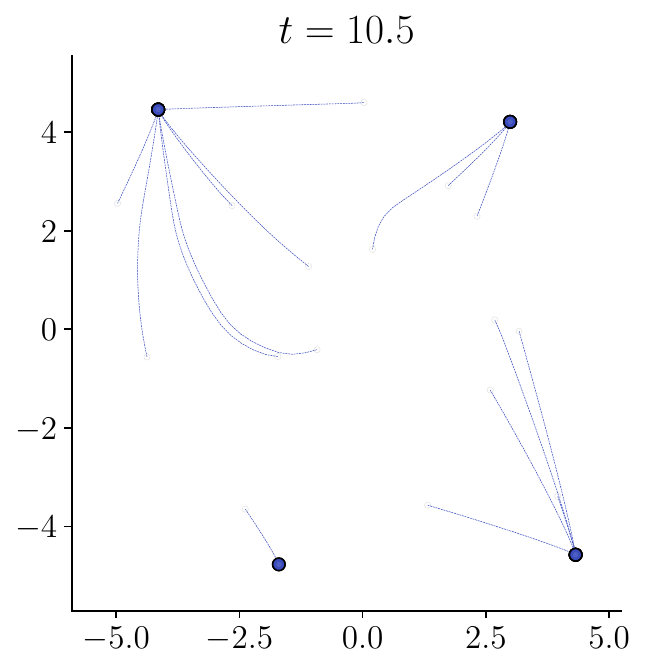}
        \includegraphics[width=0.9\linewidth, height=0.13\textheight]{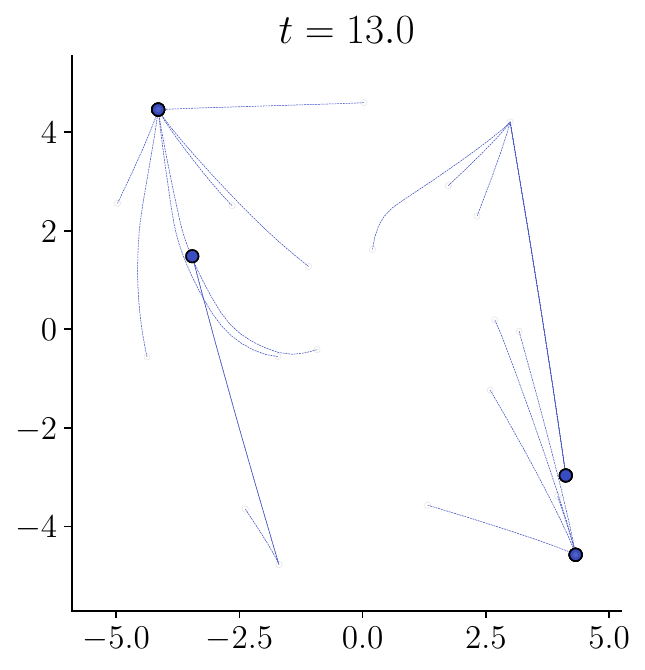}
    \end{minipage} 
\caption{Illustration of Theorem \Ref{thm:depth_phase_transition} with $d=2$ and $n=20$. We have chosen an initialization of the tokens. On the first column, we display the dynamics of the tokens with $Q=K=V=I_2$ and in the second column, the one with $Q=K=I_2$ and
$\Tilde{V}=I_2-\varepsilon \textbf{e}_2\textbf{e}_2^{T}$ with $\varepsilon=0.01.$ } 
\label{fig: Phasetransition}
\end{figure} 

\subsubsection {Experimental observations}

\paragraph{Empirical bifurcation depth.}
We now test the depth-controlled bifurcation predicted by Theorem~\ref{thm:depth_phase_transition} on a trained LoRA model. 
We use the Qwen~3~0.6B LoRA checkpoints from Figure~\ref{fig:training_forgetting}. 
For each fine-tuning checkpoint $s$ and each layer $\ell$, we compare the hidden representations of the LoRA-adapted model with those of the frozen pretrained model on 512 held-out samples. 
Let $H_\ell^{\mathrm{base}}$ denote the hidden states of the base model at layer $\ell$, and let $H_\ell^{\mathrm{LoRA}}(s)$ denote the corresponding hidden states after $s$ LoRA fine-tuning steps. 
We define the relative Frobenius representation drift
\begin{equation}
    \Delta_\ell(s)
    :=
    \frac{
    \left\|H_\ell^{\mathrm{LoRA}}(s)-H_\ell^{\mathrm{base}}\right\|_F
    }{
    \left\|H_\ell^{\mathrm{base}}\right\|_F
    }.
\end{equation}
For a tolerance parameter $\tau>0$, we then define the empirical bifurcation depth as
\begin{equation}
    b_\tau(s)
    :=
    \inf\left\{
    \ell\in\{1,\ldots,L\}
    \colon
    \Delta_\ell(s)>\tau
    \right\},
\end{equation}
with the convention that $b_\tau(s)=L$ if no layer exceeds the tolerance.

Figure~\ref{fig:empirical_bifurcation} reports $b_\tau(s)$ as a function of the fine-tuning step $s$. 
The results are consistent with the stability-window picture of Theorem~\ref{thm:depth_phase_transition}: as LoRA fine-tuning progresses, the first layer at which the adapted model significantly deviates from the pretrained model moves earlier in depth. 
For $\tau=0.2$, the average empirical bifurcation depth decreases from approximately layer $28$ at the beginning of training to approximately layer $16$ at the end of training. 
This indicates that the stability window of the pretrained representation progressively shrinks as the LoRA perturbation accumulates.


\begin{figure}
    \centering
    \includegraphics[width=\columnwidth]{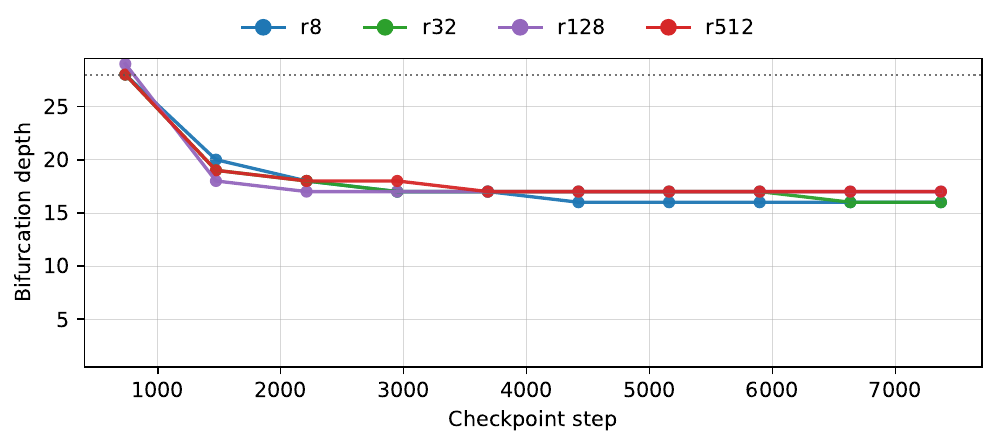} 
\caption{
\textbf{Empirical bifurcation depth during LoRA fine-tuning.}
For each checkpoint step $s$, we compute the relative $\ell_2$ representation drift
$\Delta_\ell(s)$ between the LoRA-adapted model and the frozen pretrained model at each layer $\ell$.
The empirical bifurcation depth $b_\tau(s)$ is defined as the first layer where $\Delta_\ell(s)>\tau$,
using threshold $\tau=0.2$.
Curves correspond to LoRA ranks $r \in \{8,32,128,512\}$.
As fine-tuning progresses, $b_\tau(s)$ decreases, showing that the stability window of the pretrained representation shrinks over training.
}

    \label{fig:empirical_bifurcation}
\end{figure}

\section{Discussion and Acknowledgements}
The first theorem isolates the role of the perturbation norm, while the second explains how depth can amplify initially small perturbations until a bifurcation time. In particular, when the low-rank updates are random, forgetting is essentially insensitive to the rank: this is predicted by Theorem~\ref{thm:phase_transition_POSTLAYERNORM}. By contrast, the training experiments in Figure~\ref{fig:training_forgetting} indicate that this rank-independence does not extrapolate to learned updates. A plausible interpretation is that optimization is not isotropic: LoRA updates are biased towards unstable directions, amplifying forgetting even when the Frobenius norm is controlled. 

We thank an anonymous reviewer for asking us to derive the result obtained in Remark 3.4 and \cref{fig:empirical_bifurcation}. This work was performed using HPC resources from GENCI-IDRIS (Grant 20XX-AD011016341).
\section*{Impact Statement}
This manuscript details research aimed at furthering the field of Machine Learning. While there are broader societal impacts tied to our work, we do not believe it necessary to highlight specific immediate consequences in this document.
\newpage 
\bibliography{biblio}
\bibliographystyle{icml2026}


\appendix
\onecolumn
\section{Generalities on Optimal Transport}
Optimal Transport is a theory that endows probability measures, and general measures with natural and meaningful distances called Wasserstein distances. We
refer to the books \cite{villani2009optimal,santambrogio2015optimal} for a general view of Optimal Transport from a theoretical point of view and to \cite{peyre2019computational,chewi2024statistical} for the computational aspects and application to data science.
\begin{definition}{Coupling}
Let $(\mathcal{X}, d)$ a Polish metric space. Let $\mu,\nu$ two probability measures on $\mathcal{X}$ we say that $\pi\in \mathcal{P}(\mathcal{X}\times \mathcal{X})$ is a coupling between $\mu$ and $\nu$ if and only if the marginals of $\pi$ are $\mu$ and $\nu.$ We denote $\Pi(\mu,\nu)$ the set of couplings between $\mu$ and $\nu$.
\end{definition}
\begin{definition}{(Wasserstein distances)}
Let $(\mathcal{X},d)$ a Polish metric space, and let $p\in [1,\infty($. For any two probability measures $\mu,\nu$ on $\mathcal{X}$, the Wasserstein distance of order $p$ between $\mu$ and $\nu$ is defined by 
    \begin{equation}
        W_{p}(\mu,\nu)=\left( \underset{\pi \in \Pi(\mu,\nu)}{\inf}\int_{\mathcal{X}}d(x,y)^{p}d\pi(x,y) \right)^{\frac{1}{p}},
    \end{equation}
    where the set $\Pi(\mu,\nu)$ is the set of couplings between $\mu,\nu$.
\end{definition}

We often call $W_1$ the Kantorovich-Rubinstein distance. This distance has nice properties and satisfies a duality property.
\begin{proposition}{(Kantorovich-Rubinstein duality)}
    Let $(\mathcal{X},d)$ a Polish metric space, and let $p\in [1,\infty[$. For any two probability measures $\mu,\nu$ on $\mathcal{X}$,
    \begin{equation}
        W_{1}(\mu,\nu)=\underset{\|\phi\|_{\text{Lip}\leq 1}}{\sup}\left\{\int_{\mathcal{X}} \phi d\mu -\int_{\mathcal{X}}\phi d\nu \right\}.
    \end{equation}
\end{proposition}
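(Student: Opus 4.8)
\emph{Overview.} The plan is to prove the two inequalities separately: the lower bound is immediate, while the upper bound goes through the general Kantorovich duality followed by a reduction of the admissible dual potentials to $1$-Lipschitz functions via $d$-transforms. For the lower bound, I would note that for any $1$-Lipschitz $\phi$ and any $\pi\in\Pi(\mu,\nu)$,
\begin{equation*}
\int_{\mathcal X}\phi\diff\mu-\int_{\mathcal X}\phi\diff\nu=\int_{\mathcal X\times\mathcal X}\bigl(\phi(x)-\phi(y)\bigr)\diff\pi(x,y)\le\int_{\mathcal X\times\mathcal X}d(x,y)\diff\pi(x,y),
\end{equation*}
so taking the infimum over $\pi$ and then the supremum over $\phi$ gives $W_1(\mu,\nu)\ge\sup_{\|\phi\|_{\mathrm{Lip}}\le1}\{\int\phi\diff\mu-\int\phi\diff\nu\}$. (If $\mu,\nu$ have no finite first moment both sides are $+\infty$; otherwise $1$-Lipschitz functions are $\mu$- and $\nu$-integrable, so the rearrangement is legitimate.)

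\emph{Upper bound.} For the reverse inequality, the engine is the general Kantorovich duality applied to the lower semicontinuous cost $c(x,y)=d(x,y)$. I would encode the marginal constraints defining $\Pi(\mu,\nu)$ by test functions $\varphi,\psi\in C_b(\mathcal X)$: the quantity $\sup_{\varphi,\psi}\{\int\varphi\diff\mu+\int\psi\diff\nu-\int(\varphi(x)+\psi(y))\diff\pi\}$ is $0$ when $\pi$ has marginals $\mu,\nu$ and $+\infty$ otherwise, so $W_1(\mu,\nu)$ is the infimum over nonnegative measures $\pi$ of $\int d\diff\pi$ plus this supremum; exchanging the infimum and supremum, the inner infimum over $\pi\ge0$ of $\int(d(x,y)-\varphi(x)-\psi(y))\diff\pi$ equals $0$ exactly when $\varphi(x)+\psi(y)\le d(x,y)$ everywhere and $-\infty$ otherwise, which gives
\begin{equation*}
W_1(\mu,\nu)=\sup\Bigl\{\int_{\mathcal X}\varphi\diff\mu+\int_{\mathcal X}\psi\diff\nu:\ \varphi(x)+\psi(y)\le d(x,y)\ \text{for all }x,y\in\mathcal X\Bigr\}.
\end{equation*}
Then I would force the potentials into the desired form using the $d$-transform $\varphi^c(y):=\inf_{x\in\mathcal X}\bigl(d(x,y)-\varphi(x)\bigr)$, which is $1$-Lipschitz (an infimum of the $1$-Lipschitz maps $y\mapsto d(x,y)-\varphi(x)$) and satisfies $\varphi^c\ge\psi$ by admissibility, so replacing $\psi$ with $\varphi^c$ does not decrease the objective; next, $\varphi(x)+\varphi^c(y)\le d(x,y)$ forces $\varphi\le(\varphi^c)^c$, and since a $1$-Lipschitz function $g$ satisfies $g^c=-g$ (because $-g(y)=d(y,y)-g(y)\ge g^c(y)$ while $d(x,y)-g(x)\ge -g(y)$ by the Lipschitz bound), we get $\varphi\le-\varphi^c$; replacing $\varphi$ with $-\varphi^c$ produces the admissible pair $(-\varphi^c,\varphi^c)$, i.e. $(\phi,-\phi)$ with $\phi:=-\varphi^c$ $1$-Lipschitz. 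Hence this supremum equals $\sup_{\|\phi\|_{\mathrm{Lip}}\le1}\{\int\phi\diff\mu-\int\phi\diff\nu\}$, which closes the argument.

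\emph{Main obstacle.} The delicate point is the interchange of infimum and supremum in the duality step. On a compact $\mathcal X$ I would justify it via Sion's minimax theorem, using that $\Pi(\mu,\nu)$ is weak-$\ast$ compact and that the pairing is bilinear and continuous; this already suffices for every use in the paper, where the relevant measures are compactly supported. For a general Polish space one reduces to the compact case by inner regularity of $\mu,\nu$ together with the lower semicontinuity of $W_1$ and of the cost, or simply invokes the general Kantorovich duality theorem (e.g. \cite{Villani2008OptimalTO}, Theorem~5.10). A minor technical point to track is that $\varphi^c$ and $(\varphi^c)^c$ stay finite and bounded when $\varphi\in C_b(\mathcal X)$, which is automatic on compact spaces and otherwise follows from the moment bounds implicit in $W_1(\mu,\nu)<\infty$.
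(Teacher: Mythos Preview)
The paper does not prove this proposition: it is listed in the appendix's ``Optimal Transport Toolbox'' as a standard fact, with the surrounding text referring the reader to \cite{Villani2008OptimalTO,santambrogio2015optimal} for background. So there is no in-paper argument to compare against.

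Your proposal is the standard proof and is correct. The easy direction is fine as written. For the hard direction, your scheme --- Kantorovich duality for the cost $c=d$, then improving $(\varphi,\psi)$ to $(\varphi,\varphi^c)$ and finally to $(-\varphi^c,\varphi^c)$ using that $g^c=-g$ for $1$-Lipschitz $g$ --- is exactly the textbook route (this is the argument in \cite{Villani2008OptimalTO}, Theorem~5.10 and Remark~6.5). Your handling of the inf--sup swap is honest: on compact $\mathcal X$ Sion/minimax suffices, and for general Polish spaces you either reduce to compacta or invoke the general duality theorem; since every application in the paper involves compactly supported measures, the compact case already covers what is needed. One cosmetic point: when you pass to $\varphi^c$ on a noncompact space with $\varphi\in C_b(\mathcal X)$, the function $\varphi^c$ is $1$-Lipschitz but need not be bounded, so strictly speaking you have left $C_b$; this is harmless because integrability against $\mu,\nu$ follows from a finite first moment (or from compact support), which you note, but it is worth being explicit that the dual class enlarges from $C_b$ to $1$-Lipschitz along the way.
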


\begin{lemma}\label{lem: Lipschitz of pushforward}
     Let $p\geq 1$. Consider a measurable function $f:\mathcal{X}\rightarrow \mathcal{X}$, and two probability measures $\alpha,\beta \in \mathcal{P}(\mathcal{X})$. Then, it holds 
     \begin{equation}
         W_{p}\left(f_{\sharp}\alpha,f_{\sharp}\beta\right)\leq \text{Lip}(f)W_{p}\left(\alpha,\beta\right).
     \end{equation}
\end{lemma}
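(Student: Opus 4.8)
# Proof Proposal for Lemma (Lipschitz property of pushforward under $W_p$)

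The plan is to exploit the variational (infimum over couplings) definition of the Wasserstein distance directly, by transporting an optimal coupling for $(\alpha,\beta)$ forward through $f$ and showing the result is a valid competitor coupling for $(f_\sharp\alpha, f_\sharp\beta)$ whose cost is controlled by $\mathrm{Lip}(f)^p$ times the original cost. First I would fix an optimal (or, to avoid existence technicalities, near-optimal up to $\eta>0$) coupling $\pi \in \Pi(\alpha,\beta)$ realizing $\int_{\mathcal{X}\times\mathcal{X}} d(x,y)^p \diff\pi(x,y) \le W_p(\alpha,\beta)^p + \eta$. Then I would consider the map $f\times f : \mathcal{X}\times\mathcal{X} \to \mathcal{X}\times\mathcal{X}$, $(x,y)\mapsto (f(x),f(y))$, which is measurable since $f$ is, and define $\gamma := (f\times f)_\sharp \pi$.

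The key verification is that $\gamma \in \Pi(f_\sharp\alpha, f_\sharp\beta)$: for a Borel set $A\subset\mathcal{X}$, the first marginal of $\gamma$ evaluated on $A$ is $\pi\big((f\times f)^{-1}(A\times\mathcal{X})\big) = \pi\big(f^{-1}(A)\times\mathcal{X}\big) = \alpha(f^{-1}(A)) = (f_\sharp\alpha)(A)$, using that $\pi$ has first marginal $\alpha$; symmetrically for the second marginal. Next I would bound the cost: by the change-of-variables formula for pushforwards,
\begin{equation}
\int_{\mathcal{X}\times\mathcal{X}} d(u,v)^p \diff\gamma(u,v) = \int_{\mathcal{X}\times\mathcal{X}} d(f(x),f(y))^p \diff\pi(x,y) \le \mathrm{Lip}(f)^p \int_{\mathcal{X}\times\mathcal{X}} d(x,y)^p \diff\pi(x,y),
\end{equation}
where the inequality is just the defining property of the Lipschitz constant applied pointwise inside the integral. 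Combining, $W_p(f_\sharp\alpha, f_\sharp\beta)^p \le \int d(u,v)^p\diff\gamma \le \mathrm{Lip}(f)^p\,(W_p(\alpha,\beta)^p+\eta)$, and letting $\eta\to 0$ and taking $p$-th roots yields the claim.

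There is no serious obstacle here; the only points requiring a little care are (i) justifying that one may take a near-optimal coupling — handled by passing to the infimum with slack $\eta$, so no appeal to existence of optimizers is needed — and (ii) the measurability of $f\times f$ and the correctness of the marginal computation, both of which are routine once the pushforward bookkeeping is written out as above. If $\mathrm{Lip}(f)=+\infty$ the statement is vacuous, so we may assume $\mathrm{Lip}(f)<\infty$; if $\mathrm{Lip}(f)=0$ then $f$ is constant and both pushforwards equal the same Dirac mass, consistent with the bound. One should also note $\gamma$ indeed belongs to $\mathcal{P}(\mathcal{X}\times\mathcal{X})$ since pushforward preserves total mass, so it is a legitimate element of $\Pi(f_\sharp\alpha,f_\sharp\beta)$ and the infimum defining $W_p(f_\sharp\alpha,f_\sharp\beta)$ is over a nonempty competitor set containing $\gamma$.
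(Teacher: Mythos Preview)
Your proof is correct and follows essentially the same approach as the paper: push forward a coupling of $(\alpha,\beta)$ through $f\times f$ to obtain a competitor coupling for $(f_\sharp\alpha,f_\sharp\beta)$, then apply the Lipschitz bound pointwise under the integral. The paper writes the induced coupling as $\gamma'(A\times B)=\gamma(f^{-1}(A)\times f^{-1}(B))$ (which is exactly your $(f\times f)_\sharp\pi$) and passes directly to the infimum over $\Pi(\alpha,\beta)$; your use of an $\eta$-near-optimal coupling is a minor presentational difference that, if anything, makes the limiting step more explicit.
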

    \begin{proof}
We remark that a coupling $\gamma\in \Pi(\alpha,\beta)$ induces a coupling $\gamma'\in \Pi(f_{\sharp}\alpha,f_{\sharp}\beta)$ defined as 
\begin{equation}
    \gamma'(A\times B)=\gamma\left( f^{-1}(A) \times f^{-1}(B)\right).
\end{equation}
We deduce the following inequalities
\begin{align*}
    W_{p}(f_{\sharp}\alpha,f_{\sharp}\beta)^{p}&= \underset{\gamma' \in \Gamma\left(f_{\sharp}\alpha,f_{\sharp}\beta \right)}{\inf} \int \| x-y\|^{p}d\gamma'(x,y)\\
    &\leq \underset{\gamma \in \Gamma\left(\alpha,\beta \right)}{\inf} \int \| f(x)-f(y)\|^{p}d\gamma(x,y)\\
    &\leq \text{Lip}(f)^{p}\underset{\gamma \in \Gamma\left(\alpha,\beta \right)}{\inf} \int \| x-y\|^{p}d\gamma(x,y) \\
    &\leq  \text{Lip}(f)^{p} W_{p}(\alpha,\beta)^{p}.
\end{align*}
    It finishes the proof of the lemma.
\end{proof}
\begin{lemma}
    \label{lem:lip_wass2}
    Let $p\geq 1$.
    Consider two measurable functions $\varphi,\psi\colon \mathcal{X} \to \mathcal{X}$, and a probability measure $\mu \in \mathcal{P}_p(\mathcal{X})$ such that $\varphi_\sharp \mu \in \mathcal{P}_p(\mathcal{X})$ and $\psi_\sharp \mu \in \mathcal{P}_p(\mathcal{X})$.
    Then, it holds
    $$W_p(\varphi_\sharp \mu, \psi_\sharp \mu) \leq {\| \varphi-\psi\|}_{L^p(\mu)}.$$
\end{lemma}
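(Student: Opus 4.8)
The plan is to mimic the standard optimal-transport argument used in Lemma~\ref{lem: Lipschitz of pushforward}, but now exploiting the fact that both push-forwards are taken with respect to the \emph{same} base measure $\mu$. The key observation is that the map $x\mapsto (\varphi(x),\psi(x))$ pushes $\mu$ forward to a measure $\gamma\in \mathcal{P}(\mathcal{X}\times\mathcal{X})$, and that $\gamma$ is a valid coupling of $\varphi_\sharp\mu$ and $\psi_\sharp\mu$: its first marginal is $\varphi_\sharp\mu$ and its second marginal is $\psi_\sharp\mu$, directly from the definition of push-forward. This ``diagonal'' coupling is exactly what is unavailable when the two measures have different bases, and it is what makes the bound tighter here (no Lipschitz constant appears).

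First I would define $\Phi\colon \mathcal{X}\to\mathcal{X}\times\mathcal{X}$ by $\Phi(x)=(\varphi(x),\psi(x))$ and set $\gamma\egal \Phi_\sharp\mu$. Second, I would verify the marginal conditions: for any Borel $A\subset\mathcal{X}$, $\gamma(A\times\mathcal{X})=\mu(\Phi^{-1}(A\times\mathcal{X}))=\mu(\varphi^{-1}(A))=(\varphi_\sharp\mu)(A)$, and symmetrically for the second marginal, so $\gamma\in\Pi(\varphi_\sharp\mu,\psi_\sharp\mu)$. Third, I would plug $\gamma$ into the definition of $W_p$ as an admissible (not necessarily optimal) competitor:
\begin{equation*}
W_p(\varphi_\sharp\mu,\psi_\sharp\mu)^p \leq \int_{\mathcal{X}\times\mathcal{X}} d(x,y)^p \diff\gamma(x,y) = \int_{\mathcal{X}} d(\varphi(x),\psi(x))^p \diff\mu(x) = {\|\varphi-\psi\|}_{L^p(\mu)}^p,
\end{equation*}
using the change-of-variables formula for push-forwards in the middle equality and the definition of the $L^p(\mu)$ norm of the (vector-valued) difference in the last. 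Taking $p$-th roots finishes the argument.

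Honestly there is no real obstacle here; the only points requiring a word of care are (i) the measurability of $\Phi$, which follows from measurability of $\varphi$ and $\psi$ componentwise, and (ii) checking that the right-hand side is finite, i.e. that ${\|\varphi-\psi\|}_{L^p(\mu)}<\infty$ — but this is automatic from the hypotheses $\varphi_\sharp\mu,\psi_\sharp\mu\in\mathcal{P}_p(\mathcal{X})$ together with the triangle inequality $d(\varphi(x),\psi(x))\leq d(\varphi(x),x_0)+d(x_0,\psi(x))$ for a fixed reference point $x_0$, so the integral is bounded by a constant times $\int d(\varphi(x),x_0)^p\diff\mu + \int d(\psi(x),x_0)^p\diff\mu<\infty$. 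If one wants the inequality to be vacuously true when the right-hand side is infinite, no argument is even needed. I would present the proof in three or four lines, essentially just displaying the chain of (in)equalities above with a sentence identifying $\gamma$ as the coupling.
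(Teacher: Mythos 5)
Your proof is correct and essentially identical to the paper's: the coupling $\gamma=(\varphi,\psi)_\sharp\mu$ you construct is exactly the paper's coupling $\pi'(B\times C)=\mu(\varphi^{-1}(B)\cap\psi^{-1}(C))$, and both arguments conclude by using it as a competitor in the infimum defining $W_p$ together with the change-of-variables formula. Your added remarks on measurability of $(\varphi,\psi)$ and finiteness of the right-hand side are minor refinements the paper leaves implicit.
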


\begin{proof}
    Recall that
    \begin{equation*}
        W_p(\varphi_\sharp \mu, \psi_\sharp \mu)^p = \inf_{\pi' \in \Pi(\varphi_\sharp \mu, \psi_\sharp \mu)} \int \|x-y\|^p \de \pi'(x,y).
    \end{equation*}
    Now consider the following coupling between $\varphi_\sharp \mu$ and $\psi_\sharp \mu$, defined by the relation
    $$\pi'(B\times  C) \coloneqq \mu(\varphi^{-1}(B)\cap \psi^{-1}(C))$$
    for every Borel sets $B, C\subset \mathcal{X}$.
    In other words, we set $\de \pi'(y, z) \coloneqq \int_{\varphi^{-1}(y)\cap \psi^{-1}(z)}\de \mu$, and $\de \pi'(y, z) = 0$ if $\varphi^{-1}(y)\cap \psi^{-1}(z) = \emptyset$.
    With this definition of $\pi'$, we have
    \begin{align*}
        W_p(\varphi_\sharp \mu, \psi_\sharp \mu)^p &\le \int\|x-y\|^{p} \de \pi'(x,y) \\
        &= \int \|\varphi(x) - \psi(x)\|^p \de \mu(x).
    \end{align*}
\end{proof}
A simple consequence of the Jensen inequality implies the growth of Wasserstein distances.
\begin{proposition}\label{prop: Growth of the W}
    Let $(\mathcal{X},d)$ a Polish metric space, and let $1\leq p<q<+\infty$. For any two probability measures $\mu,\nu$ on $\mathcal{X}$,
    \begin{equation}
        W_{p}(\mu,\nu)\leq W_{q}(\mu,\nu).
    \end{equation}
\end{proposition}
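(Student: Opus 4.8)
The plan is to prove this directly from the definition of the Wasserstein distances together with Jensen's inequality applied to the concave power function $t \mapsto t^{p/q}$, exploiting that every coupling is a \emph{probability} measure (so Jensen applies with no normalization constant). If $W_q(\mu,\nu) = +\infty$ the inequality is vacuous, so I may assume $W_q(\mu,\nu) < +\infty$.

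First I would fix $\varepsilon > 0$ and choose a coupling $\pi \in \Pi(\mu,\nu)$ that is $\varepsilon$-optimal for the order-$q$ problem, i.e.
\begin{equation*}
    \int_{\mathcal{X}\times\mathcal{X}} d(x,y)^q \diff\pi(x,y) \leq W_q(\mu,\nu)^q + \varepsilon.
\end{equation*}
(Working with an $\varepsilon$-optimal coupling rather than an optimal one avoids invoking any existence theorem, though in a Polish space an optimal coupling does exist when $W_q$ is finite.) Since $1 \leq p < q$, we have $0 < p/q < 1$, so $s \mapsto s^{p/q}$ is concave on $[0,+\infty)$; applying Jensen's inequality with respect to the probability measure $\pi$ to the nonnegative integrand $s = d(x,y)^q$ gives
\begin{equation*}
    \int_{\mathcal{X}\times\mathcal{X}} d(x,y)^p \diff\pi(x,y) = \int_{\mathcal{X}\times\mathcal{X}} \bigl(d(x,y)^q\bigr)^{p/q} \diff\pi(x,y) \leq \left( \int_{\mathcal{X}\times\mathcal{X}} d(x,y)^q \diff\pi(x,y) \right)^{p/q}.
\end{equation*}
Raising both sides to the power $1/p$ and using that $\pi$ is in particular a coupling admissible for the order-$p$ problem, I get
\begin{equation*}
    W_p(\mu,\nu) \leq \left( \int d(x,y)^p \diff\pi \right)^{1/p} \leq \left( \int d(x,y)^q \diff\pi \right)^{1/q} \leq \bigl(W_q(\mu,\nu)^q + \varepsilon\bigr)^{1/q}.
\end{equation*}
Letting $\varepsilon \to 0^+$ yields $W_p(\mu,\nu) \leq W_q(\mu,\nu)$, which is the claim.

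There is essentially no serious obstacle here: the only point that requires a moment's care is ensuring the integrand is genuinely integrated against a probability measure (so that Jensen applies in the ``averaging'' direction with the right inequality sign), and handling the possibility that no optimal coupling exists — both of which are dispatched by the $\varepsilon$-optimal coupling device above. The finiteness caveat ($W_q = +\infty$) is trivial and can be mentioned in one line.
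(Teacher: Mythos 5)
Your proof is correct and follows exactly the route the paper has in mind: the paper states this proposition as ``a simple consequence of the Jensen inequality,'' and your argument---applying Jensen with the concave map $s\mapsto s^{p/q}$ to an ($\varepsilon$-)optimal coupling for the order-$q$ problem and noting that this coupling is admissible for the order-$p$ problem---is precisely that argument, with the finiteness and $\varepsilon$-optimality caveats handled cleanly.
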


\section{General results}
We denote by $\mathcal{P}_c(\R^d)$ the set of compactly supported probability measures on $\R^d$, and by $\mathcal{P}_2(\R^d)$ the set of probability measures $\mu$ on $\R^d$ having finite second moment: $\int_{\R^d}\|x\|^2\de\mu(x)<+\infty$.
Let $C^0(\R;\mathcal{P}_c(\R^d))$ denote the Banach space of continuous curves $\R\ni t\mapsto\mu(t)\in \mathcal{P}_c(\R^d)$. Here $\mathcal{P}_c(\R^d)$ is endowed with the weak topology, which coincides with the topology induced by the Wasserstein distance $W_p$ for any $p\in[1,+\infty)$.

\subsection{General definition}\label{sec: General results}

 Let $\Omega\subset \R^{d}$ be a convex open set. Let $T\in \R_{+}$, and let a $v:[0,T]\times \Omega\rightarrow \R^{d}$ be a bounded vector field. We say that $v$ is Lipschitz continuous in space (uniformly) if it exists $L>0$ such that 
 \begin{equation}
     |v(t,x)-v(t,y)|\leq L|x-y| \quad \forall t\leq T,(x,y)\in \Omega.
 \end{equation}
We remind the reader of the Cauchy-Lipchitz theorem
\begin{theorem}
Given $T>0$, let $v : [0, T] \times \R^{d}$ be a bounded vector field
which is Lipschitz in space. For any $x\in \R^{d}$,there exists a unique trajectory
$t\mapsto X_t(x)$ solving the Cauchy problem
\begin{equation}
\begin{cases}
    \frac{\de}{\de t}X_{t}(x)&=v(X_{t}(x),t)\\
    X_{0}(x)&=x.
\end{cases}
\end{equation}
in the integral sense. The map $X : [0, T) \times \R^{d} \rightarrow \R^{d}$
is the flow of $v$ starting at time $0.$
\end{theorem}

We can apply this theorem to the flow associated to the velocity field $x\mapsto \mathcal{X}[\mu](x)$, which gives the flow associated to the \emph{attention-kernel} velocity-field, resulting that the empirical measure $\mu_t$ of the tokens is the pushforward of the flow of the initial empirical measure
\begin{equation}
    \mu_t={X_t}_{\sharp}(\mu_0).
\end{equation}
Besides, this measure $\mu_t$ is a solution to the continuity equation 
\begin{align*}
    \frac{\de}{\de t}\int_{\R^{d}} g(x)d{X_{t}}_{\sharp}(\mu_0)&=\frac{\de}{\de t}\int_{\R^{d}}g(X_{t}(x))d\mu_{0}(x)\\
    &=\int_{\R^{d}}\frac{\de}{\de t}g(X_{t}(x))d\mu_{0}(x)\\
    &=\int_{\R^{d}}\langle \nabla g(X_{t}(x)),\Dot{X}_{t}(x)\rangle d\mu_{0}(x)\\
    &=\int_{\R^{d}}\langle \nabla g(X_{t}(x)),\mathcal{X}[\mu_t](X_t(x))\rangle d\mu_{0}(x)\\
    &=\int_{\R^{d}}\langle \nabla g(x),\mathcal{X}[\mu_t](x)\rangle d\mu_{t}(x).\\
\end{align*}
We have proved that the pushforward measure of the flow associated to the ODE is indeed a solution of the continuity equation. However, the notion of continuity equation is more general, even if in some cases it can be an equivalence (see the Proposition \eqref{prop: equivalence Lagrangien/eulerien } below). The point of view associated with ODEs is called Lagrangian, whereas the point of view associated with measures is called Eulerian. We can often switch from one formulation to another, as each has its advantages.

As seen below, for compactness purposes regarding solutions to the continuity equation, we consider an additional property on the support of such curves, summarized by the following definition. 

\begin{definition}[Equi-compactly supported curves]
    The set $C^0_{\text{comp}}(\R;\mathcal{P}_c(\R^d))$ consists of all elements $\mu\in C^0(\R;\mathcal{P}_c(\R^d))$ such that for any $t_0,t_1\in\R$, there exists a compact subset $\mathcal{K}\subset \R^d$ such that $\mathrm{supp}(\mu(t))\subset \mathcal{K}$ for any $t\in[t_0,t_1]$.
\end{definition}
We will make use of the following notion of solution. We want to define the continuity equation
\begin{equation}\label{e:conteq}
\begin{cases}
\partial_t\mu+\mathrm{div}(\mathcal{X}[\mu]\mu)=0 &\text{ in } (0,+\infty)\times\mathbb{R}^d\\ 
\mu_{|t=0} = \mu_0 &\text{ in } \mathbb{R}^d,
\end{cases}
\end{equation}
when $\mathcal{X}[\mu]$ is the \emph{attention kernel}
\begin{equation}
\mathcal{X}[\mu](x):=\frac{\displaystyle\int_{\R^d}e^{\langle Qx, Ky\rangle}Vy\de\mu(y)}{\displaystyle\int_{\R^d}e^{\langle Qx,Ky\rangle}\de\mu(y)}.
\end{equation}

\begin{definition} \label{d:solconti}
Fix $\mu_0\in \mathcal{P}_c(\R^d)$. 
We say that $t\mapsto \mu(t)=:\mu_t$ is a solution to the Cauchy problem \eqref{e:conteq} if 
$\mu\in C^0_{\text{comp}}(\R,\mathcal{P}_c(\R^d))$, the function 
$$\mathbb{R}\ni t\mapsto \int_{\R^d}g(x)\de\mu_t(x)$$
is absolutely continuous for every $g\in C_c^\infty(\R^d)$, and
$$
\int_{\R^d}g(x)\de\mu_t(x)=\int_{\R^d}g(x)\de\mu_0(x)+\int_0^t \int_{\R^d}\left\langle \nabla g(x),\mathcal{X}[\mu_t](x)\right\rangle \de\mu_s(x)\de s.
$$
\end{definition}

We will remind some preliminaries concerning continuity equation, and about the construction of its solution as a the pushforward measure of a certain vector field. All these results are borrowed from \citet{ambrosio2005gradient}. 
\begin{lemma}\cite{ambrosio2005gradient}{(Lemma 8.1.4)}\label{d=flow}
Let $\mathcal{X}[\mu]$ be a Borel vector field such that for every compact set $B \subset \R^{d}$,  such that for every compact set $B\subset \R^{d}$, we have
\begin{equation}\label{assump: flow regularity}
   \int_{0}^{T} \left( \underset{B}{\sup} \left\lvert \mathcal{X}[\mu_t]\right\rvert +\text{Lip}(\mathcal{X}[\mu_t],B) dt\right)<+\infty.
\end{equation}
Then for every $x\in\R^{d}$, and $s\in [0,T]$ the ODE
\begin{equation}\label{def: flow}
    X_{s}(x,s)=x, \quad \frac{\de}{\de t} X_{t}(x,s)=\mathcal{X}[\mu_t](X_{t}(x,s)).
\end{equation}
admits a unique maximal solution in an interval $I(x,s).$ Besides if we have 
\begin{equation}\label{assump: unif flow regularity}
        S\overset{\bullet}{=}\int_{0}^{T} \left( \underset{\R^{d}}{\sup} \left\lvert \mathcal{X}[\mu_t]\right\rvert +\text{Lip}(\mathcal{X}[\mu_t],\R^{d}) dt\right)<+\infty.
\end{equation}
Then, the flow map satisfies
\begin{equation}\label{lem: unif flow lipc}
    \underset{t,s\in [0,T]}{\sup}\text{Lip}\left(X_{t}(\cdot,s),\R^{d} \right)\leq e^{S}. 
\end{equation}
\end{lemma}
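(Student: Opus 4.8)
The plan is to recognize this as the classical Carath\'eodory theory for time\-measurable vector fields, adapted to the local integrability hypothesis \eqref{assump: flow regularity}, followed by a Gr\"onwall estimate for the uniform Lipschitz bound \eqref{lem: unif flow lipc}. I treat $t\mapsto\mathcal{X}[\mu_t](\cdot)$ abstractly as a given time\-dependent Borel field satisfying the stated bounds.

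First I would pass to the equivalent integral formulation: a solution of \eqref{def: flow} on an interval $I\ni s$ is an absolutely continuous curve $t\mapsto X_t(x,s)$ with
$$X_t(x,s)=x+\int_s^t \mathcal{X}[\mu_r]\bigl(X_r(x,s)\bigr)\diff r,\qquad t\in I.$$
Fix $\rho>0$, set $B\egal\overline{B}(x,\rho)$, and write $g(t)\egal\sup_B|\mathcal{X}[\mu_t]|$ and $\ell(t)\egal\mathrm{Lip}(\mathcal{X}[\mu_t],B)$; since $\mathcal{X}[\mu]$ is Borel these are measurable in $t$, and by \eqref{assump: flow regularity} both lie in $L^1([0,T])$. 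Choose $\tau>0$ with $\int_{|r-s|\le\tau}g(r)\diff r\le\rho$, so that $\Phi(f)(t)\egal x+\int_s^t\mathcal{X}[\mu_r](f(r))\diff r$ maps $C^0(J;B)$ into itself for $J\egal[s-\tau,s+\tau]\cap[0,T]$. On $C^0(J;B)$ I would use the Bielecki-type norm $\|f\|\egal\sup_{t\in J}e^{-2L(t)}|f(t)|$ with $L(t)\egal\bigl|\int_s^t\ell(r)\diff r\bigr|$; then $\Phi$ is a strict contraction, because for $t>s$ one computes $e^{-2L(t)}|\Phi(f)(t)-\Phi(h)(t)|\le\|f-h\|\,e^{-2L(t)}\int_s^t\ell(r)e^{2L(r)}\diff r=\tfrac12\|f-h\|(1-e^{-2L(t)})\le\tfrac12\|f-h\|$, and symmetrically for $t<s$. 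Banach's fixed point theorem gives a unique solution on $J$; local uniqueness then propagates, so by the standard continuation argument solutions on growing intervals glue into a unique maximal solution on an interval $I(x,s)$ that is relatively open in $[0,T]$.

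Next, for the maximal interval and the claim $I(x,s)=[0,T]$ under the global hypothesis, I would record the a priori bound that holds as long as the solution is defined: $|X_t(x,s)-x|\le\bigl|\int_s^t\sup_{\R^d}|\mathcal{X}[\mu_r]|\diff r\bigr|\le S$. Thus the trajectory cannot escape to infinity in finite time, so the maximal interval cannot be prematurely closed and must equal $[0,T]$.

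Finally, for the Lipschitz bound I would compare two trajectories with the same $s$ and initial data $x,y$. With $w(t)\egal X_t(x,s)-X_t(y,s)$, subtracting the integral equations and using the global Lipschitz-in-space constant gives
$$|w(t)|\le|x-y|+\left|\int_s^t\mathrm{Lip}(\mathcal{X}[\mu_r],\R^d)\,|w(r)|\diff r\right|.$$
Gr\"onwall's lemma in integral form (valid for $L^1$ coefficients) yields
$$|w(t)|\le|x-y|\exp\left(\left|\int_s^t\mathrm{Lip}(\mathcal{X}[\mu_r],\R^d)\diff r\right|\right)\le|x-y|\,e^{S},$$
which is precisely $\mathrm{Lip}(X_t(\cdot,s),\R^d)\le e^S$ uniformly in $t,s\in[0,T]$. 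The main obstacle is purely technical rather than conceptual: because $\mathcal{X}[\mu_t]$ is only measurable in $t$, both the Picard iteration and the Gr\"onwall step must be carried out in the integral (Carath\'eodory) framework with $L^1$\-in\-time bounds instead of the classical continuous\-coefficient setting, and the Bielecki weight $e^{-2L(t)}$ built from the integrated local Lipschitz constant is exactly what makes the contraction succeed on a time interval whose length is dictated by $\int g$ rather than by $\int\ell$.
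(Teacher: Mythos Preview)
The paper does not actually prove this lemma: it is stated with an explicit citation to \cite{ambrosio2005gradient} (Lemma~8.1.4 there) and is invoked as a black box for the subsequent Proposition~\ref{prop: equivalence Lagrangien/eulerien } and Lemma~\ref{lem: representation of the continuity equation}. Your proof is correct and is precisely the standard Carath\'eodory argument that appears in the cited reference: an integral reformulation, a local fixed-point via a Bielecki-weighted sup norm built from the integrated local Lipschitz constant, continuation, and then Gr\"onwall for the global Lipschitz bound under \eqref{assump: unif flow regularity}. So your route agrees with the source the paper defers to; there is no alternative proof in the paper to compare against.
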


\begin{proposition}[ \cite{ambrosio2005gradient}[Proposition 8.1.8] ]\label{prop: equivalence Lagrangien/eulerien }
    Let $v_t$ be a Borel velocity field satisfying \eqref{assump: flow regularity}, and  . Let $\mu_{0}\in \mathcal{P}_{c}(\R^{d})$, and let $X_t$ be the maximal solution of the ODE $\eqref{def: flow}$. Then, $t\mapsto \mu_t=\left( X_t\right)_{\sharp}\mu_0$ is a continuous solution of $\eqref{d:solconti}$ and this solution is the unique globally defined solution. Moreover, if 
    \begin{equation}
        \int_{0}^{T} \int_{\R^{d}} \lvert v_{t}(x) \rvert^{p} d\mu_{t}(x) dt<+\infty \quad \text{ for some }p>1.
    \end{equation}
    Then, the velocity field $v_t$ is the time derivative of $X_t$ in the $L^{p}$ sense i.e.
    \begin{equation}
        \underset{h\rightarrow 0}{\lim} \int_{0}^{T-h} \int_{\R^{d}} \left\lvert \frac{X_{t+h}(x)-X_{t}(x)}{h}-v_{t}(X_t(x)) \right\rvert d\mu_{0}(x)dt=0
    \end{equation}
\end{proposition}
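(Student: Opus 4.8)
The plan is to reproduce, in this setting, the classical argument of \citet{ambrosio2005gradient} (Proposition 8.1.8), in three stages: first that $t\mapsto\mu_t:=(X_t)_\sharp\mu_0$ solves \eqref{d:solconti}; then that it is the \emph{only} solution, by a duality argument against the backward transport equation; finally the $L^p$ time-differentiation identity, by a Steklov-averaging argument. For the first stage, Lemma \ref{d=flow} already provides a unique, locally Lipschitz flow $X_\cdot$; since $\mu_0$ has compact support and $v_t=\mathcal{X}[\mu_t]$ is bounded on the relevant compact sets with an $L^1$-in-time bound (indeed, along an equi-compactly supported curve $\|\mathcal{X}[\mu_t]\|\le\|V\|_\op\sup_{y\in\supp\mu_t}\|y\|$), the trajectories issuing from $\supp\mu_0$ stay in a fixed compact $\mathcal{K}$ for $t\in[0,T]$; hence $\mu_t\in\mathcal{P}_c(\R^d)$, the curve is equi-compactly supported, and $t\mapsto\mu_t$ is continuous in $W_p$ by Lemma \ref{lem: Lipschitz of pushforward} together with the local Lipschitz-in-time bound on $X_\cdot$. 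To verify the weak formulation of Definition \ref{d:solconti}, fix $g\in C_c^\infty(\R^d)$; then $s\mapsto g(X_s(x))$ is absolutely continuous, and a Fubini argument (justified by $\sup_{\mathcal{K}}\|v_s\|\in L^1([0,T])$) shows that $t\mapsto\int g(X_t(x))\diff\mu_0(x)$ is absolutely continuous with a.e. derivative $\int\langle\nabla g(y),v_t(y)\rangle\diff\mu_t(y)$, which is precisely the chain-rule computation carried out earlier in this appendix; integrating in time yields the integral identity of Definition \ref{d:solconti}.

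For the second stage, let $(\tilde\mu_t)_{t\in[0,T]}$ be any equi-compactly supported solution of \eqref{d:solconti} with $\tilde\mu_0=\mu_0$. Fix $\tau\in(0,T)$ and $\psi\in C_c^\infty(\R^d)$, and let $Y_{t,\tau}(x)$ denote the value at time $\tau$ of the characteristic of $v$ passing through $x$ at time $t$. Formally, $\phi(t,x):=\psi(Y_{t,\tau}(x))$ solves the backward transport equation $\partial_t\phi+\langle v_t,\nabla\phi\rangle=0$ with $\phi(\tau,\cdot)=\psi$, so that $\tfrac{\diff}{\diff t}\int\phi(t,\cdot)\diff\tilde\mu_t=0$ and hence $\int\psi\diff\tilde\mu_\tau=\int\phi(0,\cdot)\diff\mu_0=\int\psi\diff(X_\tau)_\sharp\mu_0$; as $\psi$ is arbitrary, $\tilde\mu_\tau=\mu_\tau$. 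The difficulty is that $v_t$ is only Lipschitz in space, so $\phi$ is merely Lipschitz and the test-function computation is not literally admissible; I expect this to be the main obstacle. The rigorous version, following \citet{ambrosio2005gradient}, replaces $v_t$ by spatial mollifications $v_t^\varepsilon$ (keeping the same uniform bounds on $\mathcal{K}$), constructs the smooth solutions $\phi^\varepsilon$ of the associated smooth backward equations, runs the above computation with $\phi^\varepsilon$ in place of $\phi$, and passes to the limit $\varepsilon\to0$ using stability of flows under $v^\varepsilon\to v$ in $L^1_{\mathrm{loc}}$ and the equi-boundedness of supports; one concludes $\tilde\mu_\tau=(X_\tau)_\sharp\mu_0=\mu_\tau$ for every $\tau$, which is simultaneously the representation formula and the uniqueness statement.

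For the third stage, assume $\int_0^T\int|v_t|^p\diff\mu_t\diff t<+\infty$ for some $p>1$, and set $F(t,x):=v_t(X_t(x))$. Since $\mu_t=(X_t)_\sharp\mu_0$, a change of variables followed by H\"older's inequality in $t$ gives $\int_0^T\int|F(t,x)|\diff\mu_0(x)\diff t=\int_0^T\int|v_t|\diff\mu_t\diff t\le T^{1-1/p}\bigl(\int_0^T\int|v_t|^p\diff\mu_t\diff t\bigr)^{1/p}<+\infty$, so $F\in L^1([0,T];L^1(\mu_0))$. Because $t\mapsto X_t(x)$ solves \eqref{def: flow} in the integral sense, it is absolutely continuous with $\dot X_t(x)=F(t,x)$, whence $X_{t+h}(x)-X_t(x)=\int_t^{t+h}F(s,x)\diff s$ and
\[
\frac{X_{t+h}(x)-X_t(x)}{h}-v_t(X_t(x))=\frac1h\int_t^{t+h}\bigl(F(s,x)-F(t,x)\bigr)\diff s .
\]
Hence the quantity to be estimated equals $\int_0^{T-h}\bigl\|\tfrac1h\int_t^{t+h}F(s,\cdot)\diff s-F(t,\cdot)\bigr\|_{L^1(\mu_0)}\diff t$, i.e. the $L^1([0,T-h];L^1(\mu_0))$-distance between the Steklov average of the vector-valued map $t\mapsto F(t,\cdot)\in L^1(\mu_0)$ and the map itself. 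This tends to $0$ as $h\to0$ by continuity of translations in $L^1$ (equivalently, the vector-valued Lebesgue differentiation theorem applied to $F\in L^1([0,T];L^1(\mu_0))$), which establishes the $L^p$ time-derivative identity and completes the proof.
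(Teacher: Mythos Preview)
The paper does not give its own proof of this proposition; it is quoted as Proposition 8.1.8 of \citet{ambrosio2005gradient} and used as a black box. The only related argument appearing in the paper is the chain-rule computation in Section~\ref{sec: General results} showing that the pushforward of the flow solves the weak continuity equation, which is precisely your Stage~1. Your three-stage reconstruction (existence via the flow, uniqueness via duality with the backward transport equation plus mollification, and the time-differentiation identity via Steklov averaging in $L^1([0,T];L^1(\mu_0))$) is the standard argument from \citet{ambrosio2005gradient} and is correct; there is nothing further in the paper to compare it against.

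One minor remark: in Stage~1 you specialize to $v_t=\mathcal{X}[\mu_t]$ and invoke the bound $\|V\|_\op\sup_{y\in\supp\mu_t}\|y\|$, whereas the proposition as stated is for a general Borel field $v_t$ satisfying \eqref{assump: flow regularity}. This is harmless here (and is how the paper actually uses the result, via Lemma~\ref{lem: representation of the continuity equation}), but the general proof needs only the abstract local bound $\sup_B|v_t|\in L^1_t$ from \eqref{assump: flow regularity} rather than any structure of the attention kernel.
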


\begin{lemma}\label{lem: representation of the continuity equation}
Consider the Kernel attention defined by Eq.\eqref{eq:vectorfield}.The ODE solution defined by \eqref{def: flow} admits a unique maximal solution, and $\mu_{t}=(X_t)_{\sharp}\mu_0$  is the unique solution to \eqref{d:solconti}. 
\end{lemma}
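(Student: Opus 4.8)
The plan is to combine the general Cauchy–Lipschitz/flow machinery of \citet{ambrosio2005gradient} recalled in Lemma \ref{d=flow} and Proposition \ref{prop: equivalence Lagrangien/eulerien } with an a priori bound on the support of the tokens. The only thing to check is that the attention kernel $\mathcal{X}[\mu]$ defined by \eqref{e:vectorfield}, restricted to the relevant class of compactly supported measures, satisfies the regularity hypothesis \eqref{assump: flow regularity} (indeed the stronger uniform hypothesis \eqref{assump: unif flow regularity} on a suitable ball), so that the existence/uniqueness of the flow and the equivalence with the continuity equation apply verbatim.

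First I would establish an a priori $L^\infty$ bound on the trajectories: if $\mu_0$ is supported in $B(0,R)$, then from $\dot x_i = \sum_j P_{i,j} V x_j$ with $\sum_j P_{i,j}=1$ and $P_{i,j}\geq 0$ one gets $\frac{\diff}{\diff t}\|x_i(t)\| \leq \|V\|_{\op}\max_j\|x_j(t)\|$, hence by Grönwall all tokens stay in $B(0, R_t)$ with $R_t = R e^{\|V\|_{\op} t}$ (the same computation as in \eqref{RT}). Thus it suffices to understand $\mathcal{X}[\mu]$ for $\mu$ supported in a fixed ball $B(0,\rho)$ and $x\in B(0,\rho)$. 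On such a set the numerator and denominator of \eqref{e:vectorfield} are smooth in $x$, the denominator is bounded below by $e^{-\|Q^\top K\|_{\op}\rho^2}>0$, and numerator and its gradient are bounded; so $\mathcal{X}[\mu]$ is bounded and Lipschitz in $x$ on $B(0,\rho)$ with constants depending only on $\rho$ (and $Q,K,V$). This is exactly what is needed to verify \eqref{assump: flow regularity}, and in fact a uniform version of it after one restricts attention to $B(0,R_T)$ (or, if one wants a globally bounded field, after multiplying by a smooth cutoff supported on a slightly larger ball, which does not affect the dynamics of a measure that provably never leaves $B(0,R_T)$).

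Then I would invoke Lemma \ref{d=flow} to get, for the (cutoff) attention velocity field, a unique maximal flow $X_t$, and Proposition \ref{prop: equivalence Lagrangien/eulerien } to conclude that $t\mapsto \mu_t := (X_t)_\sharp \mu_0$ is the unique globally defined solution of the continuity equation in the sense of Definition \ref{d:solconti}. To close the loop one notes that this solution stays in $\mathcal{P}_c(\R^d)$ with uniformly bounded support on compact time intervals — i.e.\ it lies in $C^0_{\comp}(\R;\mathcal{P}_c(\R^d))$ — by the a priori bound above, so the cutoff was irrelevant and $\mu_t$ solves the original (uncut) equation; uniqueness transfers back since any solution also obeys the same support bound.

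The main obstacle is purely bookkeeping rather than conceptual: matching the hypotheses of the borrowed lemmas requires care because $\mathcal{X}[\mu]$ is \emph{not} globally bounded or globally Lipschitz on all of $\R^d$ (the exponential weights blow up), so one must either work on the time-dependent invariant ball $B(0,R_t)$ or introduce a cutoff and then argue the cutoff is never felt. Getting the quantitative constants right — showing the denominator stays bounded away from zero and that the Lipschitz constant of $\mathcal{X}[\mu]$ depends on $\mu$ only through the radius of its support — is the technical heart, but it is a direct estimate on the quotient in \eqref{e:vectorfield}. Everything else is a citation of Lemma \ref{d=flow} and Proposition \ref{prop: equivalence Lagrangien/eulerien }.
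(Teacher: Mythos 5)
Your plan treats the problem as if the velocity field were prescribed, and this is where the gap lies. Lemma \ref{d=flow} and Proposition \ref{prop: equivalence Lagrangien/eulerien } apply to a \emph{given} Borel velocity field $t\mapsto \mathcal{X}[\mu_t]$, i.e.\ to a curve of measures that is already known. In Lemma \ref{lem: representation of the continuity equation} the curve $\mu_t$ is the unknown: the field driving the flow is $\mathcal{X}[\mu_t]$ with $\mu_t=(X_t)_\sharp\mu_0$, so defining the flow requires the solution and vice versa. Your write-up never breaks this circularity. The intended argument (and the reason the paper records the estimate \eqref{e:lipinmu} alongside \eqref{e:bddinx}--\eqref{e:lipinx}) is that one needs the Lipschitz dependence of the kernel on the measure, $\|\mathcal{X}[\mu]-\mathcal{X}[\nu]\|_{L^\infty(B(0,R))}\leq C_2(R)W_2(\mu,\nu)$, to run a fixed-point/Picard iteration on curves in $C^0([0,T];\mathcal{P}_c(\R^d))$ equipped with the Wasserstein metric (for existence of the self-consistent solution when $\mu_0$ is a general compactly supported measure), and the same estimate in a Gr\"onwall argument to compare two putative solutions driven by the \emph{different} fields $\mathcal{X}[\mu_t]$ and $\mathcal{X}[\nu_t]$ (for uniqueness at the level of measures). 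Without \eqref{e:lipinmu} neither the existence for general $\mu_0\in\mathcal{P}_c(\R^d)$ nor the claim ``uniqueness transfers back'' follows from the linear theory you cite; your support bound only shows that any solution stays in a fixed ball, not that two solutions coincide.

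A secondary, more minor point: the obstacle you identify as ``the technical heart'' is not actually there. Because the softmax weights are normalized and $\mu$ is supported in $B(0,R)$, the kernel $\mathcal{X}[\mu](x)$ is a weighted average of $Vy$ over $y\in B(0,R)$, hence globally bounded by $\|V\|_\op R$ and globally Lipschitz in $x$ with constant $2\|Q^\top K\|_\op\|V\|_\op R^2$ on all of $\R^d$; this is exactly \eqref{e:bddinx} and \eqref{e:lipinx} of Lemma \ref{lem: vectorfield.properties}, so no cutoff or restriction to an invariant ball is needed (the a priori radius $R_t=Re^{\|V\|_\op t}$ only enters to make the constants time-dependent and to check the integrability hypothesis \eqref{assump: flow regularity} on bounded time intervals). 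Fixing your proof therefore means keeping your support estimate and the citations of Lemma \ref{d=flow} and Proposition \ref{prop: equivalence Lagrangien/eulerien }, but adding the contraction/stability step based on \eqref{e:lipinmu} that resolves the dependence of the field on the unknown measure.
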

\subsection{Control of particles}
In this subsection, we state some results concerning some global results about the rescaled particles behavior. These results are taken from \citet{geshkovski2023emergence}. These results allow some global control over particles, and highlight some monotonicity properties of token trajectories which are crucial in the following.
\begin{lemma} \label{lem: general lemma monotone}
Suppose $k\in[d]$ is such that $\lambda_k\geq 0$. Then $t\mapsto \max_{j\in[n]} \varphi^*_k(z_j(t))$  and $t\mapsto \max_{j\in[N]} \varphi^{*}_k(z_j(t))$ 
is a non-increasing and bounded function, and $t\mapsto \min_{j\in[n]} \varphi^*_k(z_j(t))$ 
and $t\mapsto \min_{j\in[N]} \varphi^{*}_k(z_j(t))$ 
is a non-decreasing and bounded function.
In particular, $t\mapsto \varphi^{*}_k(z_i(t))$ is uniformly bounded as a function on $[0,+\infty)$ for any $i\in[n]$.
\end{lemma}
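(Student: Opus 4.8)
The plan is to reduce the statement to a scalar differential inequality along each dual coordinate $\varphi_k^*$. First I would recall that the rescaled tokens solve \eqref{d: self-attention dynamics z}, i.e.
\begin{equation*}
\dot z_i(t)=\sum_{j=1}^n P_{ij}(t)\,V\bigl(z_j(t)-z_i(t)\bigr),
\end{equation*}
where the weights satisfy $P_{ij}(t)\ge 0$ and $\sum_{j} P_{ij}(t)=1$. Applying the dual functional $\varphi_k^*$, which satisfies $\varphi_k^*\circ V=\lambda_k\,\varphi_k^*$ since $\{\varphi_j\}$ is the eigenbasis of $V$ and $\{\varphi_j^*\}$ its dual basis, gives the scalar system
\begin{equation*}
\frac{\diff}{\diff t}\,\varphi_k^*(z_i(t))=\lambda_k\sum_{j=1}^n P_{ij}(t)\bigl(\varphi_k^*(z_j(t))-\varphi_k^*(z_i(t))\bigr),\qquad i\in[n].
\end{equation*}

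The core step is to control $M(t):=\max_{j}\varphi_k^*(z_j(t))$. Each $t\mapsto\varphi_k^*(z_j(t))$ is $C^1$ by Cauchy--Lipschitz, so on any compact interval $M$ is a maximum of finitely many $C^1$ functions, hence locally Lipschitz, hence absolutely continuous and differentiable almost everywhere. At a point $t_0$ of differentiability, pick an index $i$ with $\varphi_k^*(z_i(t_0))=M(t_0)$; since $M(t_0+h)\ge\varphi_k^*(z_i(t_0+h))$ for every $h$, dividing by $h<0$ and letting $h\to 0^-$ yields $M'(t_0)\le\frac{\diff}{\diff t}\varphi_k^*(z_i(t))\big|_{t=t_0}$. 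The right-hand side equals $\lambda_k\sum_{j} P_{ij}(t_0)\bigl(\varphi_k^*(z_j(t_0))-M(t_0)\bigr)$, which is $\le 0$ because $\lambda_k\ge 0$, $P_{ij}(t_0)\ge 0$, and $\varphi_k^*(z_j(t_0))\le M(t_0)$ for all $j$. An absolutely continuous function with almost-everywhere non-positive derivative is non-increasing, so $M$ is non-increasing. For $m(t):=\min_{j}\varphi_k^*(z_j(t))$ the same one-sided limit $h\to 0^-$ taken at a minimizing index gives $m'(t_0)\ge\lambda_k\sum_{j} P_{ij}(t_0)\bigl(\varphi_k^*(z_j(t_0))-m(t_0)\bigr)\ge 0$, so $m$ is non-decreasing. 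Exactly the same computation applies to the family indexed by $[N]$.

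Finally, combining the two monotonicities gives, for every $t\ge 0$ and every $i$,
\begin{equation*}
\min_{j}\varphi_k^*(z_j(0))\le m(t)\le\varphi_k^*(z_i(t))\le M(t)\le\max_{j}\varphi_k^*(z_j(0)),
\end{equation*}
which is the asserted uniform boundedness of $t\mapsto\varphi_k^*(z_i(t))$ on $[0,+\infty)$; in particular the $\max$ and $\min$ curves are bounded, being monotone and trapped between these two constants. The only real obstacle is the technical point of differentiating $\max$/$\min$ of finitely many trajectories, which the one-sided (Dini-derivative) argument above dispatches without needing a full envelope theorem; a secondary point worth stating explicitly is the identity $\varphi_k^*\circ V=\lambda_k\varphi_k^*$, which uses the diagonalizability of $V$ in the basis $\{\varphi_j\}$ fixed in the notation section (for the indices $k$ with $\lambda_k\ge 0$ this is precisely the structure invoked). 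This argument follows \citet{geshkovski2023emergence}.
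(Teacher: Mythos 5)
Your proof is correct and follows essentially the same route as the paper's: apply $\varphi_k^*$ to the rescaled dynamics, use $\varphi_k^*\circ V=\lambda_k\varphi_k^*$ and the row-stochasticity of $P$, and argue at an extremizing index that the max is non-increasing and the min is non-decreasing, which traps every $\varphi_k^*(z_i(t))$ between its initial extremes. The only difference is that you make explicit the Dini-derivative/absolute-continuity justification for differentiating the max/min of finitely many trajectories, a technical point the paper's proof passes over silently; this is a welcome refinement, not a change of method.
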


\begin{proof}
For the sake of completeness, we reproduce the proof presented in \citet{geshkovski2023emergence}. We focus on proving the results for the real part, and the results for the imaginary part follow analogously. 
For any $k\in[d]$ and any $t\geq0$, set
\begin{equation*}
\alpha_k(t)=\min_{j\in[n]} \varphi^*_k(z_j(t)), \qquad \beta_k(t)=\max_{j\in[n]} \varphi^*_k(z_j(t)).
\end{equation*}
Let $i\in[n]$ be an index such that $\alpha_k(t)=\varphi^*_k(z_i(t))$. 
Then we have
\begin{align*}
\frac{\de}{\de t}\varphi^*_k(z_i(t))&= \sum_{j=1}^n P_{ij}(t)\varphi^*_k\left(V(z_j(t)-z_i(t))\right)\\
&=\lambda_k\sum_{j=1}^n P_{ij}(t)(\varphi^*_k(z_j(t))-\varphi^*_k(z_i(t)))\geq 0
\end{align*}
where the last inequality stems from the fact that $\lambda_k\geq 0$ and the choice of index $i$. This proves that $\alpha_k(\cdot)$ is non-decreasing, as desired. 
Arguing similarly, one finds that $\beta_k(\cdot)$ is non-increasing. As a consequence, $\alpha_k(0)\leq \alpha_k(t)\leq\beta_k(t)\leq\beta_k(0)$ for any $t\geq0$, which shows that $\alpha_k(\cdot)$ and $\beta_k(\cdot)$ are bounded.
\end{proof}

\begin{corollary}\label{c:bounded}
If $V$ only has real non-negative eigenvalues, then $z_i(\cdot)\in L^\infty([0,+\infty))$.
all trajectories $t\mapsto z_i(t)$ are uniformly bounded in time.
\end{corollary}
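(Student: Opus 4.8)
The plan is to read this off directly from Lemma~\ref{lem: general lemma monotone}. The hypothesis that $V$ has only real non-negative eigenvalues says precisely that $\lambda_k\geq 0$ for \emph{every} $k\in[d]$, so the monotonicity statement of Lemma~\ref{lem: general lemma monotone} is available in all spectral directions at once. Thus, first I would fix a token index $i\in[n]$ and a direction $k\in[d]$ and invoke the lemma: $t\mapsto \varphi^*_k(z_i(t))$ stays sandwiched between $\alpha_k(t)=\min_{j}\varphi^*_k(z_j(t))$ and $\beta_k(t)=\max_{j}\varphi^*_k(z_j(t))$, and these in turn are monotone, so $\alpha_k(0)\leq \varphi^*_k(z_i(t))\leq \beta_k(0)$ for all $t\geq 0$. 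Hence each coordinate of $z_i(t)$ in the eigenbasis is uniformly bounded in time, by a constant depending only on the initial configuration.

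Next I would reassemble $z_i(t)$ from its coordinates. Writing $z_i(t)=\sum_{k=1}^d \varphi^*_k(z_i(t))\,\varphi_k$ and applying the triangle inequality gives
\begin{equation*}
\|z_i(t)\|\leq \sum_{k=1}^d \bigl|\varphi^*_k(z_i(t))\bigr|\,\|\varphi_k\|\leq \sum_{k=1}^d \max\{|\alpha_k(0)|,|\beta_k(0)|\}\,\|\varphi_k\|,
\end{equation*}
and the right-hand side is a finite constant independent of $t$; taking in addition the maximum over $i\in[n]$ of the initial data makes it uniform across tokens. This is exactly the assertion that every trajectory $t\mapsto z_i(t)$ lies in $L^\infty([0,+\infty))$ and is bounded uniformly in time.

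There is essentially no obstacle here, since the statement is an immediate corollary; the only point deserving a word of care is that the argument uses $\{\varphi_1,\dots,\varphi_d\}$ as a genuine basis of $\R^d$ together with its dual $\{\varphi^*_1,\dots,\varphi^*_d\}$, i.e.\ it presupposes that $V$ is diagonalizable, which is the standing convention under which Lemma~\ref{lem: general lemma monotone} was stated. If one insisted on allowing nontrivial Jordan blocks, the monotonicity computation in Lemma~\ref{lem: general lemma monotone} would no longer close in a single step, and one would instead have to argue by induction along the $V$-invariant flag of generalized eigenspaces, controlling the leading coordinate first and then each lower one given uniform bounds on the ones above it; but under the conventions in force this does not arise, and the corollary follows at once.
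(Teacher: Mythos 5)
Your proof is correct and follows exactly the route the paper intends: the corollary is stated as an immediate consequence of Lemma~\ref{lem: general lemma monotone}, namely that when every eigenvalue of $V$ is real and non-negative each dual coordinate $\varphi_k^*(z_i(t))$ stays trapped between its monotone initial extremes, and summing over the eigenbasis bounds $\|z_i(t)\|$ uniformly in $t$. Your remark about implicitly assuming $V$ diagonalizable matches the paper's standing convention, so there is nothing further to add.
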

\section{Proofs of results in \cref{sec: stability result parameters}}\label{sec: proof stability}
\subsection{Bound on attention kernel with respect to attention matrices perturbations}
We remind the results from concerning the attention kernels, and its properties regarding Lipschitz constants, boundness, and the norm of its gradients.
\begin{lemma}{(Lemma 6.5 from \citet{geshkovski2023emergence})} \label{lem: vectorfield.properties}
    For any $R>0$ there exists a constant $C_1(R)>0$ such that for any $\mu, \nu\in\mathcal{P}_c(\mathbb{R}^d)$ with support in $B(0, R)$, 
    \begin{align}
    \|\mathcal{X}[\mu]\|_{L^\infty(\mathbb{R}^d; \mathbb{R}^d)}&\leq \|V\|_\op R, \label{e:bddinx}\\
    \|\nabla_x \mathcal{X}[\mu]\|_{L^\infty(\mathbb{R}^d; \mathbb{R}^{d\times d})}&\leq 2\|Q^\top K\|_\op \|V\|_\op R^2 \label{e:lipinx}\\
\|\mathcal{X}[\mu](\cdot)-\mathcal{X}[\nu](\cdot)\|_{L^\infty(B(0,R); \mathbb{R}^d)}&\leq C_2(R)W_2(\mu,\nu).\label{e:lipinmu}
\end{align}

Besides, we can give a bound on the constant $C_2(R)$ which will be useful for the proofs
\begin{equation}\label{e: boundlipinmu}
    C_2(R)\leq 2R^{2}\|V\|_{\mathrm{op}}\|A\|_{\mathrm{op}}\left(1+e^{2R^{2}\|A\|_{\mathrm{op}}}\right).
\end{equation}
\end{lemma}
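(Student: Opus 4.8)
The plan is to establish the three displayed bounds in turn, exploiting the representation of the attention kernel as a reweighted average: write $\mathcal{X}[\mu](x)=\int_{\R^d}Vy\,\diff\pi^x_\mu(y)$, where $\pi^x_\mu$ is the probability measure with $\diff\pi^x_\mu(y)=\frac{e^{\langle Qx,Ky\rangle}}{\int e^{\langle Qx,Kz\rangle}\diff\mu(z)}\,\diff\mu(y)$ (the softmax reweighting of $\mu$). Since $\pi^x_\mu$ is supported in $\supp(\mu)\subset B(0,R)$, Jensen's inequality gives at once $\|\mathcal{X}[\mu](x)\|\leq\int\|Vy\|\,\diff\pi^x_\mu(y)\leq\|V\|_\op R$ uniformly in $x$, which is \eqref{e:bddinx}. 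For \eqref{e:lipinx}, differentiating under the integral sign (licit since $\mu$ is compactly supported and the integrand is smooth) one checks that $\nabla_x\log\!\int e^{\langle Qx,Ky\rangle}\diff\mu(y)=Q^\top K\,\mathbb{E}_{\pi^x_\mu}[Y]$, whence
\[
\nabla_x\mathcal{X}[\mu](x)=V\,\Sigma^x_\mu\,K^\top Q,\qquad \Sigma^x_\mu:=\mathbb{E}_{\pi^x_\mu}\big[(Y-\mathbb{E}Y)(Y-\mathbb{E}Y)^\top\big].
\]
As $\supp(\pi^x_\mu)\subset B(0,R)$ one has $\|\Sigma^x_\mu\|_\op\leq\|\mathbb{E}[YY^\top]\|_\op+\|\mathbb{E}Y\|^2\leq 2R^2$, so $\|\nabla_x\mathcal{X}[\mu](x)\|_\op\leq 2\|Q^\top K\|_\op\|V\|_\op R^2$ (using $\|K^\top Q\|_\op=\|Q^\top K\|_\op$), uniformly in $x$.

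For \eqref{e:lipinmu} and the explicit constant \eqref{e: boundlipinmu}, fix $x\in B(0,R)$ and write $\mathcal{X}[\mu](x)=N_\mu(x)/D_\mu(x)$ with $N_\mu(x)=\int e^{\langle Qx,Ky\rangle}Vy\,\diff\mu(y)$ and $D_\mu(x)=\int e^{\langle Qx,Ky\rangle}\diff\mu(y)$, and decompose
\[
\mathcal{X}[\mu](x)-\mathcal{X}[\nu](x)=\frac{N_\mu(x)-N_\nu(x)}{D_\mu(x)}+\mathcal{X}[\nu](x)\,\frac{D_\nu(x)-D_\mu(x)}{D_\mu(x)}.
\]
On $B(0,R)$ the integrands $y\mapsto e^{\langle Qx,Ky\rangle}$ and $y\mapsto e^{\langle Qx,Ky\rangle}Vy$ are Lipschitz with constants at most $e^{\|A\|_\op R^2}\|A\|_\op R$ and $e^{\|A\|_\op R^2}\|V\|_\op(1+\|A\|_\op R^2)$ respectively, obtained by differentiating in $y$ and bounding $|\langle Qx,Ky\rangle|\leq\|A\|_\op R^2$ for $x,y\in B(0,R)$. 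Kantorovich--Rubinstein duality together with $W_1\leq W_2$ (Proposition \eqref{prop: Growth of the W}) then bounds $|D_\mu(x)-D_\nu(x)|$ and $\|N_\mu(x)-N_\nu(x)\|$ by these Lipschitz constants times $W_2(\mu,\nu)$. Combining this with the uniform lower bound $D_\mu(x)\geq e^{-\|A\|_\op R^2}$ and the sup bound $\|\mathcal{X}[\nu](x)\|\leq\|V\|_\op R$ already established, and collecting the exponential factors, yields a bound of the announced form $C_2(R)W_2(\mu,\nu)$ with $C_2(R)$ of the order stated in \eqref{e: boundlipinmu}.

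The soft parts are \eqref{e:bddinx} and \eqref{e:lipinx}, which follow formally once $\mathcal{X}[\mu](x)$ is read as a (reweighted) average and its $x$-derivative as a covariance. The one genuinely delicate step is \eqref{e:lipinmu}: one must lower-bound the softmax normalizer $D_\mu(x)$ uniformly over $x\in B(0,R)$ and keep careful track of the potentially large exponential factors $e^{\|A\|_\op R^2}$ entering both numerator and denominator estimates --- it is precisely the product of two such factors that forces the $e^{2R^2\|A\|_\op}$ appearing in \eqref{e: boundlipinmu}.
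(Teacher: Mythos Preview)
The paper does not give its own proof of this lemma; it is quoted from \citet{geshkovski2023emergence}, with only the explicit constant \eqref{e: boundlipinmu} appended (also without argument). Your proof is correct and follows the standard route: reading $\mathcal{X}[\mu](x)$ as the expectation $\mathbb{E}_{\pi^x_\mu}[VY]$ immediately gives \eqref{e:bddinx}, and the covariance identity $\nabla_x\mathcal{X}[\mu](x)=V\,\Sigma^x_\mu\,K^\top Q$ together with $\|\Sigma^x_\mu\|_\op\le 2R^2$ yields \eqref{e:lipinx}. For \eqref{e:lipinmu} your numerator/denominator decomposition, combined with Kantorovich--Rubinstein duality, $W_1\le W_2$, and the uniform lower bound $D_\mu(x)\ge e^{-\|A\|_\op R^2}$, is precisely the natural argument.

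One minor point worth flagging: the explicit constant your estimates actually produce is
\[
\|V\|_\op\, e^{2\|A\|_\op R^2}\bigl(1+2\|A\|_\op R^2\bigr),
\]
which for every $\|A\|_\op R^2>0$ is strictly larger than the constant in \eqref{e: boundlipinmu}; indeed, setting $a=\|A\|_\op R^2$ and dividing by $\|V\|_\op$, the difference of the two expressions is $e^{2a}-2a>0$. So while you do establish \eqref{e:lipinmu} with an explicit $C_2(R)$ of the same order, your argument as written does not recover the precise inequality \eqref{e: boundlipinmu}. You acknowledged this yourself (``of the order stated''), and the discrepancy is immaterial for every downstream use in the paper.
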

These inequalities allow us to define a unique solution to the continuity equation. This construction relies heavily on the fact that the Attention kernel are Lipschitz functions in the space of measures. We will need finest estimates of \eqref{e:lipinx},

\begin{lemma}\label{lem: vector field V bound}
    Let $(\Tilde{V},V)$ a couple of attention matrices, $\Tilde{Q}=Q$ and  $\Tilde{K}=K$. We denote $\mathcal{X} (\text{resp.}\Tilde{\mathcal{X}})$ the \emph{attention kernel} associated to the pre-trained matrices (resp. LoRA matrices). For any $R>0$ there exists a constant $C_1(R)>0$ such that for any $\mu, \nu\in\mathcal{P}_c(\mathbb{R}^d)$ with support in $B(0, R)$, 
    then, we have the following bound 
    \begin{equation}\label{ lemma: Kernel bound V}
         \|\mathcal{X}[\mu]-\Tilde{\mathcal{X}}[\nu] \|_{L^{\infty}(\R^{d},\R^{d})} \leq \|V -\Tilde{V}\|_{\mathrm{op}}R+ C_2(R)W_2(\mu,\nu) 
    \end{equation}
\end{lemma}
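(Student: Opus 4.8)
The plan is to reduce Lemma~\ref{lem: vector field V bound} to the triangle inequality, splitting the difference $\mathcal{X}[\mu]-\Tilde{\mathcal{X}}[\nu]$ into a term that compares the two \emph{kernels at the same measure} and a term that compares the \emph{same kernel at different measures}. Concretely, I would write, for any $x\in B(0,R)$,
\begin{equation*}
\left\|\mathcal{X}[\mu](x)-\Tilde{\mathcal{X}}[\nu](x)\right\| \leq \left\|\mathcal{X}[\mu](x)-\Tilde{\mathcal{X}}[\mu](x)\right\| + \left\|\Tilde{\mathcal{X}}[\mu](x)-\Tilde{\mathcal{X}}[\nu](x)\right\|.
\end{equation*}
The second term is handled immediately by Lemma~\ref{lem: vectorfield.properties}, inequality~\eqref{e:lipinmu}, applied to the kernel $\Tilde{\mathcal{X}}$ (which has the same $Q,K$, hence the same constant $C_2(R)$ up to the dependence on $A=K^\top Q$, which is unchanged since $\Tilde Q=Q,\Tilde K=K$): it is bounded by $C_2(R)W_2(\mu,\nu)$.

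The first term is where the actual work is, but it is short. Since $\Tilde Q = Q$ and $\Tilde K = K$, the two kernels $\mathcal{X}[\mu]$ and $\Tilde{\mathcal{X}}[\mu]$ share \emph{exactly the same softmax weights}: writing $p_\mu(x,y) = e^{\langle Qx,Ky\rangle}/\int e^{\langle Qx,Ky'\rangle}\diff\mu(y')$, we have
\begin{equation*}
\mathcal{X}[\mu](x) = \int_{\R^d} p_\mu(x,y)\, Vy \diff\mu(y), \qquad \Tilde{\mathcal{X}}[\mu](x) = \int_{\R^d} p_\mu(x,y)\, \Tilde V y \diff\mu(y),
\end{equation*}
so that $\mathcal{X}[\mu](x)-\Tilde{\mathcal{X}}[\mu](x) = \int p_\mu(x,y)\,(V-\Tilde V)y\diff\mu(y)$. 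Taking norms, using that $p_\mu(x,\cdot)$ is a probability density against $\mu$, that $\|(V-\Tilde V)y\|\leq \|V-\Tilde V\|_{\op}\|y\|$, and that $\mathrm{supp}(\mu)\subset B(0,R)$ so $\|y\|\leq R$ $\mu$-a.e., gives $\|\mathcal{X}[\mu](x)-\Tilde{\mathcal{X}}[\mu](x)\| \leq \|V-\Tilde V\|_{\op} R$, uniformly in $x$. Combining the two bounds and taking the supremum over $x$ (noting the first bound in fact holds on all of $\R^d$) yields \eqref{ lemma: Kernel bound V}.

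There is essentially no obstacle here — the key simplifying observation is that perturbing only $V$ leaves the attention weights untouched, so the comparison of kernels collapses to a pointwise linear estimate rather than requiring any control of how the softmax reacts to a parameter change. The only mild care needed is bookkeeping: making sure the constant $C_2(R)$ invoked for $\Tilde{\mathcal{X}}$ is legitimately the same one from Lemma~\ref{lem: vectorfield.properties} (it is, because $C_2(R)$ depends on $Q,K,V$ only through $\|A\|_{\op}$ and $\|V\|_{\op}$ via \eqref{e: boundlipinmu}, and one may simply take the larger of the constants associated with $V$ and $\Tilde V$, or note $\|\Tilde V\|_{\op}$ is controlled once $\|V-\Tilde V\|_{\op}$ is). This lemma is then exactly the input needed to verify hypothesis~\eqref{e:hyp ine Kernel propre} of Proposition~\ref{prop: Wass pert} with $C_1(R) = \|V-\Tilde V\|_{\op}R$, which is presumably how it will be used in the corollary that follows.
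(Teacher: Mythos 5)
Your proof is correct and follows essentially the same route as the paper's: a triangle-inequality split into a ``same measure, $V$ versus $\Tilde V$'' term (bounded by $\|V-\Tilde V\|_{\op}R$ because the softmax weights are identical) and a ``same kernel, $\mu$ versus $\nu$'' term bounded via \eqref{e:lipinmu}. The only cosmetic difference is the pivot: the paper compares the two kernels at $\nu$ and applies the Lipschitz-in-measure bound to the original kernel $\mathcal{X}$ (so $C_2(R)$ is exactly the constant of Lemma \ref{lem: vectorfield.properties}), whereas you pivot at $\mu$ and invoke it for $\Tilde{\mathcal{X}}$ — the bookkeeping point you already flag and resolve correctly.
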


\begin{proof}
Let $x>0$, we have
 \begin{align*}
     \|\mathcal{X}[\mu](x)-\Tilde{\mathcal{X}}[\nu](x) \|&\leq \|\mathcal{X}[\nu](x)-\Tilde{\mathcal{X}}[\nu](x) \| +\|\mathcal{X}[\mu](x)-\mathcal{X}[\nu](x) \| \\
     &\leq \left\|\int_{\R^{d}}\frac{e^{\langle Qx, Ky\rangle}}{\int_{\R^{d}}e^{\langle Qx, Ky\rangle}d\nu(y)}(V-\Tilde{V})y d\nu(y)\right\|+ C_2(R)W_2(\mu,\nu) \\
     & \leq \|V -\Tilde{V}\|_{\mathrm{op}} \times  \int_{\R^{d}}\frac{e^{\langle Qx, Ky\rangle}}{\int_{\R^{d}}e^{\langle Qx, Ky\rangle}d\nu(y)}\|y\|d\nu(y) + C_2(R)W_2(\mu,\nu) \\
     &\leq \|V -\Tilde{V}\|_{\mathrm{op}}R+ C_2(R)W_2(\mu,\nu)
 \end{align*}
where in the second line we have used the inequality \eqref{e:lipinmu} to bound the right term of the expression, at the third line we use the fact that $\nu$ is supported in $B(0,R)$ to bound the left term by $R$.
\end{proof}
 We can give a statement consisting by bounding differences between the attention kernels with respect to $Q,K$ variations.
\begin{lemma}\label{lem: vector field Q,K bound}
Let $(Q,K)$ and $(\Tilde{Q},\Tilde{K})$ two couple of attention matrices, and let $V$ a Value matrix. We denote $\mathcal{X} (\text{resp.}\Tilde{\mathcal{X}})$ the \emph{attention kernel} associated to the pre-trained matrices (resp. LoRA matrices). For any $R>0$ there exists a constant $C_1(R)>0$ such that for any $\mu, \nu\in\mathcal{P}_c(\mathbb{R}^d)$ with support in $B(0, R)$, then, we have the following bound
\begin{equation}\label{lem: Attention kernel lips Q}
    \| \mathcal{X}[\mu]-\Tilde{\mathcal{X}}[\nu]\|_{L^{\infty}(\R^{d},\R^{d})}\leq 4R^{3}\|V\|_{\mathrm{op}}\|A-\Tilde{A}\|_{\mathrm{op}}+C_2(R)W_2(\mu,\nu).
\end{equation}
\end{lemma}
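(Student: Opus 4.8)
The plan is to mimic the proof of Lemma~\ref{lem: vector field V bound}, splitting the difference $\mathcal{X}[\mu]-\Tilde{\mathcal{X}}[\nu]$ via the triangle inequality into a term measuring the change in the attention matrices (with the measure held fixed at $\nu$) and a term measuring the change in the measure (with the kernel held fixed). The second term is handled immediately by \eqref{e:lipinmu}, contributing $C_2(R)W_2(\mu,\nu)$. So the work is entirely in bounding $\|\mathcal{X}[\nu](x)-\Tilde{\mathcal{X}}[\nu](x)\|$ uniformly in $x$, where both kernels share the same Value matrix $V$ and differ only in the exponents $\langle Qx,Ky\rangle = \langle x, A^\top y\rangle$ versus $\langle x,\Tilde{A}^\top y\rangle$.

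First I would write $\mathcal{X}[\nu](x)$ as a weighted average $\int Vy\, w(y)\diff\nu(y)$ with weights $w(y)=e^{\langle x,A^\top y\rangle}/\int e^{\langle x,A^\top y'\rangle}\diff\nu(y')$, and similarly $\Tilde w$ with $\Tilde A$. Since $\nu$ is supported in $B(0,R)$ and $\|Vy\|\leq \|V\|_{op}R$, it suffices to bound $\int \|Vy\|\,|w(y)-\Tilde w(y)|\diff\nu(y)\leq \|V\|_{op}R\cdot \|w-\Tilde w\|_{L^1(\nu)}$. The key estimate is therefore: two softmax-type probability weights built from exponents $f(y)$ and $\Tilde f(y)$ with $\sup_y|f(y)-\Tilde f(y)|\leq \eta$ satisfy $\|w-\Tilde w\|_{L^1(\nu)}\leq 2(e^{\eta}-1)$, or a cruder bound like $\leq 2\eta e^{\eta}$ — this is a standard Lipschitz-in-exponent property of the Gibbs/softmax map. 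Here $|f(y)-\Tilde f(y)| = |\langle x,(A-\Tilde A)^\top y\rangle| \leq \|x\|\,\|A-\Tilde A\|_{op}\,\|y\|\leq R^2\|A-\Tilde A\|_{op}$, so $\eta = R^2\|A-\Tilde A\|_{op}$.

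Combining, $\|\mathcal{X}[\nu](x)-\Tilde{\mathcal{X}}[\nu](x)\|\leq \|V\|_{op}R\cdot 2(e^{R^2\|A-\Tilde A\|_{op}}-1)$, and one would then bound $e^{\eta}-1$ to land on the clean $4R^3\|V\|_{op}\|A-\Tilde A\|_{op}$ — which requires either assuming $R^2\|A-\Tilde A\|_{op}$ is bounded (so $e^\eta - 1 \leq c\eta$ with $c\leq 2$) or interpreting the stated inequality as holding in that small-perturbation regime. I expect the main obstacle to be precisely this last step: getting the constant exactly $4$ and a clean linear-in-$\|A-\Tilde A\|_{op}$ form, rather than the naturally-arising $\|V\|_{op}R\,(e^{R^2\|A-\Tilde A\|_{op}}-1)$, presumably relies on an implicit smallness assumption on the perturbation (consistent with the LoRA low-norm setting and with the $C_2(R)$ bound in \eqref{e: boundlipinmu}). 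A secondary technical point is justifying the softmax-Lipschitz estimate carefully for general compactly supported $\nu$ rather than finite sums, but this follows by the same computation (bounding numerator and denominator ratios) with integrals in place of sums.
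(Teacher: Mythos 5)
Your overall decomposition is the same as the paper's: triangle inequality to split off $\|\mathcal{X}[\mu]-\mathcal{X}[\nu]\|$, which \eqref{e:lipinmu} controls by $C_2(R)W_2(\mu,\nu)$, and then a fixed-measure comparison of the two softmax weights sharing the same $V$. The gap is in your key estimate. You invoke a generic Gibbs-stability bound of the form $\|w-\tilde w\|_{L^1(\nu)}\lesssim e^{\eta}-1$ with $\eta=R^2\|A-\Tilde A\|_{\op}$, which only yields $\|\mathcal{X}[\nu](x)-\Tilde{\mathcal{X}}[\nu](x)\|\leq C R\|V\|_{\op}\bigl(e^{cR^2\|A-\Tilde A\|_{\op}}-1\bigr)$, and you then conclude that the stated linear bound with constant $4$ must rest on an implicit smallness assumption on $\|A-\Tilde A\|_{\op}$. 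That conclusion is wrong: the lemma holds for all perturbation sizes, and the paper proves it without any smallness. (As a side remark, even your intermediate constant is optimistic: since both the exponent and the partition function shift by up to $\eta$, the ratio $w/\tilde w$ can approach $e^{2\eta}$, so the generic bound is of the form $2(e^{2\eta}-1)$.)

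The missing idea is to keep the exponentials attached to the measure rather than taking a uniform sup over the exponent shift. By the mean value theorem one has, pointwise in $y$, $\bigl|e^{\langle Ax,y\rangle}-e^{\langle \Tilde Ax,y\rangle}\bigr|\leq \bigl|\langle (A-\Tilde A)x,y\rangle\bigr|\,\bigl(e^{\langle Ax,y\rangle}+e^{\langle \Tilde Ax,y\rangle}\bigr)$, i.e. the perturbation enters linearly and the exponential prefactor is exactly the integrand of the partition functions. Integrating gives \eqref{ineg: Attention kernel perturb}, $\lvert \Z_{A}(x)-\Z_{\Tilde{A}}(x)\rvert\leq \|A-\Tilde A\|_{\op}R^{2}(\Z_{A}(x)+\Z_{\Tilde{A}}(x))$, and the same pointwise bound handles the numerator; after dividing by $\Z_{A}(x)\Z_{\Tilde A}(x)$ these factors cancel, leaving $2R^{3}\|V\|_{\op}\|A-\Tilde A\|_{\op}\,\frac{\Z_{A}+\Z_{\Tilde A}}{\Z_{\Tilde A}}$. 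The leftover ratio is then controlled by $2$ using the symmetry of $\|\mathcal{X}[\nu](x)-\Tilde{\mathcal{X}}[\nu](x)\|$ under swapping $A\leftrightarrow\Tilde A$ (take whichever of the two symmetric estimates has the larger partition function in the denominator), which gives exactly $4R^{3}\|V\|_{\op}\|A-\Tilde A\|_{\op}$ unconditionally. So your route, as written, proves a strictly weaker statement, and the repair is not an added hypothesis but this sharper, measure-weighted mean-value argument.
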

\begin{proof}
    First, we bound the difference between the two normalization constants $Z_{A}$ and $Z_{\Tilde{A}}$.
    \begin{align*}
        \left\lvert Z_{A}(x)-Z_{\Tilde{A}}(x)\right\rvert&=\left\lvert \int_{B(0, R)} e^{\langle Ax,y\rangle} -e^{\langle \Tilde{A}x,y\rangle}d\mu(y)\right\rvert\\
        &\leq \int_{B(0, R)} \left\lvert e^{\langle Ax,y\rangle} -e^{\langle \Tilde{A}x,y\rangle}\right\rvert d\mu(y)\\
        &\leq \int_{B(0, R)\cap \langle (\Tilde{A}-A)x,y\rangle<0} e^{\langle Ax,y\rangle}\left\lvert 1 -e^{\langle (\Tilde{A}-A)x,y\rangle}\right\rvert d\mu(y)\\
        &+\int_{B(0, R)\cap \langle (\Tilde{A}-A)x,y\rangle>0} e^{\langle \Tilde{A}x,y\rangle}\left\lvert 1 -e^{\langle A-\Tilde{A})x,y\rangle}\right\rvert d\mu(y) \\
        &\leq \int_{B(0, R)} e^{\langle Ax,y\rangle} \left\lvert \langle (\Tilde{A}-A)x,y\rangle\right\rvert d\mu(y)
        +\int_{B(0, R)\cap \langle (\Tilde{A}-A)x,y\rangle>0} e^{\langle \Tilde{A}x,y\rangle}\left\lvert \langle (\Tilde{A}-A)x,y\rangle \right\rvert d\mu(y)\\
        &\leq \| \Tilde{A}-A\|_{\mathrm{op}}\int_{B(0, R)} \left(e^{\langle Ax,y\rangle}+e^{\langle \Tilde{A}x,y\rangle}\right) \|x\| \|y\| d\mu(y)\\
        &\leq \| \Tilde{A}-A\|_{\mathrm{op}}R^{2} \left(Z_{A}(x)+Z_{\Tilde{A}}(x)\right)
    \end{align*}
where we used the mean value theorem at the third line to state that
\begin{equation}
    1 -e^{\langle (\Tilde{A}-A)x,y\rangle} =e^{c} \langle (\Tilde{A}-A)x,y\rangle, \qquad c \in [\langle (\Tilde{A}-A)x,y\rangle,0],
\end{equation}
 and using the fact that $\langle (\Tilde{A}-A)x,y\rangle$ is negative, then it yields that
\begin{equation}
    1 -e^{\langle (\Tilde{A}-A)x,y\rangle} \leq \left\langle (\Tilde{A}-A)x,y\right\rangle.
\end{equation}
So, we have proven
\begin{equation}\label{ineg: Attention kernel perturb}
    \lvert Z_{A}(x)-Z_{\Tilde{A}}(x)\rvert\leq \| \Tilde{A}-A\|_{\mathrm{op}}R^{2}\left( Z_{A}(x)+Z_{\Tilde{A}}(x) \right).
\end{equation}
Now, we prove the bound between the two attention kernels. Let $x \in B(0,R),$ we have
\begin{align*}
    \| \mathcal{X}[\mu](x)-\Tilde{\mathcal{X}}[\mu](x) \|&=\left\|\int_{B(0,R)} \left(\frac{e^{\langle Ax,y\rangle}}{Z_{A}(x)}-\frac{e^{\langle \Tilde{A}x,y\rangle}}{Z_{\Tilde{A}}(x)}\right) Vy d\mu(y)\right\| \\
    &\leq \|V\|_{\mathrm{op}}\int_{B(0,R)}\left\lvert\frac{e^{\langle Ax,y\rangle}}{Z_{A}(x)}-\frac{e^{\langle \Tilde{A}x,y\rangle}}{Z_{\Tilde{A}}(x)} \right\rvert \|y\| d\mu(y)\\
    &\leq R\|V\|_{\mathrm{op}}\int_{B(0,R)}\left\lvert\frac{e^{\langle Ax,y\rangle} Z_{\Tilde{A}}(x)-e^{\langle \Tilde{A}x,y\rangle} Z_{A}(x) }{Z_{A}(x)Z_{\Tilde{A}}(x)} \right\rvert d\mu(y)\\
    &\leq \frac{R\|V\|_{\mathrm{op}}}{Z_{A}(x)Z_{\Tilde{A}}(x)}  \int_{B(0,R)} \left\lvert e^{\langle Ax,y\rangle}\left( Z_{\Tilde{A}}(x)-Z_{A}(x)\right) \right\rvert \\
    &+ \left\lvert \left(e^{\langle Ax,y\rangle}-e^{\langle \Tilde{A}x,y\rangle}\right)Z_{A}(x) \right\rvert  d\mu(y)\\  
    &\leq 2\frac{R\|V\|_{\mathrm{op}}}{Z_{A}(x)Z_{\Tilde{A}}(x)} \|\Tilde{A}-A \|_{\mathrm{op}}R^{2}\left( Z_{A}(x)+Z_{\Tilde{A}}(x) \right)Z_{A}(x) \\
    &\leq 2R^{3}\|V\|_{\mathrm{op}}\|A-\Tilde{A}\|_{\mathrm{op}}\frac{{Z_{A}(x)+Z_{\Tilde{A}}(x)}}{Z_{\Tilde{A}}(x)}\\
    &\leq 2R^{3}\|V\|_{\mathrm{op}}\|A-\Tilde{A}\|_{\mathrm{op}}\left(1+\frac{\min\{Z_{A}(x),Z_{\Tilde{A}}(x)\}}{\max\{Z_{A}(x),Z_{\Tilde{A}}(x) \}} \right)\\
    &\leq 4R^{3}\|V\|_{\mathrm{op}}\|A-\Tilde{A}\|_{\mathrm{op}}.
\end{align*}
\noindent We used the bound given by \eqref{ineg: Attention kernel perturb} for the right term in the line $4$, and then we use symmetry concerning $A,\Tilde{A}$ to deduce the bound at last line. To get the desired statement, we apply the triangular inequality and \eqref{e:lipinmu}
\end{proof}
\begin{lemma}\label{lem: general bound vector field}
    Let $(Q,K,V)$ and $(\Tilde{Q},\Tilde{K},\Tilde{V})$ two triple of attention matrices, and let $V$ a Value matrix. We denote $\mathcal{X} (\text{resp.}\Tilde{\mathcal{X}})$ the \emph{attention kernel} associated to the pre-trained matrices (resp. LoRA matrices). For any $R>0$ there exists a constant $C_2(R)>0$ such that for any $\mu, \nu\in\mathcal{P}_c(\mathbb{R}^d)$ with support in $B(0, R)$, then, we have the following bound
\begin{equation}\label{lem: Attention kernel lips V}
    \| \mathcal{X}[\mu]-\Tilde{\mathcal{X}}[\nu]\|_{L^{\infty}(\R^{d},\R^{d})}\leq 2R^{3}\min\{\|V\|_{\mathrm{op}},\|\Tilde{V}\|_{\mathrm{op}}\}\|A-\Tilde{A}\|_{\mathrm{op}}+\|V-\Tilde{V}\|_{\mathrm{op}}R+C_2(R)W_2(\mu,\nu) .
\end{equation}
\end{lemma}
\begin{proof}
We bound the quantity $ \| \mathcal{X}[\mu](x)-\Tilde{\mathcal{X}}[\nu](x)\|$ for every $x \in \R^{d}$ in the following way,
    \begin{align*}
     \|\mathcal{X}[\mu](x)-\Tilde{\mathcal{X}}[\nu](x)\| &\leq \|\mathcal{X}[\nu](x)-\Tilde{\mathcal{X}}[\nu(x)] \| +\|\mathcal{X}[\mu(x)]-\mathcal{X}[\nu(x)] \| \\
     &\leq \left\|\int_{B(0,R)} \frac{e^{\langle Ax,y\rangle}}{Z_{A}(x)}Vy-\frac{e^{\langle \Tilde{A}x,y\rangle}}{Z_{\Tilde{A}}(x)} \Tilde{V}y d\mu(y)\right\| + \|V\|_\op C_2(R) W_1(\mu,\nu) \\
     &\leq \left\|\int_{\R^{d}}\frac{e^{\langle Ax,y\rangle}}{Z_{A}(x)}(V-\Tilde{V})y d\nu(y)\right\|+ \left\|\int_{B(0,R)}\left(\frac{e^{\langle Ax,y\rangle}}{Z_{A}(x)}-\frac{e^{\langle \Tilde{A}x,y\rangle}}{Z_{\Tilde{A}}(x)} \right)\Tilde{V}y d\mu(y)\right\| + C_2(R)W_2(\mu,\nu) \\
     & \leq \|V -\Tilde{V}\|_{\mathrm{op}}R+ 4R^{3}\|\Tilde{V}\|_{\mathrm{op}}\|A-\Tilde{A}\|_{\mathrm{op}}+C_2(R)W_2(\mu,\nu) \\
 \end{align*}
 Where in the third line, we used the lemmas \eqref{lem: vector field Q,K bound} and \eqref{lem: vector field V bound}. We remark that this inequality is symmetric to get the fact that we can take the minimum of the two bounds.
\end{proof}
\subsection{Bound on the divergence between two dynamics with general perturbation}\label{sec: stability prop proof}
We now seek to establish a bound between two different dynamics initialized with same tokens, or let say a measure which is $\mu_0$ but with different attention matrices. We denote $V,Q$ and $K$ (resp.$\Tilde{V},\Tilde{Q}$ and $\Tilde{K}$)  the three attention matrices of the pre-trained matrices (resp. LoRA matrices). Especially, we are interested in the setting in which $\Tilde{V},\Tilde{Q}$ and $\Tilde{K}$ are low rank perturbations of the pre-trained parameters. Motived by the LoRA paradigm, we want to enlight a bound between the two dynamics, and seek to find how many layers are necessary to distinguish between the tokens representation of the pre-trained dynamics and one of the LoRA dynamics. 

One thing to notice first, is that we know because of the result of \citet{geshkovski2023emergence} that in the $t\rightarrow +\infty$ limit, the behavior between the clustering of tokens are radically different (see for instance the figures \eqref{fig:PhaseTransition V_pert} or \eqref{fig: Phasetransition}). However, the rate of convergence towards the clusters is not clear. This work is focusing on these questions. 

We first prove a more general result which only need a uniform bound on the two different attention kernels with respect with two different measures.
\begin{proposition}\label{prop: Wass pert propre}
    Let $R>0$, $\mathcal{X},\Tilde{\mathcal{X}}$ two attention kernels. Suppose $\exists C_{1}(R),C_{2}(R)>0$ such that $\forall \mu,\nu \in \mathcal{P}_{c}(\R^{d})$ supported in $B(0,R)$,
    \begin{equation}\label{e:hyp ine Kernel}
        \left\|\mathcal{X}[\mu(t)]-\Tilde{\mathcal{X}}[\nu(t)]\right\|_{L^{\infty}(\R^{d},\R^{d})}\leq C_{1}(R) +C_2(R)W_{2}(\mu,\nu).  
    \end{equation}
    Then, $\forall (\mu_t)_{t\geq 0},(\nu_t)_{t\geq 0}$ solutions of 
     \begin{align}
   \partial_t\mu+\mathrm{div}(\mathcal{X}[\mu]\mu)= 0 
    \qquad \mu(0)=\mu_{0},\\
    \partial_t\nu+\mathrm{div}(\Tilde{\mathcal{X}}[\nu]\nu)=0\qquad \nu(0)=\mu_{0},
\end{align}
we have 
   \begin{equation}
           \sup_{s\leq t} W_{2}(\mu_s,\nu_s)^{2}\leq 2 C_{1}(R_t)^{2} \times e^{2C(t)e^{3K_t}},
\end{equation}
forall $t\geq 0$ where we define the following quantities
\begin{align}\label{CT: appendix}
    C(t)&=\int_{0}^{t} C_2(R_s)^{2} ds,\\ \label{KT:appen} 
    K_t&=2\|Q^\top K\|_\op \|V\|_\op M_0^{2} e^{2\|V\|_{\mathrm{op}}t}t,\\ \label{RT:append}
R_t&= M_0e^{\max\{ \| V\|_{\mathrm{op}},\|\Tilde{V}\|_{\mathrm{op}}\}t}.
\end{align}
\end{proposition}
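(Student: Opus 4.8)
The plan is to control the growth of $W_2(\mu_t,\nu_t)$ via a Grönwall-type argument, using the Lagrangian (flow) representation of both solutions. First I would invoke Lemma \ref{lem: representation of the continuity equation} (and Lemma \ref{d=flow}) to write $\mu_t=(X_t)_\sharp\mu_0$ and $\nu_t=(\tilde X_t)_\sharp\mu_0$, where $X_t,\tilde X_t$ are the flows of the velocity fields $\mathcal{X}[\mu_t]$ and $\tilde{\mathcal{X}}[\nu_t]$ respectively. The a priori bound \eqref{e:bddinx} gives $\|\mathcal{X}[\mu_t]\|_\infty\le\|V\|_\op R_t$ where $R_t$ is the support radius at time $t$; integrating the flow shows $\mathrm{supp}(\mu_t)\subset B(0,R_t)$ with $R_t$ as in \eqref{RT:append}, and similarly for $\nu_t$ (this is why the max of the two operator norms appears). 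So both measures stay supported in $B(0,R_t)$ for all $t$, which legitimizes applying hypothesis \eqref{e:hyp ine Kernel} with radius $R_t$.

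Next I would set up the differential inequality. Using the coupling $\pi_t=(X_t,\tilde X_t)_\sharp\mu_0$ between $\mu_t$ and $\nu_t$, we have $W_2(\mu_t,\nu_t)^2\le\int\|X_t(x)-\tilde X_t(x)\|^2\diff\mu_0(x)=:E(t)$. Differentiating in $t$,
\begin{equation*}
\tfrac12\tfrac{\diff}{\diff t}E(t)=\int\big\langle X_t(x)-\tilde X_t(x),\ \mathcal{X}[\mu_t](X_t(x))-\tilde{\mathcal{X}}[\nu_t](\tilde X_t(x))\big\rangle\diff\mu_0(x).
\end{equation*}
I would split the velocity difference as $\big(\mathcal{X}[\mu_t](X_t(x))-\mathcal{X}[\mu_t](\tilde X_t(x))\big)+\big(\mathcal{X}[\mu_t](\tilde X_t(x))-\tilde{\mathcal{X}}[\nu_t](\tilde X_t(x))\big)$. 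The first term is bounded using the spatial Lipschitz estimate \eqref{e:lipinx}, giving a factor $K_t/t$ times $\|X_t(x)-\tilde X_t(x)\|$ (here $K_t=2\|Q^\top K\|_\op\|V\|_\op R_t^2\cdot(\text{something})$ — I'd track the exact constant so it matches \eqref{KT:appen}/\eqref{KT}); the second term is bounded pointwise by $C_1(R_t)+C_2(R_t)W_2(\mu_t,\nu_t)$ via the hypothesis \eqref{e:hyp ine Kernel}. Applying Cauchy–Schwarz and Young's inequality to turn products into squares, I obtain an inequality of the form $E'(t)\le a(t)E(t)+b(t)$ with $a(t)$ built from the Lipschitz/$C_2$ terms and $b(t)$ from $C_1(R_t)^2$; the Grönwall lemma then yields $E(t)\le \big(\text{initial term}=0\big)+\int_0^t b(s)e^{\int_s^t a}\diff s$, and after bounding the exponential integrating factors one arrives at $W_2(\mu_t,\nu_t)^2\le 2C_1(R_t)^2 e^{2C(t)e^{3K_t}}$.

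The bookkeeping to get the precise shape of the exponent — in particular why the $C_2$ contribution sits inside a doubly-exponential $e^{2C(t)e^{3K_t}}$ rather than a single exponential — is the delicate point: this comes from the fact that the integrating factor $e^{\int a}$ itself contains a term $\exp(\int \mathrm{Lip})$ of order $e^{K_t}$ (or $e^{3K_t}$ after the Young's-inequality constants), and then $C_2^2$ gets integrated against that. So the main obstacle is organizing the Grönwall estimate so that the spatial-Lipschitz contribution is absorbed into the integrating factor $e^{3K_t}$ and only the measure-dependent constant $C_2$ accumulates linearly in $C(t)=\int_0^t C_2(R_s)^2\diff s$, while keeping $C_1(R_t)^2$ as the leading prefactor. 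I would handle the non-monotonicity of $R_t$ (it is increasing, so $R_t\ge R_s$ for $s\le t$) by simply bounding all $C_1(R_s),C_2(R_s)$-type quantities by their value at the final time where needed, since $C_1,C_2$ are nondecreasing in $R$; this is what lets $C_1(R_t)^2$ come out of the integral as stated. The transition from the appendix version (Proposition \ref{prop: Wass pert propre}, with $M_0$ and explicit $e^{2\|V\|t}t$ in $K_t$) to the main-text version (Proposition \ref{prop: Wass pert}, with $K_t=2\|Q^\top K\|_\op\|V\|_\op R_t^2$) is just a matter of the crude bound $M_0^2e^{2\|V\|t}t\le R_t^2$ absorbing the extra $t$, which I would note at the end.
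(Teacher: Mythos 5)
Your skeleton is the paper's: Lagrangian representation $\mu_t=(X_t)_\sharp\mu_0$, $\nu_t=(Y_t)_\sharp\mu_0$, the a priori support radius $R_t$ (the paper gets it from the particle system, you from $\|\mathcal{X}[\mu]\|_{L^\infty}\leq\|V\|_\op R$ — equivalent), the coupling $(X_t,Y_t)_\sharp\mu_0$, and the splitting of the velocity difference into a same-kernel/two-points term (spatial Lipschitz bound \eqref{e:lipinx}) and a two-kernels/same-point term (hypothesis \eqref{e:hyp ine Kernel}). The genuine gap is precisely in the step you flag as "delicate bookkeeping". Differentiating the integrated energy $E(t)=\int\|X_t(x)-Y_t(x)\|^2\diff\mu_0(x)$ and using Cauchy--Schwarz/Young gives an inequality $E'(t)\leq a(t)E(t)+b(t)$ with $a(t)$ of order $\mathrm{Lip}(\mathcal{X}[\mu_t])+C_2(R_t)$ and $b(t)$ of order $C_1(R_t)^2$, and a single Grönwall then yields a \emph{singly} exponential estimate of the type
\begin{equation*}
W_{2}(\mu_t,\nu_t)\;\leq\;\int_0^t C_1(R_s)\,\exp\Bigl(\int_s^t\bigl(\mathrm{Lip}(\mathcal{X}[\mu_u])+C_2(R_u)\bigr)\diff u\Bigr)\diff s .
\end{equation*}
In such a bound the Lipschitz constant enters the exponent \emph{additively}; nothing in your computation multiplies $C_2(R_s)^2$ by $e^{3K_t}$ inside the exponent, so the claimed mechanism for the double exponential ("$C_2^2$ gets integrated against the integrating factor of order $e^{K_t}$") does not arise from your differential inequality, and the sentence "after bounding the integrating factors one arrives at $2C_1(R_t)^2e^{2C(t)e^{3K_t}}$" is unsupported. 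Moreover your single-exponential estimate does not imply the stated one in general: when $C_2$ is very small the stated right-hand side degenerates to essentially $2C_1(R_t)^2$, while your bound retains the factor $e^{\int_0^t\mathrm{Lip}}$, so you cannot recover the proposition by post-processing your inequality.

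What the paper does instead is a two-stage Grönwall, and the order of operations is what creates the stated shape. It first works pointwise in the Lagrangian label: setting $r_t(x)=X_t(x)-Y_t(x)$, it shows $\frac{\diff}{\diff t}\|r_t(x)\|\leq K_t\|r_t(x)\|+C_1(R_t)+C_2(R_t)W_{2}(\mu_t,\nu_t)$ and writes the Duhamel solution $\|r_t(x)\|\leq\int_0^t\bigl(C_1(R_s)+C_2(R_s)W_{2}(\mu_s,\nu_s)\bigr)e^{K_t(t-s)}\diff s$, so the Lipschitz factor is already exponentiated under the time integral. Only then does it square, integrate against $\mu_0$, and use Cauchy--Schwarz together with $e^{K_t}\geq t$ to reach $W_{2}^2(\mu_t,\nu_t)\leq 2\int_0^t C_1(R_s)^2e^{3K_t}\diff s+2\int_0^t C_2(R_s)^2e^{3K_t}W_{2}^2(\mu_s,\nu_s)\diff s$; a \emph{second}, integral-form Grönwall applied to $t\mapsto W_{2}^2(\mu_t,\nu_t)$, with the already-exponentiated coefficient $C_2(R_s)^2e^{3K_t}$, is what produces $e^{2C(t)e^{3K_t}}$, with $C_1(R_t)^2$ pulled out by monotonicity of $s\mapsto R_s$. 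So to prove the proposition as stated you should run the Grönwall argument pointwise in $x$ first and only afterwards close the estimate on the Wasserstein distance; your energy-level Grönwall proves a different (in many regimes sharper, but not equivalent) inequality. The peripheral parts of your plan — the support bound $R_t$, the choice of coupling, and the remark relating the appendix and main-text forms of $K_t$ — are consistent with the paper.
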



\begin{proof}
The two measures $\mu_t$ and $\nu_t$ are solutions of the continuity equations defined by 
\begin{align}
    \partial_t\mu+\mathrm{div}(\mathcal{X}[\mu]\mu)= 0 \quad \forall (t,x)\in \R_+\times \R^{d}\qquad \mu(0)=\mu_{0},\\
    \partial_t\nu+\mathrm{div}(\Tilde{\mathcal{X}}[\nu]\nu)=0\quad \forall (t,x)\in \R_+\times \R^{d}\qquad \nu(0)=\mu_{0}.
\end{align}

 Besides, by definition we have the existence of the two ODEs $(X_t)$ and $(Y_t)$ such that we have 
\begin{equation}
    \mu_t=(X_t)_{\sharp}\mu_0 \quad \text{and} \quad \nu_t=(Y_t)_{\sharp}\mu_0,
\end{equation}
defined for every $x\in\R^{d}$, and $s\in [0,T]$ by
\begin{equation}
    X_{s}(x,s)=x, \quad \frac{\de }{\de t} X_{t}(x,s)=\mathcal{X}[\mu_t](X_{t}(x,s)),
\end{equation}
and, 
\begin{equation}
    Y_{s}(x,s)=x, \quad \frac{\de}{\de t} Y_{t}(x,s)=\Tilde{\mathcal{X}}[\nu_t](Y_{t}(x,s)).
\end{equation}
We decompose the proof into two steps
\begin{itemize}
    \item First, we will prove a priori bound on the trajectory of the tokens, to establish a bound on the diameter of the support of the empirical measures depending on time. 
    \item Then, we will use a Grönwall argument, to bound the Wasserstein distance between $(\mu_{t})_{t \in [0,T]}$ and $(\nu_{t})_{t \in [0,T]}$.
\end{itemize}

Suppose that $(x_1(t),\dots,x_n(t))_{ t\in [0,T]}$  is a system of tokens driven by the system of ODEs defined by
\begin{equation}
    \frac{\de x_{i}(t)}{\de t}=\sum_{j=1}^{n} \frac{e^{\langle Qx_i(t),Kx_j(t)\rangle}}{\sum_{k=1}^{n}e^{\langle Qx_i(t),kx_k(t)\rangle }} Vx_{j}(t).
\end{equation}
So, we have that 
\begin{align*}
    \frac{1}{2}\frac{\de}{\de t }\underset{i\in \llbracket1,n\rrbracket}{\max}\|x_{i}(t) \|^{2}&=\left\langle \frac{\de}{\de t}\underset{i\in \llbracket1,n\rrbracket}{\max}x_{i}(t),\underset{i\in \llbracket1,n\rrbracket}{\max}x_{i}(t)\right\rangle \\
    &=\left\langle \sum_{j=1}^{n}\frac{e^{\langle Qx_i(t),Kx_j(t)\rangle}}{\sum_{k=1}^{n}e^{\langle Qx_i(t),kx_k(t)\rangle }}V x_j(t),\underset{i\in \llbracket1,n\rrbracket}{\max}x_{i}(t)\right\rangle\\
    &\leq \sum_{j=1}^{n}\frac{e^{\langle Qx_i(t),Kx_j(t)\rangle}}{\sum_{k=1}^{n}e^{\langle Qx_i(t),kx_k(t)\rangle }}\langle Vx_j,x_i\rangle \\
    &\leq \|V\|_{\mathrm{op}} \underset{i\in \llbracket1,n\rrbracket}{\max}\|x_{i}(t)\|^{2}.
\end{align*}
Using Grönwall lemma, we deduce that for all $t>0$, we have 
\begin{equation}
    \underset{i\in \llbracket1,n\rrbracket}{\max}\|x_{i}(t)\|\leq \underset{i\in \llbracket1,n\rrbracket}{\max}\|x_{i}(0)\| e^{\|V\|_{\mathrm{op}}t}.
\end{equation}
So, by considering $V$ such that $V\in \argmax \{ \|V\|_{\mathrm{op}},\| \Tilde{V}\|_{\mathrm{op}}\}$, and by considering the initialization $M_0$ which have the maximal norm, we have that $\mu(t)$ and $\nu(t)$ is supported in the ball of radius $R_t=M_0e^{\| V\|_{\mathrm{op}}t}$.

\medskip
We have the bound on the Wasserstein distance of $\mu_t$ and $\nu_t$ by using the lemma \eqref{lem: Lipschitz of pushforward}
\begin{equation}\label{lem: ineg piccoli}
    W_{2}^{2}\left( \Phi^{t}_{\mathcal{X}[\mu_t]}\sharp \mu_0,\Phi^{t}_{\Tilde{\mathcal{X}}[\nu_t]}\sharp \mu_0\right)\leq \int_{\R^{n}}\left\| \Phi^{t}_{\mathcal{X}[\mu_t]}(x)-\Phi^{t}_{\Tilde{\mathcal{X}}[\nu_t]}(x)\right\|^{2} d\mu_{0}(x).
\end{equation}
Using the inequality \eqref{e:lipinmu} on the Lipschitz constant of $\mathcal{X}[\mu]$ , and the bound on $R_t$, we get
\begin{equation}\label{e: ineg lip}
   \left\| \nabla x \mathcal{X}[\mu]\right\|\leq 2\|Q^\top K\|_\op \|V\|_\op M_0^{2} e^{2\|V\|_{\mathrm{op}}t}\overset{\bullet}{=} K_t,
\end{equation}
which allows us to give a bound on the Lipschitz constant of $\mathcal{X}[\mu_s]$ up to time $s=t$, by $K_t$  as it is \eqref{e: ineg lip}.
We now want to bound the right term of \eqref{lem: ineg piccoli}. Define $r_t(x)=\Phi^{t}_{\mathcal{X}[\mu]}(x)-\Phi^{t}_{\Tilde{\mathcal{X}}[\nu]}(x)$, we suppose that $r_{t}(x)$ is different of $0$. We are willing to bound the norm of $r_{t}(x)$
 \begin{align*}
 \frac{\de}{\de t} \| r_{t}(x)\|&\leq \| \dot{r}_{t}(x)\|\\
     &\leq \|\mathcal{X}[\mu_t]\left(\Phi^{t}_{\mathcal{X}[\mu_t]}(x)\right)-\Tilde{\mathcal{X}}[\nu_t]\left(\Phi^{t}_{\Tilde{\mathcal{X}}[\nu_t]}(x)\right)\|\\
     &\leq \|\underbrace{\mathcal{X}[\mu_t]\left(\Phi^{t}_{\mathcal{X}[\mu_t]}(x)\right)-\Tilde{\mathcal{X}}[\nu_t]\left(\Phi^{t}_{\mathcal{X}[\mu_t]}(x)\right)}_{(a)}\|\\
     &+\| \underbrace{\Tilde{\mathcal{X}}[\nu_t]\left(\Phi^{t}_{\mathcal{X}[\mu_t]}(x)\right)-\Tilde{\mathcal{X}}[\nu_t]\left(\Phi^{t}_{\Tilde{\mathcal{X}}[\nu_t]}(x)\right)}_{(b)}\|\\
     &\leq C_{1}(R_t) +C_2(R_t)W_{2}(\mu_t,\nu_t)+K_t \| \Phi_{t}^{\Tilde{\mathcal{X}}[\nu_t]}-\Phi_{t}^{\mathcal{X}[\mu_t]}\|
 \end{align*}
where we used reversed triangle inequality at first line, and the inequality \eqref{e:hyp ine Kernel} for $(a)$ on the different between two attention kernel and we used the inequality \eqref{e: ineg lip} on to bound $(b)$. So, we get
\begin{equation}
 \frac{\de}{\de t} \| r_{t}(x)\| \leq K_t \|r_{t}(x) \|+C_{1}(R_t) +C_2(R_t)W_{2}(\mu,\nu).
\end{equation}
Now, by using the Grönwall lemma in its differential form,
we get the following inequality using the growth of $K_t$, 
\begin{equation}\label{e: bound on rt}
    \|r_{t}(x)\|\leq\int_{0}^{t}(C_{1}(R_s) +C_2(R_s)W_{2}(\mu_s,\nu_s))e^{K_t(t-s)}ds.
\end{equation}
We can re-inject the bound given by \eqref{e: bound on rt} in the expression \eqref{lem: ineg piccoli}, so we have for all $u\leq t$
\begin{align*}
     W_{2}^{2}\left(\mu_u,\nu_u\right)=W_{2}^{2}\left( \Phi^{u}_{\mathcal{X}[\mu_u]}\sharp \mu_0,\Phi^{u}_{\Tilde{\mathcal{X}}[\nu_u]}\sharp \mu_0\right)&\leq\int_{\R^{n}}\| r_{u}(x)\|^{2}d\mu_{0}(x)\\
     &\leq  \int_{\R^{n}}\left(\int_{0}^{u}\left(C_{1}(R_s) +C_2(R_s)W_{2}(\mu_s,\nu_s)\right)e^{K_u(u-s)}ds \right)^{2}d\mu_{0}(x)\\
     &\leq t\int_{\R^{n}} \int_{0}^{u}\left(C_{1}(R_s)^{2} +C_2(R_s)W_{2}(\mu_s,\nu_s)\right)^{2}e^{2K_u(u-s)}dsd\mu_{0}(x)\\
     & \leq 2te^{2K_t t}\int_{0}^{u} C_{1}(R_s)^{2} + \left(C_2(R_s)W_{2}(\mu_s,\nu_s)\right)^{2}ds\\
     & \leq 2te^{2K_t t}\int_{0}^{u} C_{1}(R_t)^{2} + C_2(R_t)^2W_{2}(\mu_s,\nu_s)^{2}ds
\end{align*}
We have used Cauchy-Schwarz inequality at the second line, at the third line we used the inequality $a+b\geq 2\sqrt{ab}$ for $(a,b)\in \R_{+}$, and at the last line, we have made use of $e^{K_t}\geq t$.  Now, if we consider the function $f(t)=W_{2}(\mu_t,\nu_t)^{2}$, and make use of Grönwall lemma in its integral form, we get the inequality
\begin{equation}\label{e: ineg Perturbation}
    W_{2}(\mu_t,\nu_t)^{2}\leq 2C_{1}(R_t)^{2} \times e^{2C(t)e^{3K_t}},
\end{equation}
where $C(t)=\int_{0}^{t} C_2(R_s)^{2} ds$, we have used the fact that $C(t)e^{3K_t t}\geq e^{3K_t}$. It yields the desired inequality.
\end{proof}
\begin{remark}
    We notice that this proof ensures a double exponential bound which is quite relaxed. This inequality ensures that the two trajectories are close until a time $t \ll \log \log(\varepsilon)$.
\end{remark}

\begin{remark}
    The bound \eqref{e: ineg Perturbation} can also be used to bound the difference between the two trajectories of the normalized trajectories $\left(z_{i}(t)\right)_{i\in \llbracket1,n\rrbracket}$ and $\left(\Tilde{z}_{i}(t)\right)_{i\in \llbracket1,n\rrbracket}$. 
    We consider the two measures $\Tilde{\mu}_t=\frac{1}{n}\sum_{i=1}^{n} \delta_{z_{i}(t)}$, and 
    $\Tilde{\nu}_t=\frac{1}{n}\sum_{i=1}^{n} \delta_{\Tilde{z}_{i}(t)}$.
    We denote $\lvert \lambda_1 \rvert \geq \cdots \geq \lvert \lambda_n\rvert $ the eigenvalues of $V$. We use \eqref{lem: Lipschitz of pushforward} so we have the following inequality concerning the Wasserstein distance between $\Tilde{\nu}_t$ and $\Tilde{\mu}_t$
    \begin{equation}
        W_{2}(\Tilde{\mu}_t,\Tilde{\nu}_t)\leq \text{Lip}\left(e^{-tV}\right)W_{2}\left(\mu_t,\nu_t \right)\leq e^{-t \lvert\lambda_n\rvert }W_{2}\left(\mu_t,\nu_t \right).
    \end{equation}
\end{remark}
\begin{remark}
    Another idea is to use the continuity equation directly for the rescaled token dynamics. The resulting vector field is no longer autonomous and is given by 
\begin{equation}
\mathcal{X}[\mu](x)=\frac{\int_{\R^{d}}e^{\langle Qx,Ky\rangle} V(y-x)d\mu(y) }{\int_{\R^{d}}e^{\langle Qx,Ky\rangle} d\mu(y)}.
\end{equation}

\end{remark}

In the following, we will present the application of this proposition for different perturbations of attention matrices. Especially, we are interested in the low rank perturbations of the Attention matrices which is at the core of the LoRA paradigm.
The propostion has two interesting corollaries, which are the following
\begin{corollary}{Perturbation with respect to parameters $V$}\label{prop: V_perturbation_}

     Let $R>0$, and  $\mu_0 \in\mathcal{P}_c(\mathbb{R}^d)$ with support in $B(0, R)$. We suppose that $Q,K\in M_{d}(\R)$. 
   We define the attention kernels in the following way
   \begin{equation}
       \mathcal{X}[\mu](x)=\frac{\displaystyle\int_{\R^d}e^{\langle Qx, Ky\rangle}Vy\de\mu(y)}{\displaystyle\int_{\R^d}e^{\langle Qx,Ky\rangle}\de\mu(y)},\quad \text{and}\quad
       \Tilde{\mathcal{X}}[\mu](x)=\frac{\displaystyle\int_{\R^d}e^{\langle Qx, Ky\rangle}\Tilde{V}y\de\mu(y)}{\displaystyle\int_{\R^d}e^{\langle Qx,Ky\rangle}\de\mu(y)}.
   \end{equation}
We define the two different dynamics on measures $(\mu_{t})_{t\geq 0}$ and $(\nu_{t})_{t\geq 0}$
\begin{align}
    \partial_t\mu+\mathrm{div}(\mathcal{X}[\mu]\mu)= 0 \quad \forall (t,x)\in \R_+\times \R^{d}\qquad \mu(0)=\mu_{0},\\
    \partial_t\nu+\mathrm{div}(\Tilde{\mathcal{X}}[\nu]\nu)=0\quad \forall (t,x)\in \R_+\times \R^{d}\qquad \nu(0)=\mu_{0}.
\end{align}
   Then, for all $t\geq 0$, we have the following bound 
    \begin{equation}
       W_{2}(\mu_t,\nu_t)^{2}\leq 2 \|V -\Tilde{V}\|^{2}_{op}R_t^{2}\times e^{2C(t)e^{3K_t t}} .
    \end{equation} 
    $C_2(R)$ is a constant depending on $R$ where we defined $C(t),K_t$ and $R_t$ as respectively \eqref{CT: appendix},\eqref{KT:appen} and \eqref{RT:append}.
\end{corollary}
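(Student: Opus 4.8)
The plan is to obtain this corollary as an immediate specialization of Proposition~\ref{prop: Wass pert propre} (equivalently Proposition~\ref{prop: Wass pert}), the only input needed being a kernel-difference estimate of the form \eqref{e:hyp ine Kernel}. Since the pre-trained and LoRA triples share the same $Q$ and $K$ and differ only in the Value matrix, this estimate is exactly Lemma~\ref{lem: vector field V bound}: for every $R>0$ and every $\mu,\nu\in\mathcal{P}_c(\R^d)$ supported in $B(0,R)$,
\[
\left\|\mathcal{X}[\mu]-\Tilde{\mathcal{X}}[\nu]\right\|_{L^{\infty}(\R^{d},\R^{d})}\leq \|V-\Tilde{V}\|_{op}R + C_2(R)W_2(\mu,\nu),
\]
with $C_2(R)$ the Lipschitz-in-measure constant of the unperturbed kernel $\mathcal{X}$ furnished by \eqref{e:lipinmu}, controlled quantitatively by \eqref{e: boundlipinmu}. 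Thus hypothesis \eqref{e:hyp ine Kernel} holds with $C_1(R)=\|V-\Tilde{V}\|_{op}R$ and with $C_2(R)$ depending on $V$ and $Q^{\top}K$ but not on $\Tilde{V}$.

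Next I would record that the a priori support control from the proof of Proposition~\ref{prop: Wass pert propre} carries over verbatim: the Grönwall argument on $\tfrac12\tfrac{\diff}{\diff t}\max_i\|x_i(t)\|^2$ applied to both particle systems shows that $\mathrm{supp}(\mu_t)$ and $\mathrm{supp}(\nu_t)$ are contained in $B(0,R_t)$ with $R_t$ as in \eqref{RT:append}, where one chooses the Value matrix of larger operator norm so that the same radius bounds both flows. Consequently the kernel-difference inequality above may be invoked along the trajectories at scale $R_t$, giving $C_1(R_t)=\|V-\Tilde{V}\|_{op}R_t$.

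Finally I would simply quote the conclusion of Proposition~\ref{prop: Wass pert propre}, $W_2(\mu_t,\nu_t)^2\leq 2\,C_1(R_t)^2\, e^{2C(t)e^{3K_t t}}$, and substitute $C_1(R_t)=\|V-\Tilde{V}\|_{op}R_t$ to reach
\[
W_2(\mu_t,\nu_t)^2\leq 2\,\|V-\Tilde{V}\|_{op}^{2}\,R_t^{2}\,e^{2C(t)e^{3K_t t}},
\]
with $C(t)$, $K_t$, $R_t$ as defined in \eqref{CT: appendix}, \eqref{KT:appen}, \eqref{RT:append}, which is the claimed bound.

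There is no genuinely hard step here: the analytic content is entirely contained in Proposition~\ref{prop: Wass pert propre} (the double-exponential Grönwall estimate) and in Lemma~\ref{lem: vector field V bound} (the kernel perturbation bound in $V$). The only points deserving a line of justification are verifying that the constant $C_2$ appearing in the hypothesis is the one attached to the unperturbed kernel, so that the dependence on the matrices is as stated, and confirming that the support radius $R_t$ can be taken uniform over both dynamics; both are handled exactly as in the cited statements.
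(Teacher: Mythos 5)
Your proposal is correct and follows essentially the same route as the paper: the paper likewise obtains the hypothesis of Proposition~\ref{prop: Wass pert propre} from the $V$-perturbation kernel bound of Lemma~\ref{lem: vector field V bound} with $C_1(R)=\|V-\Tilde{V}\|_{op}R$, and then concludes by applying the proposition together with the monotonicity $R_s\leq R_t$ for $s\leq t$. Your additional remark on the uniform support radius for both flows is exactly the point the paper handles inside the proof of the proposition, so nothing is missing.
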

\begin{proof}
    Using the result of the lemma \eqref{lem: Attention kernel lips V} for the low-rank perturbations of the Values attention matrices, we have for all $\mu,\nu$ which are supported in $B(0,R)$ 
    \begin{center}
        $ \|\mathcal{X}[\mu]-\Tilde{\mathcal{X}}[\nu] \|_{L^{\infty}(\R^{d},\R^{d})} \leq \|V -\Tilde{V}\|_{\mathrm{op}}R+ C_1(R)W_2(\mu,\nu)$.
    \end{center}
    Applying the proposition \eqref{prop: Wass pert propre}, and noticing that we can bound $R_s$ by $R_t$ for all $t\geq s$, then we obtain the desired bound.
\end{proof}

\begin{proposition}{Perturbation with respect to parameters $Q,K$}

    Let $R>0$, and  $\mu_0 \in\mathcal{P}_c(\mathbb{R}^d)$ with support in $B(0, R)$. We suppose that 
   \begin{itemize}
       \item Let $V\in M_{d}(\R)$ 
       \item Let $Q(\text{resp.  } \Tilde{Q}),K(\text{resp.  } \Tilde{K})\in M_{d}(\R)$ we consider $A=K^{T}Q$ and $\Tilde{A}=\Tilde{Q}\Tilde{K}^{T}$.
   \end{itemize}
   We define the attention kernels in the following way
   \begin{equation}
       \mathcal{X}[\mu](x)=\frac{\displaystyle\int_{\R^d}e^{\langle Qx, Ky\rangle}Vy\de\mu(y)}{\displaystyle\int_{\R^d}e^{\langle Qx,Ky\rangle}\de\mu(y)},\quad \text{and}\quad
       \Tilde{\mathcal{X}}[\mu](x)=\frac{\displaystyle\int_{\R^d}e^{\langle \Tilde{Q}x, \Tilde{K}y\rangle}Vy\de\mu(y)}{\displaystyle\int_{\R^d}e^{\langle \Tilde{Q}x,\Tilde{K}y\rangle}\de\mu(y)}.
   \end{equation}
We define the two different dynamics on measures $(\mu_{t})_{t\geq 0}$ and $(\nu_{t})_{t\geq 0}$
\begin{align}
    \partial_t\mu+\mathrm{div}(\mathcal{X}[\mu]\mu)= 0 \quad \forall (t,x)\in \R_+\times \R^{d}\qquad \mu(0)=\mu_{0},\\
    \partial_t\nu+\mathrm{div}(\Tilde{\mathcal{X}}[\nu]\nu)=0\quad \forall (t,x)\in \R_+\times \R^{d}\qquad \nu(0)=\mu_{0}.
\end{align}
   Then, for all $t\geq 0$, we have the following bound 
    \begin{equation}
       W_{2}(\mu_t,\nu_t)^{2}\leq 4R_{t}^{3}\|V\|_{\mathrm{op}}\|A-\Tilde{A}\|_{\mathrm{op}} \times e^{2C(t)e^{3K_t t}}  .
    \end{equation} 
    $C_2(R)$ is a constant depending on $R$.
\end{proposition}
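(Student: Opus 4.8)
The plan is to obtain this proposition as a one-line consequence of the abstract stability estimate (Proposition \ref{prop: Wass pert propre}) fed with the kernel-perturbation bound of Lemma \ref{lem: vector field Q,K bound}, in exactly the same spirit as the $V$-perturbation corollary was derived.

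First I would check that hypothesis \eqref{e:hyp ine Kernel} of Proposition \ref{prop: Wass pert propre} is met here. Lemma \ref{lem: vector field Q,K bound}, applied with the common Value matrix $V$ and the two pairs $(Q,K)$ and $(\Tilde{Q},\Tilde{K})$, provides for every $R>0$ a constant $C_2(R)>0$ such that
\[
\left\|\mathcal{X}[\mu]-\Tilde{\mathcal{X}}[\nu]\right\|_{L^{\infty}(\R^d;\R^d)}\leq 4R^3\|V\|_{\op}\|A-\Tilde{A}\|_{\op}+C_2(R)\,W_2(\mu,\nu)
\]
for all $\mu,\nu\in\mathcal{P}_c(\R^d)$ supported in $B(0,R)$, which is precisely \eqref{e:hyp ine Kernel} with $C_1(R)=4R^3\|V\|_{\op}\|A-\Tilde{A}\|_{\op}$.

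Next I would control the supports. Since both kernels use the same Value matrix $V$, the Grönwall bound on $t\mapsto\max_i\|x_i(t)\|^2$ carried out inside the proof of Proposition \ref{prop: Wass pert propre} shows that $\mu_t$ and $\nu_t$ are both supported in $B(0,R_t)$ with $R_t=Re^{\|V\|_{\op}t}$, i.e. \eqref{RT:append} with $\Tilde{V}=V$. Invoking Proposition \ref{prop: Wass pert propre} and using the monotonicity $R_s\leq R_t$ for $s\leq t$ to replace $C_1(R_s)$ by $C_1(R_t)$ in the time integrals then gives
\[
W_2(\mu_t,\nu_t)^2\leq 2\,C_1(R_t)^2\,e^{2C(t)e^{3K_t t}}=2\bigl(4R_t^3\|V\|_{\op}\|A-\Tilde{A}\|_{\op}\bigr)^2 e^{2C(t)e^{3K_t t}},
\]
with $C(t),K_t$ as in \eqref{CT: appendix} and \eqref{KT:appen}; in the regime of small perturbations (so that, after adjusting the numerical constant, the quadratic dependence on $\|A-\Tilde{A}\|_{\op}$ is dominated by the linear one) this yields the stated bound $W_2(\mu_t,\nu_t)^2\leq 4R_t^3\|V\|_{\op}\|A-\Tilde{A}\|_{\op}\,e^{2C(t)e^{3K_t t}}$.

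All the analytic content sits in the two ingredients already in hand: the partition-function estimate $\lvert\Z_A(x)-\Z_{\Tilde{A}}(x)\rvert\leq\|\Tilde{A}-A\|_{\op}R^2(\Z_A(x)+\Z_{\Tilde{A}}(x))$, obtained via the mean value theorem inside Lemma \ref{lem: vector field Q,K bound}, and the Grönwall-plus-pushforward machinery of Proposition \ref{prop: Wass pert propre}. The only point that needs genuine care is the interplay between them: Lemma \ref{lem: vector field Q,K bound} is stated on a \emph{fixed} ball $B(0,R)$, so one must first close the a priori estimate on the growing radius $R_t$ before it is legitimate to feed $C_1(R_t)$ into Proposition \ref{prop: Wass pert propre}; once that is settled, monotonicity of $s\mapsto R_s$ makes the time integration routine.
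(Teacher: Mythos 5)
Your proposal follows the same route as the paper's own proof: verify hypothesis \eqref{e:hyp ine Kernel} via the kernel-difference bound of Lemma \ref{lem: vector field Q,K bound} (giving $C_1(R)=4R^3\|V\|_{\op}\|A-\Tilde{A}\|_{\op}$ and the $C_2(R)W_2$ term from \eqref{e:lipinmu}), note that both dynamics share the same Value matrix so the supports stay in $B(0,R_t)$, and then invoke the abstract stability result (Proposition \ref{prop: Wass pert propre}). The only point where you diverge is the final bookkeeping: the abstract proposition outputs $2C_1(R_t)^2$, which is \emph{quadratic} in $\|A-\Tilde{A}\|_{\op}$, while the stated bound is linear, and your patch --- assuming the perturbation small enough that the quadratic expression is dominated by the linear one --- is not justified by the hypotheses (there is no smallness assumption on $\|A-\Tilde{A}\|_{\op}$, and since $R_t\to\infty$ the required condition $C_1(R_t)\le \tfrac12$ cannot hold uniformly in $t$). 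Be aware, though, that the paper's own proof makes exactly the same silent jump (it even quotes the constant $2R^2$ rather than the lemma's $4R^3$), so the quadratic-versus-linear mismatch is an inconsistency in the statement itself rather than a defect you introduced; the honest conclusion delivered by the machinery is $W_2(\mu_t,\nu_t)^2\le 2\bigl(4R_t^3\|V\|_{\op}\|A-\Tilde{A}\|_{\op}\bigr)^2 e^{2C(t)e^{3K_t}}$, and stating it in that form (or restricting to a smallness regime made explicit) would close the gap.
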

\begin{proof}
Let $R>0$, let $\mu,\nu \in \mathcal{P}_{c}(\R^{d})$ such that $\mu,\nu$ are supported in $B(0,R)$.
We have \eqref{lem: Attention kernel lips Q} which gives that 
    \begin{equation}
    \| \mathcal{X}[\mu]-\Tilde{\mathcal{X}}[\mu]\|_{\infty}\leq 2R^{2}\|V\|_{\mathrm{op}}\|A-\Tilde{A}\|_{\mathrm{op}}.
\end{equation}
Combining it with \eqref{e:lipinmu}, we obtain 
\begin{equation}\label{e: Lipschitz Attention A}
\|\mathcal{X}[\mu]-\Tilde{\mathcal{X}}[\nu]\|\leq 2R^{2}\|V\|_{\mathrm{op}}\|A-\Tilde{A}\|_{\mathrm{op}}+ 2R^{2}\|V\|_{\mathrm{op}}\|A\|_{\mathrm{op}}\left(1+e^{2R^{2}\|A\|_{\mathrm{op}}}\right)W_{2}(\mu,\nu).
\end{equation}
Now, we can apply the result of \eqref{prop: Wass pert propre} to obtain the desired result.
\end{proof}

\begin{proposition}{General perturbation}

    Let $R>0$, and  $\mu_0 \in\mathcal{P}_c(\mathbb{R}^d)$ with support in $B(0, R)$. We suppose that 
   $V$ (resp.$\Tilde{V}\in M_{d}(\R))$, $Q$ $(\text{resp.  } \Tilde{Q}),K(\text{resp.  } \Tilde{K})\in M_{d}(\R)$ we consider $A=K^{T}Q$ and $\Tilde{A}=\Tilde{Q}\Tilde{K}^{T}$.
   We define the attention kernels in the following way
   \begin{equation}
       \mathcal{X}[\mu](x)=\frac{\displaystyle\int_{\R^d}e^{\langle Qx, Ky\rangle}Vy\de\mu(y)}{\displaystyle\int_{\R^d}e^{\langle Qx,Ky\rangle}\de\mu(y)},\quad \text{and}\quad
       \Tilde{\mathcal{X}[\mu]}(x)=\frac{\displaystyle\int_{\R^d}e^{\langle \Tilde{Q}x, \Tilde{K}y\rangle}Vy\de\mu(y)}{\displaystyle\int_{\R^d}e^{\langle \Tilde{Q}x,\Tilde{K}y\rangle}\de\mu(y)}.
   \end{equation}
We define the two different dynamics on measures $(\mu_{t})_{t\geq 0}$ and $(\nu_{t})_{t\geq 0}$
\begin{align}
    \partial_t\mu+\mathrm{div}(\mathcal{X}[\mu]\mu)= 0 \quad \forall (t,x)\in \R_+\times \R^{d}\qquad \mu(0)=\mu_{0},\\
    \partial_t\nu+\mathrm{div}(\Tilde{\mathcal{X}}[\nu]\nu)=0\quad \forall (t,x)\in \R_+\times \R^{d}\qquad \nu(0)=\mu_{0}.
\end{align}
   Then, for all $t\geq 0$, we have the following bound 
    \begin{equation}
       W_{2}(\mu_t,\nu_t)^{2}\leq \left( 2 \|V -\Tilde{V}\|^{2}_{op}R_t^{2}+4R_{t}^{3}\|V\|_{\mathrm{op}}\|A-\Tilde{A}\|_{\mathrm{op}} \right)\times e^{2C(t)e^{3K_t t}}    \end{equation} 
    $C_2(R)$ is a constant depending on $R$. This can be further simplified as 
       \begin{equation}
       W_{2}(\mu_t,\nu_t)^{2}\leq 6R_t^{3}\left(\|V -\Tilde{V}\|^{2}_{op} \vee \|V\|_{\mathrm{op}}\|A-\Tilde{A}\|_{\mathrm{op}} \right)\times e^{2C(t)e^{3K_t t}}
       \end{equation} 
\end{proposition}
\begin{proof}
    We can make the same reasoning as before by using the result \eqref{lem: general bound vector field}, and use the Proposition \eqref{prop: Wass pert propre} to conclude.
\end{proof}
\subsection{Bound on the divergence between two dynamics with different initializations}
In this subsection, we recall a result from \citet{Piccoli2011} and we go into more detail about the proof, as we need an increase in the constant $C(R,t)$, which is missing in the original article. We consider the continuity equation arising from the dynamic of the empirical measure of the rescaled tokens i.e
\begin{equation}\label{def: rescaled continuity equation}
\partial_t \mu_t+\text{div}(\mathcal{X}[\mu_t]\mu_t)=0,
\end{equation}
where we defined the rescaled vector field
\begin{equation}
    \mathcal{X}[\mu_t](x)=\int_{\R^{d}}\frac{e^{\langle Qe^{tV}x,Ke^{tV}y\rangle}}{\int_{\R^{d}}e^{\langle Qe^{tV}x,Ke^{tV}y}\rangle d\mu_{t}(y)}V(y-x)d\mu_t(y).
\end{equation}
\begin{lemma}\label{lem:stability w/ tokens}
    Let $(Q,K,V)$ be a triple of attention matrices. We consider two probability measures $\mu_0,\nu_0 \in \mathcal{P}_{c}(\R^{d})$, then 
    \begin{equation}\label{lem: ineg stability w/ tokens}
        W_{2}(\mu_t,\nu_t)\leq e^{2Lt}W_{2}(\mu_0,\nu_0),
    \end{equation}
    where $L$ is given by the Lipschitz constant of the function $x\mapsto \Phi^{t}_{\mathcal{X}[\mu]}(x)$ restricted to $x\in B(0,R).$
\end{lemma}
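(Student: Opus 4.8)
The plan is the classical Dobrushin-type stability argument: represent both solutions through their Lagrangian flows, transport an optimal coupling of the initial data along the pair of flows, and close a Gr\"onwall inequality at the level of the resulting $L^{2}$ functional. By Lemma~\ref{lem: representation of the continuity equation} and Proposition~\ref{prop: equivalence Lagrangien/eulerien }, we may write $\mu_{t}=(X_{t})_{\sharp}\mu_{0}$ and $\nu_{t}=(Y_{t})_{\sharp}\nu_{0}$, where $X_{t},Y_{t}$ are the flows of the (non-autonomous, rescaled) fields $x\mapsto\mathcal{X}[\mu_{t}](x)$ and $x\mapsto\mathcal{X}[\nu_{t}](x)$ with $X_{0}=Y_{0}=\Id$. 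Fix a horizon $T$ and pick $R$ with $\supp(\mu_{0})\cup\supp(\nu_{0})\subset B(0,R)$; a standard Gr\"onwall estimate on $\tfrac{\diff}{\diff t}\max_{i}\|z_{i}(t)\|^{2}$, analogous to the one in the proof of Proposition~\ref{prop: Wass pert propre}, keeps $\supp(\mu_{s})$ and $\supp(\nu_{s})$ inside a fixed ball $B(0,R_{T})$ with $R_{T}$ depending only on $R$, $\|V\|_{\op}$, $T$, for all $s\le T$; on that ball we let $L$ denote (as in the statement) a common bound, uniform in $s\in[0,T]$ and over measures supported there, for the spatial Lipschitz constant of $\mathcal{X}[\mu_{s}](\cdot)$ and for the $W_{2}$-modulus of continuity of $\mu\mapsto\mathcal{X}[\mu]$, both finite by the analogues of \eqref{e:lipinx} and \eqref{e:lipinmu}.

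Let $\gamma_{0}\in\Pi(\mu_{0},\nu_{0})$ be an optimal plan for $W_{2}$ and push it forward by the product map: $\gamma_{t}\egal(X_{t},Y_{t})_{\sharp}\gamma_{0}\in\Pi(\mu_{t},\nu_{t})$, whence $W_{2}(\mu_{t},\nu_{t})^{2}\le\int\|X_{t}(x)-Y_{t}(y)\|^{2}\diff\gamma_{0}(x,y)$. Writing $D_{t}(x,y)\egal X_{t}(x)-Y_{t}(y)$ and differentiating in $t$, split
\begin{equation*}
\tfrac{\diff}{\diff t}D_{t}=\bigl(\mathcal{X}[\mu_{t}](X_{t}(x))-\mathcal{X}[\mu_{t}](Y_{t}(y))\bigr)+\bigl(\mathcal{X}[\mu_{t}](Y_{t}(y))-\mathcal{X}[\nu_{t}](Y_{t}(y))\bigr).
\end{equation*}
The first bracket is bounded by $L\|D_{t}\|$ via the spatial Lipschitz estimate; the second by $L\,W_{2}(\mu_{t},\nu_{t})$ via the measure-Lipschitz estimate, which applies because $Y_{t}(y)\in B(0,R_{T})$ for $\gamma_{0}$-a.e.\ $(x,y)$. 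Hence, working with the upper Dini derivative where $D_{t}$ vanishes, $\tfrac{\diff}{\diff t}\|D_{t}\|\le L\|D_{t}\|+L\,W_{2}(\mu_{t},\nu_{t})$.

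Finally set $h(s)\egal\|D_{s}\|_{L^{2}(\gamma_{0})}$, so that $W_{2}(\mu_{s},\nu_{s})\le h(s)$ (since $\gamma_{s}$ is a coupling of $\mu_{s},\nu_{s}$) and $h(0)=W_{2}(\mu_{0},\nu_{0})$ (since $\gamma_{0}$ is optimal). Integrating the pointwise bound in time, taking $L^{2}(\gamma_{0})$-norms via Minkowski's integral inequality, and using $W_{2}(\mu_{s},\nu_{s})\le h(s)$ to close the loop yields $h(t)\le h(0)+2L\int_{0}^{t}h(s)\diff s$; Gr\"onwall's lemma then gives $W_{2}(\mu_{t},\nu_{t})\le h(t)\le e^{2Lt}W_{2}(\mu_{0},\nu_{0})$, which is the claim. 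The main obstacle is the self-referential nature of the estimate: the bound on $\|D_{t}\|$ already contains $W_{2}(\mu_{t},\nu_{t})$, so the Gr\"onwall step cannot be run pointwise in $(x,y)$ and must be set up for the functional $h$, using the coupling inequality $W_{2}(\mu_{s},\nu_{s})\le h(s)$. A secondary technicality --- and the reason the statement restricts to $B(0,R)$ --- is that the support radius $R_{s}$ grows with $s$, so one must fix the horizon $T$ first and use the uniform constant $L=L(R_{T})$ on $[0,T]$; on an infinite time interval this constant would blow up, and the exponential rate would degrade accordingly.
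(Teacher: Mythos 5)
Your argument is correct, and it is in fact more complete than the one the paper gives, though it follows a different route. The paper's proof simply applies Lemma \ref{lem: Lipschitz of pushforward} to the single flow map $x\mapsto \Phi^{t}_{\mathcal{X}[\mu]}(x)$, bounds its spatial Lipschitz constant by a Gr\"onwall estimate using \eqref{e:lipinx}, and concludes as if $\mu_t$ and $\nu_t$ were both pushforwards of that one map; this ignores the fact that $\nu_t$ is transported by the flow of $\mathcal{X}[\nu_t]$, not of $\mathcal{X}[\mu_t]$, and as written it would only justify a rate $e^{Lt}$ for a measure-independent field. You instead run the full Dobrushin-type argument: transport an optimal plan $\gamma_0$ by the product of the two flows, split the velocity difference into a spatial-Lipschitz part (via \eqref{e:lipinx}) and a measure-Lipschitz part (via \eqref{e:lipinmu}), and close the loop at the level of the functional $h(s)=\|X_s(x)-Y_s(y)\|_{L^2(\gamma_0)}$ using the coupling inequality $W_2(\mu_s,\nu_s)\le h(s)$ together with Minkowski and Gr\"onwall. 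This is precisely the mechanism that produces the constant $2L$ in the exponent of the statement, and it also makes explicit the support-confinement step and the restriction to a finite horizon that the paper leaves implicit; the paper's approach buys brevity (one application of the pushforward lemma), while yours buys a self-contained and genuinely correct treatment of the nonlinear, self-referential dependence of the vector field on the measure.
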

\begin{proof}
    We just apply the lemma \eqref{lem: Lipschitz of pushforward} to the function $x\mapsto \Phi^{t}_{\mathcal{X}[\mu]}(x)$, we need to compute the Lipschitz constant of this function. This is a classical argument that we detail to be self-contained.
    \begin{align*}
        \frac{\de}{\de t}\|\Phi^{t}_{\mathcal{X}[\mu]}(x)-\Phi^{t}_{\mathcal{X}[\mu]}(y) \|&\leq \left\|\frac{\de}{\de t}\left(\Phi^{t}_{\mathcal{X}[\mu]}(x)-\Phi^{t}_{\mathcal{X}[\mu]}(y) \right) \right\|\\
        &\leq \|\mathcal{X}[\mu_t](\Phi^{t}(x))-\mathcal{X}[\mu_t](\Phi^{t}(y)) \|\\
        &\leq \text{Lip}(\mathcal{X}[\mu_t])\|\Phi^{t}_{\mathcal{X}[\mu]}(x)-\Phi^{t}_{\mathcal{X}[\mu]}(y) \|
    \end{align*}
    Then, by the Grönwall lemma, we have the following inequality
    \begin{equation}
        \|\Phi^{t}_{\mathcal{X}[\mu]}(x)-\Phi^{t}_{\mathcal{X}[\mu]}(y) \|\leq \|x-y\|e^{L(t)},
    \end{equation}
    where $L(t)=\int_{0}^{t}\text{Lip}(\mathcal{X}[\mu_s])ds$, so if we denote $L(t)=\underset{s\in [0,t]}{\sup}\text{Lip}\left(\mathcal{X}[\mu_s]\right)$, we have the inequality desired \eqref{lem: ineg stability w/ tokens}.
\end{proof}

\subsection{Proof of Proposition \ref{prop:Stability_maximizer}}

\begin{proof}
The proof is divided into two lemmas.

\begin{lemma}[Cone collapse]\label{lem:cone_collapse}
Let $V$ satisfy Assumption~\ref{ass:specturm_V}. Let $u_1$ be the top
eigenvector of $V$, and assume that
\[
    \min_{i\in\llbracket1,n\rrbracket}\langle x_i(0),u_1\rangle
    \geq \gamma>0.
\]
Let $(x_1(t),\ldots,x_n(t))_{t\geq 0}$ be the solution of
Eq.~\eqref{eq:PostLayerNorm_evolution_self_attention}. Then, for every
$i\in\llbracket1,n\rrbracket$,
\[
    \|x_i(t)-u_1\|\xrightarrow[t\to+\infty]{}0.
\]
\end{lemma}

\begin{proof}
The proof follows the cone-collapse argument of
\cite{geshkovski2025mathematical}. For notational simplicity, write $u=u_1$ and
define
\[
    \varphi(t):=\min_{i\in\llbracket1,n\rrbracket}
    \langle u,x_i(t)\rangle .
\]
By assumption, $\varphi(0)\geq\gamma>0$. At every point of differentiability of
$\varphi$, choose an index $i_*(t)$ attaining the minimum and set
\[
    x_*(t):=x_{i_*(t)}(t).
\]
Thus
\[
    \langle u,x_*(t)\rangle=\varphi(t),
    \qquad
    \langle u,x_j(t)\rangle\geq \varphi(t)
    \quad \text{for every } j.
\]
Differentiating $\varphi$ along the flow gives
\[
   \frac{\mathrm d}{\mathrm dt}\varphi(t)
   =
   \sum_{j=1}^{n}
   \alpha_j(t)
   \langle u,P_{x_*(t)}Vx_j(t)\rangle,
\]
where
\[
    \alpha_j(t):=
    \frac{\exp(\langle Vx_*(t),x_j(t)\rangle)}
         {\sum_{k=1}^{n}\exp(\langle Vx_*(t),x_k(t)\rangle)}.
\]
Since $P_{x_*}u=u-\langle x_*,u\rangle x_*$, we obtain
\begin{align*}
    \frac{\mathrm d}{\mathrm dt}\varphi(t)
    &=
    \sum_{j=1}^{n}
    \alpha_j(t)
    \left(
        \langle u,Vx_j(t)\rangle
        -
        \langle x_*(t),u\rangle
        \langle x_*(t),Vx_j(t)\rangle
    \right) \\
    &=
    \sum_{j=1}^{n}
    \alpha_j(t)
    \left(
        \lambda_1\langle u,x_j(t)\rangle
        -
        \varphi(t)\langle x_*(t),Vx_j(t)\rangle
    \right).
\end{align*}

For any unit vectors $y,z$, expanding in an eigenbasis of $V$ and using
Cauchy--Schwarz on the orthogonal component gives
\[
    \langle y,Vz\rangle
    \leq
    \lambda_1\langle y,u\rangle\langle z,u\rangle
    +
    \lambda_2
    \|P_{u^\perp}y\|\,\|P_{u^\perp}z\|.
\]
Applying this inequality with $y=x_*(t)$ and $z=x_j(t)$ yields
\[
    \langle x_*(t),Vx_j(t)\rangle
    \leq
    \lambda_1\varphi(t)\langle u,x_j(t)\rangle
    +
    \lambda_2(1-\varphi(t)^2),
\]
because
\[
    \|P_{u^\perp}x_*(t)\|
    =
    \sqrt{1-\varphi(t)^2},
    \qquad
    \|P_{u^\perp}x_j(t)\|
    \leq
    \sqrt{1-\varphi(t)^2}.
\]
Plugging this bound into the derivative estimate, we get
\begin{align*}
    \frac{\mathrm d}{\mathrm dt}\varphi(t)
    &\geq
    \sum_{j=1}^{n}
    \alpha_j(t)
    \left[
        \lambda_1\langle u,x_j(t)\rangle
        -
        \lambda_1\varphi(t)^2\langle u,x_j(t)\rangle
        -
        \lambda_2\varphi(t)(1-\varphi(t)^2)
    \right] \\
    &=
    (1-\varphi(t)^2)
    \left[
        \lambda_1
        \sum_{j=1}^{n}\alpha_j(t)\langle u,x_j(t)\rangle
        -
        \lambda_2\varphi(t)
    \right].
\end{align*}
Since $\langle u,x_j(t)\rangle\geq\varphi(t)$ for every $j$ and
$\sum_j\alpha_j(t)=1$, we obtain
\[
    \sum_{j=1}^{n}\alpha_j(t)\langle u,x_j(t)\rangle
    \geq
    \varphi(t).
\]
Therefore
\[
    \frac{\mathrm d}{\mathrm dt}\varphi(t)
    \geq
    (\lambda_1-\lambda_2)\varphi(t)(1-\varphi(t)^2).
\]
Writing $\gap:=\lambda_1-\lambda_2>0$, this gives
\[
    \frac{\mathrm d}{\mathrm dt}\varphi(t)
    \geq
    \gap\,\varphi(t)(1-\varphi(t)^2)
\]
at every point of differentiability of $\varphi$. By comparison with the solution of
\[
    \dot y=\gap\,y(1-y^2),
    \qquad
    y(0)=\varphi(0),
\]
we get
\[
    \varphi(t)
    \geq
    \frac{\varphi(0)}
    {\sqrt{\varphi(0)^2+(1-\varphi(0)^2)e^{-2\gap t}}}
    \xrightarrow[t\to+\infty]{}1.
\]
Hence, for every $i$,
\[
    \langle u,x_i(t)\rangle\geq\varphi(t)\to 1.
\]
Since $\|x_i(t)\|=\|u\|=1$, we conclude that
\[
    \|x_i(t)-u\|^2
    =
    2(1-\langle u,x_i(t)\rangle)
    \xrightarrow[t\to+\infty]{}0.
\]
This proves the lemma.
\end{proof}

We now bound the distance between the leading eigenvectors of the original and
perturbed matrices.

\begin{lemma}[Perturbation of the top eigenvector]\label{lem:bound_eigenvector}
Let $V$ be symmetric, with simple top eigenvalue $\lambda_1$ and eigengap
\[
    \gap:=\lambda_1-\lambda_2>0.
\]
Let $u_1$ be a corresponding unit eigenvector. Let
\[
    \widetilde V=V+\Delta V,
\]
where $\Delta V$ is symmetric. Define
\[
    P_\perp:=I-u_1u_1^\top,
    \qquad
    a:=\langle \Delta V u_1,u_1\rangle,
    \qquad
    b:=P_\perp \Delta V u_1,
    \qquad
    E:=P_\perp \Delta V P_\perp .
\]
Let $\widetilde u_1$ be the leading eigenvector of $\widetilde V$, chosen so
that $\langle u_1,\widetilde u_1\rangle\geq0$. Then
\[
    \|u_1-\widetilde u_1\|
    \lesssim
    \frac{2\|b\|+\|E\|_{\op}}{\gap+a}.
\]
\end{lemma}

\begin{proof}
Since $\Delta V$ is symmetric, we have the orthogonal decomposition
\[
    \Delta V
    =
    a u_1u_1^\top
    +
    u_1b^\top
    +
    b u_1^\top
    +
    E.
\]
Indeed, this follows by decomposing both variables into their components along
$\mathrm{span}(u_1)$ and $u_1^\perp$. Introduce the intermediate matrix
\[
    A:=V+a u_1u_1^\top .
\]
The top eigenvector of $A$ is still $u_1$, and the corresponding eigengap is
\[
    \delta_A
    =
    (\lambda_1+a)-\lambda_2
    =
    \gap+a.
\]
Moreover,
\[
    \widetilde V
    =
    A+R,
    \qquad
    R:=u_1b^\top+b u_1^\top+E.
\]
We have the operator norm bound
\[
    \|R\|_{\op}
    \leq
    2\|b\|+\|E\|_{\op}.
\]
By the Davis--Kahan theorem applied to $A$ and $\widetilde V=A+R$, if
$\theta$ denotes the angle between $u_1$ and $\widetilde u_1$, then
\[
    \sin\theta
    \lesssim
    \frac{\|R\|_{\op}}{\delta_A}
    \leq
    \frac{2\|b\|+\|E\|_{\op}}{\gap+a}.
\]
Since $\widetilde u_1$ is chosen so that
$\langle u_1,\widetilde u_1\rangle\geq0$, we have $\theta\in[0,\pi/2]$ and
\[
    \|u_1-\widetilde u_1\|
    =
    \sqrt{2(1-\cos\theta)}
    \leq
    \sqrt{2}\sin\theta.
\]
Therefore
\[
    \|u_1-\widetilde u_1\|
    \lesssim
    \frac{2\|b\|+\|E\|_{\op}}{\gap+a}.
\]
\end{proof}

We can now conclude the proof. By Lemma~\ref{lem:cone_collapse}, the dynamics
associated with $V$ converges to the leading eigenvector $u_1$. Applying the
same lemma to the perturbed matrix $\widetilde V$ shows that the perturbed
dynamics converges to $\widetilde u_1$, provided the initial data remains in
the same cone, namely
\[
    \min_i \langle x_i(0),\widetilde u_1\rangle>0.
\]
This condition follows from the assumed cone condition with respect to $u_1$
and the perturbative estimate of Lemma~\ref{lem:bound_eigenvector}, as soon as
$\Delta V$ is sufficiently small. Finally, Lemma~\ref{lem:bound_eigenvector} gives the desired control of the
distance between the limiting clusters:
\[
    \|u_1-\widetilde u_1\|
    \lesssim
    \frac{2\|b\|+\|E\|_{\op}}{\gap+a}.
\]
This proves the proposition.
\end{proof}

\section{Proofs of results in \cref{sec:phase_transition}}\label{sec: proof phase transition in emergence}
\subsection{Homogenization techniques}
Here, we briefly explain how to adapt the Neural ODEs techniques to the case with i.i.d. weights. One can take advantage of the stochastic approximation formula to use the classical martingale method for proving convergence of stochastic approximation towards diffusion processes \cite{stroock2007multidimensional,ethier2009markov}. One might hope to obtain an SDE of the form 
\begin{equation}
    \de X_i(t)= \sigma[\mu(t)](X_i(t))\de B_t,
\end{equation}
where $\sigma[\mu](x)$ is a volatibility function encoding the fluctuations, and $B_t$ is a Brownian motion.  This is the content of the following homogenization result. We work under the following assumption
\begin{assumption}\label{ass:high_order_short}
Whenever $(\bA,\bV)\sim \rho$, we assume that $\bA-\E\bA=\bW\bW'^\top$ is independent from $\bV$. Moreover, there exists $C>0$ such that $\{(\bV-\E\bV)_{ij}\}_{ij}$ and $\{(\bW)_{ij},(\bW')_{ij}\}_{ij}$ are independent subGaussian with variance proxy $C\sigma_{\bV}^2$  and $C\sigma_{\bA}^2$ respectively.
\end{assumption}
We use the notations of \citet{koubbi2026homogenized}.
Let define the following Itô SDE $(X^{\eta}(t))_{t\geq 0}\in (\S^{d-1})^{n}$ defined as 
\begin{align}
\de x_i(t)
=
\proj_{x_i(t)}
\Bigg(
&b_{\rho}[\mu(t)](x_i(t))\,\de t
+ \int_{\Theta}
\xi_{\theta}[\mu(t)](x_i(t))
\,W(\de\theta,\de t)
\Bigg) \nonumber \\
&
-\frac{1}{2}
x_i(t)
\int_{\Theta}
\left\|
\proj_{x_i(t)}
\xi_{\theta}[\mu(t)](x_i(t))
\right\|^2
\rho(\de\theta)\,\de t\,
\end{align}
for all $i\in[n]$, where we have introduced the attention-induced velocity field 
\begin{equation}\label{EQ:VELOCITY_FIELD_SELF_ATTENTION_APDX}
B_{\theta}[\mu](x)
=
\frac{1}{Z_{\bA}[\mu](x)}
\int
e^{\beta \langle \bA x, y\rangle}
\bV y \, \mu(\de y),
\end{equation}
with normalizing constant
\[
Z_{\bA}[\mu](x)
=
\int
e^{\beta \langle \bA x, y\rangle}
\, \mu(\de y).
\]
We then decompose the velocity field into its expectation and variance term i.e. 
\begin{equation*}
B_{\theta_h^\ell}[\mu](x)
=
b_{\rho}[\mu](x)
+
\xi_{\theta_h^\ell}[\mu](x),
\qquad
b_{\rho}[\mu](x):=\E_{\theta\sim \rho}B_{\theta}[\mu](x).
\end{equation*}
\begin{theorem}[Theorem 1 \cite{koubbi2026homogenized}]\label{thm:weak_error_clean}
Under Assumption~\ref{ass:high_order_short}, for any $\varphi\in C^4((\S^{d-1})^n)$ the following approximation holds uniformly until macroscopic time $t_L=\eta L$:
\begin{equation}\label{eq:bound_weak_error}
\sup_{t \in [0, t_L]}\big|
\E\varphi(X(t))-\E\varphi(X^\eta(t))
\big|
\le
C e^{Ct_L}\,\eta (t_L+1)\,\max(1,\alpha),
\end{equation}
where $C\geq 1$ depends on $\|\varphi\|_{C^4}$ but not on $\eta,\alpha,L$.
\end{theorem}
From now on, since the bound \eqref{eq:bound_weak_error} ensures that the error with the SDE is vanishing as the the number of layer is large, we study the SDE since it will ensure that the desired result.
For $R>0$, define the exit time 
\begin{equation}\label{eq:tau}
    \tau(R):=\inf\{\ell\ge 0\colon \max_{i=\{1,\ldots,n\}}\|x_i^\ell-u_1\|>R\}.
\end{equation}
Consider the iterates stopped at the hitting time $\tau(R)$ i.e $(x^{\ell \wedge \tau(R)})_{\ell\in \{1,\ldots,L\}}$ where $a\wedge b=\min\{a,b\}.$
\subsection{Proof of Theorem~\ref{thm:phase_transition_POSTLAYERNORM}} 
\begin{assumption}\label{ass:diffusive_app}
Let $(\Delta V^{\ell})_{\ell\in \mathbb{N}}$ be i.i.d random variables drawn from a common distribution $\rho \in \mathcal{P}(\reals^{d\times d})$ such that for all $(i,j)\in \{1,\ldots,d\}\times \{1,\ldots,d\}$, we have 
\begin{equation}
    (\Delta V^\ell)\overset{i.i.d}{\sim} \sqrt{L}\,\sum_{i=1}^{r}s_i u_i v_i^{\top},
\end{equation}
where $(u_i,v_i)\overset{i.i.d}{\sim}\mathcal{N}(0,\frac{I_d}{d})$ and $s_i>0$.
\end{assumption}

We state here a formal version of the theorem 
\begin{theorem}
    We have \begin{align*} 
    &\left|\E\left[\sum_{i=1}^{n}\left(1-\<x_{i}^{\ell},x\>^{2}\right)-n\sum_{i=1}^{d}\frac{\kappa}{2(\lambda_1+\frac{d\kappa}{2}-\lambda_i)}\right]\right|\\
    &\lesssim C e^{Ct_L}\,\frac{1}{L} (t_L+1)+(\kappa+\|\delta\|+\varepsilon^{3}) n \,\exp\left(-t\left(\gap+\frac{d \kappa}{2}\right)\right)+\exp(-C_{S}(t))\|\delta\|,
\end{align*}
where
\[
    \kappa:=\frac{c(s)}{d^2},
    \qquad
    c(s):=\sum_{a=1}^r\,s_a^2.
\]
\end{theorem}
The proof is divided in few lemmas on the behavior of the Itô SDE, and the conclusion arises from using the above result on the homogenization of the self-attention process.
\begin{proof}[Proof of Theorem~\ref{thm:phase_transition_POSTLAYERNORM}]
Let introduce few notations. For $\varepsilon>0$, define the local neighbourhood
\[
    \mathcal U_\varepsilon
    :=
    \left\{
    (\xi_1,\ldots,\xi_n)\in (T_x\S^{d-1})^n
    :
    \max_{1\le i\le n}\|\xi_i\|\le \varepsilon
    \right\}.
\]
For $\xi\in \mathcal U_\varepsilon$, write
\[
    X_i=\alpha_i x+\xi_i,
    \qquad
    \alpha_i:=\sqrt{1-\|\xi_i\|^2},
\]
it is equivalent to define $\xi_i=\proj_{x}X_i.$ In the following, we denote 
\begin{equation}\label{eq:Q(t)}
    Q(t)=\sum_{i=1}^{n}\|\xi_{i}(t)\|^{2},\;
    S(t)=\sum_{i=1}^{n}\|\xi_i(t)-\Bar{\xi}(t)\|^{2}, \text{ and } B(t)=\|\Bar{\xi}(t)\|^{2},
\end{equation}
where $\Bar{\xi}(t)=\frac{1}{n}\sum_{i=1}^{n}\xi_i(t).$
\begin{lemma}
We have 
\begin{equation}\label{eq:Estimate_S}
 \mathbb E[S_{t\wedge\tau_\varepsilon}]
    \le
    e^{-C_S t}S(0)
    +
    n\varepsilon^2\mathbb P(\tau_\varepsilon\le t).
\end{equation}
where $C_S\coloneqq 2\lambda_{1}+(d-2)\kappa-C(1+\kappa)\varepsilon.$
\end{lemma}
\begin{proof}
   According to derivations and results from \citet{koubbi2026homogenized}, the Itô SDE for the LoRA updates should be of the following form
\begin{align*}
    \de x_{i}(t)&=\proj_{x_i(t)}b_{\rho}[\mu(t)](x_i(t))\de t+\proj_{x_i(t)}\int_{\Theta}\Delta V m_{\bA}[\mu(t)](x_i(t))W(\de \theta,\de t)\\
    &-\frac{1}{2}\left(\int_{\Theta}\|P_{x_i(t)}\Delta V m_{\bA}[\mu(t)](x_i(t))\|^{2}\rho(\de \theta)\right)x_i(t),
\end{align*}
where we have introduced 
\begin{equation}
    m_{\bA}[\mu](x)\coloneqq\int_{\S^{d-1}}\frac{e^{\<\bA x,y\>}}{Z_{\bA}[\mu](x)}y\de \mu(y), \quad Z_{\bA}[\mu](x)=\int_{\S^{d-1}}e^{\<\bA x,y\>}\de\mu(y).
\end{equation}
An application of the Itô formula gives that the evolution of $\xi_i(t)$ is given by 
\begin{align*}
    \de \xi_{i}(t)&=\proj_{x}\proj_{x_i(t)}b_{\rho}[\mu(t)](x_i(t))\de t+\proj_{x}\proj_{x_i(t)}\int_{\Theta}\Delta V m_{\bA}[\mu(t)](x_i(t))W(\de \theta,\de t)\\
    &-\frac{1}{2}\left(\int_{\Theta}\|P_{x_i(t)}\Delta V m_{\bA}[\mu(t)](x_i(t))\|^{2}\rho(\de \theta)\right)\xi_i(t)\de t.
\end{align*}
By Itô formula once again, we have the following Itô decomposition
\begin{align*}
    \de \<\xi_i(t),\xi_j(t)\>&=\<\xi_i(t),\de \xi_j(t)\>+\<\de \xi_i(t),\xi_j(t)\>+\int_{\Theta}\<\proj_{x}\proj_{x_i(t)}\Delta V m_{\bA}[\mu(t)](x_i(t)),\proj_{x}\proj_{x_j(t)}\Delta V m_{\bA}[\mu(t)](x_j(t))\>\rho(\de \theta)\\
    &=\Bigg( \<\proj_{x_i(t)}b_{\rho}[\mu(t)](x_i(t)),\xi_{j}(t)\>+\<\proj_{x_j(t)}b_{\rho}[\mu(t)](x_j(t)),\xi_i(t)\>\\
    &-\frac{1}{2}\<\xi_{i}(t),\xi_j(t)\>\int_{\Theta}\left(\| \proj_{x}\proj_{x_i(t)}\Delta V m_{\bA}[\mu(t)](x_i(t))\|^{2}+\int_{\Theta}\| \proj_{x}\proj_{x_j(t)}\Delta V m_{\bA}[\mu(t)](x_j(t))\|^{2}\right)\rho(\de \theta)\\
    &+\int_{\Theta}\<\proj_{x}\proj_{x_i(t)}\Delta V m_{\bA}[\mu(t)](x_i(t)),\proj_{x}\proj_{x_j(t)}\Delta V m_{\bA}[\mu(t)](x_j(t))\>\rho(\de \theta)\Bigg)\de t
    \\
    &\quad
    +\sqrt{\alpha}\int_\Theta\left(
\<\proj_x\proj_{x_i(t)}\Delta V m_i(t),\xi_j(t)\> +\<\xi_i(t),\proj_x\proj_{x_j(t)}\Delta V m_j(t)\> \right)W(\de\theta,\de t).
\end{align*}
Then, for deterministic vectors $m_i,m_j$ and deterministic projectors,
\begin{equation*}\int_\Theta \inner{\proj_x\proj_{x_i}\Delta V m_i}{\proj_x\proj_{x_j}\Delta V m_j}
    \rho(\de\theta)
    =\frac{c(s)}{d^2}\inner{m_i}{m_j}
    \operatorname{Tr}\big(\proj_{x_i}\proj_x\proj_{x_j}\big).
\end{equation*}
Moreover, since $\proj_{x}x_i=\xi_i$, one has the exact identity
\[
    \operatorname{Tr}\big(\proj_{x_i}\proj_x\proj_{x_j}\big)
    =
    d-1-\|\xi_i\|^2-\|\xi_j\|^2
    +\inner{x_i}{x_j}\inner{\xi_i}{\xi_j}.
\]
Therefore
\begin{equation*}\int_\Theta
    \inner{\proj_x\proj_{x_i}\Delta V m_i}{\proj_x\proj_{x_j}\Delta V m_j}
    \rho(\de\theta)=\frac{c(s)}{d^2}\inner{m_i}{m_j}
    \left(
    d-1-\|\xi_i\|^2-\|\xi_j\|^2
    +\inner{x_i}{x_j}\inner{\xi_i}{\xi_j}
    \right).
\end{equation*}
Similarly,
\[
    a_i(t)
    =
    \int_\Theta
    \|\proj_{x_i(t)}\Delta V m_i(t)\|^2\rho(\de\theta)
    =
    \frac{c(s)}{d^2}(d-1)\|m_i(t)\|^2.
\]

Plugging these identities into the previous lemma gives the closed form
\begin{align*}
    \de \inner{\xi_i(t)}{\xi_j(t)}
    &=
    \inner{\proj_x\proj_{x_i(t)}b_\rho[\mu(t)](x_i(t))}{\xi_j(t)}\de t
    +
    \inner{\xi_i(t)}{\proj_x\proj_{x_j(t)}b_\rho[\mu(t)](x_j(t))}\de t
    \\
    &\quad
    -\frac{c(s)}{2d^2}(d-1)
    \left(\|m_i(t)\|^2+\|m_j(t)\|^2\right)
    \inner{\xi_i(t)}{\xi_j(t)}\de t
    \\
    &\quad
    +\frac{c(s)}{d^2}\inner{m_i(t)}{m_j(t)}
    \Big(
    d-1-\|\xi_i(t)\|^2-\|\xi_j(t)\|^2+\inner{x_i(t)}{x_j(t)}\inner{\xi_i(t)}{\xi_j(t)}
    \Big)\de t
    \\
    &\quad
    +\int_\Theta
    \Big(
    \inner{\proj_x\proj_{x_i(t)}\Delta V m_i(t)}{\xi_j(t)}
    +
    \inner{\xi_i(t)}{\proj_x\proj_{x_j(t)}\Delta V m_j(t)}
    \Big)W(\de\theta,\de t).
\end{align*}
We use the two following claims in the following 
\begin{lemma}
\label{claim:local_value_drift_linearization} For $\varepsilon>0$ small enough and every
$\xi\in \mathcal U_\varepsilon$, one has, for every $(i,k)\in\{1,\ldots,n\}^{2}$,
\[
\begin{aligned}
    \left\langle
        \proj_{x_i}\xi_k,
        b_\rho[\mu_x](x_i)
    \right\rangle
    &=
      \left\langle \xi_k,\bV\Bar\xi\right\rangle
    -
    \lambda_1\<\xi_i,\xi_k\>
    +
    R_i^V(\xi),
\end{aligned}
\]
where the remainders satisfy 
$\left|\sum_{i=1}^n R_i^{V}(\xi)\right|\le C_V\varepsilon Q.$ Besides, we have 
\begin{equation*}
   \left\langle
        m_{\bA}[\mu(t)](x_i(t)),
        m_{\bA}[\mu(t)](x_j(t))
    \right\rangle
    =
    1
    +
    \|\Bar\xi\|^2
    -
    \frac1n
    \sum_{k=1}^n\|\xi_k\|^2
    +
    R_{ij}^{m}(\xi)=1+B(t)-\frac{Q(t)}{n}+ R_{ij}^{m}(\xi),
\end{equation*}
where  $|R_{ij}^{m}(\xi)|\le C\|\xi\|^4.$
\end{lemma}

\begin{proof}
We write
\[
    b_\rho[\mu_x](x_i)
    =
    \sum_{j=1}^n
    \frac{\exp(\langle \bA x_i,x_j\rangle)}
    {Z_{\bA}[\mu_x](x_i)}
    \bV x_j.
\]
Since
\[
    x_j=\alpha_jx+\xi_j,
    \qquad
    \alpha_j=1-\frac12\|\xi_j\|^2+O(\|\xi_j\|^4),
\]
and $\bV x=\lambda_1x$, we have
\[
    \bV x_j
    =
    \lambda_1\alpha_jx+\bV\xi_j.
\]
Moreover,
\[
    \proj_{x_i}\xi_k
    =
    \xi_k-\langle \xi_k,x_i\rangle x_i
    =
    \xi_k-\<\xi_i,\xi_k\>x_i.
\]
Therefore
\[
\begin{aligned}
    \left\langle
        \proj_{x_i}\xi_k,
        \bV x_j
    \right\rangle
    &=
    \left\langle \xi_k,\bV\xi_j\right\rangle
    -
    \lambda_1\alpha_j\alpha_i\<\xi_i,\xi_k\>
    -
    \<\xi_k,\xi_i\>\left\langle x_i,\bV\xi_j\right\rangle .
\end{aligned}
\]
The last term is cubic on $\mathcal U_\varepsilon$, and
$\alpha_j^{2}=1+O(\varepsilon^2)$. Hence
\[
    \left\langle
        \proj_{x_i}\xi_k,
        \bV x_j
    \right\rangle
    =
   \left\langle \xi_k,\bV\xi_j\right\rangle
    -
    \lambda_1 \<\xi_i,\xi_k\>
    +
    O(\varepsilon\<\xi_i,\xi_j\>).
\]
The attention weights satisfy
\[
    \frac{\exp(\langle \bA x_i,x_j\rangle)}
    {Z_{\bA}[\mu_x](x_i)}
    =
    \frac1n+O(\varepsilon^2)
\]
uniformly on $\mathcal U_\varepsilon$, because the first-order variation vanishes in the chart around the diagonal cluster. Thus
\[
\begin{aligned}
    \left\langle
        \proj_{x_i}\xi_k,
        b_\rho[\mu_x](x_i)
    \right\rangle
    &=
    \frac1n\sum_{j=1}^n
     \left\langle \xi_k,\bV\xi_j\right\rangle
    -
    \lambda_1 \<\xi_i,\xi_k\>
    +
    R_i^V(\xi)
    \\
    &=
    \left\langle \xi_k,\bV\Bar\xi\right\rangle
    -
    \lambda_1\<\xi_i,\xi_k\>
    +
    R_i^V(\xi).
\end{aligned}
\]
and the remainder satisfies
\[
    |R^V(\xi)|
    \le
    C_V\varepsilon Q.
\]
For the second part, we have
\begin{align*}
    \|m_{\bA}[\mu(t)](x_i(t))\|^{2}
    &=
    \sum_{j,j'=1}^{n}
    \left(\alpha_j\alpha_{j'}+\langle \xi_j,\xi_{j'}\rangle\right)
    \frac{
    e^{\lambda_1\alpha_i(\alpha_j+\alpha_{j'})
    +
    \langle \bV \xi_i,\xi_j+\xi_{j'}\rangle}
    }{
    \sum_{j,j'=1}^{n}
    e^{\lambda_1\alpha_i(\alpha_j+\alpha_{j'})
    +
    \langle \bV \xi_i,\xi_j+\xi_{j'}\rangle}
    }
    \\
    &=
    1
    +
    \frac{1}{n^{2}}
    \sum_{j,j'=1}^{n}
    \langle \xi_j,\xi_{j'}\rangle
    -
    \frac{1}{n}
    \sum_{j=1}^{n}
    \|\xi_j\|^{2}
    +
    R_i^{m}(\xi),
\end{align*}
where $\alpha_j=\sqrt{1-\|\xi_j\|^2}$, and, locally around the clustered configuration, $|R_i^{m}(\xi)|\le C\|\xi\|^4.$ For the third part, we have
\begin{equation*}
    \left\langle
        m_{\bA}[\mu(t)](x_i(t)),
        m_{\bA}[\mu(t)](x_j(t))
    \right\rangle
    =
    \sum_{k,\ell=1}^{n}
    \left\langle x_k(t),x_\ell(t)\right\rangle
    \frac{
    e^{\langle \bV x_i(t),x_k(t)\rangle}
    e^{\langle \bV x_j(t),x_\ell(t)\rangle}
    }{
    Z_{\bA}[\mu(t)](x_i(t))
    Z_{\bA}[\mu(t)](x_j(t))
    }.
\end{equation*}
Since $x_k=\alpha_k x+\xi_k, \qquad\alpha_k=\sqrt{1-\|\xi_k\|^2}$, and  $\bV x=\lambda_1 x$, this can be written as
\begin{equation*}
\left\langle
        m_{\bA}[\mu(t)](x_i(t)),
        m_{\bA}[\mu(t)](x_j(t))
    \right\rangle= \sum_{k,\ell=1}^{n}
    \left(\alpha_k\alpha_\ell+\langle \xi_k,\xi_\ell\rangle\right)
    \frac{
    e^{
    \lambda_1\alpha_i\alpha_k
    +
    \langle \bV\xi_i,\xi_k\rangle
    +
    \lambda_1\alpha_j\alpha_\ell
    +
    \langle \bV\xi_j,\xi_\ell\rangle
    }
    }{
    \sum_{k,\ell=1}^{n}
    e^{
    \lambda_1\alpha_i\alpha_k
    +
    \langle \bV\xi_i,\xi_k\rangle
    +
    \lambda_1\alpha_j\alpha_\ell
    +
    \langle \bV\xi_j,\xi_\ell\rangle
    }
    }.
\end{equation*}
Now
\[
    \alpha_k\alpha_\ell
    =
    1
    -
    \frac12\|\xi_k\|^2
    -
    \frac12\|\xi_\ell\|^2
    +
    O(\|\xi\|^4),
\]
and therefore
\[
    \left\langle x_k,x_\ell\right\rangle
    =
    1
    -
    \frac12\|\xi_k\|^2
    -
    \frac12\|\xi_\ell\|^2
    +
    \langle \xi_k,\xi_\ell\rangle
    +
    O(\|\xi\|^4).
\]
Moreover, the exponential weights are equal to $1/n^2$ up to an error of order
$O(\|\xi\|^2)$. Since the weights sum to one, their order-two perturbation does not contribute against the constant term $1$, and its product with
$\langle x_k,x_\ell\rangle-1$ is of order $O(\|\xi\|^4)$. Hence
\begin{align*}
    \left\langle
        m_{\bA}[\mu(t)](x_i(t)),
        m_{\bA}[\mu(t)](x_j(t))
    \right\rangle &=
    1
    +
    \frac{1}{n^2}
    \sum_{k,\ell=1}^{n}
    \left(
        -
        \frac12\|\xi_k\|^2
        -
        \frac12\|\xi_\ell\|^2
        +
        \langle \xi_k,\xi_\ell\rangle
    \right)
    +
    R_{ij}^{m}(\xi)
    \\
    &=
    1
    -
    \frac{1}{n}
    \sum_{k=1}^{n}\|\xi_k\|^2
    +
    \frac{1}{n^2}
    \sum_{k,\ell=1}^{n}
    \langle \xi_k,\xi_\ell\rangle
    +
    R_{ij}^{m}(\xi).
\end{align*}
Thus
\[
    \left\langle
        m_{\bA}[\mu(t)](x_i(t)),
        m_{\bA}[\mu(t)](x_j(t))
    \right\rangle
    =
    1
    +
    \|\Bar\xi\|^2
    -
    \frac1n
    \sum_{k=1}^n\|\xi_k\|^2
    +
    R_{ij}^{m}(\xi),
\]
with $|R_{ij}^{m}(\xi)|\le C\|\xi\|^4.$
\end{proof}
First, by Lemma~\ref{claim:local_value_drift_linearization}, for every
$i,j\in\{1,\ldots,n\}$,
\[
\begin{aligned}
    \inner{\proj_x\proj_{x_i(t)}b_\rho[\mu(t)](x_i(t))}{\xi_j(t)}
    &=
    \inner{\proj_{x_i(t)}\xi_j(t)}{b_\rho[\mu(t)](x_i(t))}
    \\
    &=
    \inner{\xi_j(t)}{\bV\Bar\xi(t)}
    -
    \lambda_1\inner{\xi_i(t)}{\xi_j(t)}
    +
    R_{ij}^{V}(t),
\end{aligned}
\]
and similarly
\[
\begin{aligned}
    \inner{\xi_i(t)}{\proj_x\proj_{x_j(t)}b_\rho[\mu(t)](x_j(t))}
    &=
    \inner{\xi_i(t)}{\bV\Bar\xi(t)}
    -
    \lambda_1\inner{\xi_i(t)}{\xi_j(t)}
    +
    R_{ji}^{V}(t).
\end{aligned}
\]
Therefore the drift contribution of the value term is
\begin{equation*}
    \inner{\proj_x\proj_{x_i(t)}b_\rho[\mu(t)](x_i(t))}{\xi_j(t)}
    +
    \inner{\xi_i(t)}{\proj_x\proj_{x_j(t)}b_\rho[\mu(t)](x_j(t))}
    =
    \inner{\xi_i(t)+\xi_j(t)}{\bV\Bar\xi(t)}
    -
    2\lambda_1 C_{ij}(t)
    +
    R_{ij}^{V,\mathrm{pair}}(t),
\end{equation*}
where $R_{ij}^{V,\mathrm{pair}}(t):=R_{ij}^{V}(t)+R_{ji}^{V}(t)$.

By the second part of
Lemma~\ref{claim:local_value_drift_linearization},
\begin{align*}
    &-\frac{ c(s)}{2d^2}(d-1)
    \left(\|m_i(t)\|^2+\|m_j(t)\|^2\right)
    C_{ij}(t)
    +\frac{c(s)}{d^2}
    \inner{m_i(t)}{m_j(t)}
    \Big(
    d-1-q_i(t)-q_j(t)
    +\inner{x_i(t)}{x_j(t)}C_{ij}(t)
    \Big)
    \\
    &=
    \kappa
    \left(1+ B(t)-\frac{Q(t)}{n}\right)
    \Big(
        d-1-q_i(t)-q_j(t)-(d-2)C_{ij}(t)
    \Big)
    +
    R_{ij}^{\mathrm{LoRA}}(t).
\end{align*}
Equivalently, expanding only up to second order,
\begin{align*}
    &-\frac{\alpha c(s)}{2d^2}(d-1)
    \left(\|m_i(t)\|^2+\|m_j(t)\|^2\right)
    C_{ij}(t)
    +\alpha\frac{c(s)}{d^2}
    \inner{m_i(t)}{m_j(t)}
    \Big(
    d-1-q_i(t)-q_j(t)
    +\inner{x_i(t)}{x_j(t)}C_{ij}(t)
    \Big)
    \\
    &=
    \kappa
    \Big(
        d-1
        +(d-1)( B(t)-\frac{Q(t)}{n})
        -q_i(t)-q_j(t)
        -(d-2)C_{ij}(t)
    \Big)
    +
    R_{ij}^{\mathrm{LoRA}}(t),
\end{align*}
where, on the local set $\mathcal U_\varepsilon$,
\[
    |R_{ij}^{\mathrm{LoRA}}(t)|
    \le
    C\kappa\|\xi(t)\|^4.
\]

Hence the pair process satisfies the locally closed SDE
\begin{align*}
    \de C_{ij}(t)
    &=
    \Big[
        \inner{\xi_i(t)+\xi_j(t)}{\bV\Bar\xi(t)}
        -
        2\lambda_1 C_{ij}(t)
    \Big]\de t
    \\
    &\quad
    +
    \kappa
    \Big(
        d-1
        +(d-1)( B(t)-\frac{Q(t)}{n})
        -q_i(t)-q_j(t)
        -(d-2)C_{ij}(t)
    \Big)\de t
    \\
    &\quad
    +
    R_{ij}(t)\de t
    +
    \de M_{ij}(t),
\end{align*}
where $R_{ij}(t):=R_{ij}^{V,\mathrm{pair}}(t)+R_{ij}^{\mathrm{LoRA}}(t),$
and
\begin{align*}
    \de M_{ij}(t)
    &:=
    \sqrt{\alpha}\int_\Theta
    \Big(
    \inner{\proj_x\proj_{x_i(t)}\Delta V m_i(t)}{\xi_j(t)}
    +
    \inner{\xi_i(t)}{\proj_x\proj_{x_j(t)}\Delta V m_j(t)}
    \Big)W(\de\theta,\de t).
\end{align*}
In other words,
\begin{align*}
    \de \inner{\xi_i(t)}{\xi_j(t)}
    &=
    \Big[
        \inner{\xi_i(t)+\xi_j(t)}{\bV\Bar\xi(t)}
        -
        2\lambda_1 \inner{\xi_i(t)}{\xi_j(t)}
    \Big]\de t
    \\
    &\quad
    +
    \kappa
    \Big(
        d-1
        +(d-1)( B(t)-\frac{Q(t)}{n})
        -\|\xi_i(t)\|^2-\|\xi_j(t)\|^2
        -(d-2)\inner{\xi_i(t)}{\xi_j(t)}
    \Big)\de t
    \\
    &\quad
    +
    R_{ij}(t)\de t
    +
    \de M_{ij}(t).
\end{align*}
\textbf{Bounding $S$.}
Applying the above decomposition, we obtain a closed equation for the pairwise
distance
\[
    D_{ij}(t):=\|\xi_i(t)-\xi_j(t)\|^2
    =
    \|\xi_i(t)\|^2+\|\xi_j(t)\|^2
    -2\inner{\xi_i(t)}{\xi_j(t)}.
\]
Indeed,
\[
    \de D_{ij}(t)
    =
    \de \|\xi_i(t)\|^2
    +
    \de \|\xi_j(t)\|^2
    -
    2\de \inner{\xi_i(t)}{\xi_j(t)}.
\]
Using the previous formula with $(i,i)$ and $(j,j)$ gives
\begin{align*}
    \de \|\xi_i(t)\|^2
    &=
    \Big[
        2\inner{\xi_i(t)}{\bV\Bar\xi(t)}
        -
        2\lambda_1\|\xi_i(t)\|^2
    \Big]\de t
    \\
    &\quad
    +
    \kappa
    \Big(
        d-1
        +(d-1)( B(t)-\frac{Q(t)}{n})
        -
        d\|\xi_i(t)\|^2
    \Big)\de t
    \\
    &\quad
    +
    R_{ii}(t)\de t
    +
    \de M_{ii}(t),
\end{align*}
and similarly for $\|\xi_j(t)\|^2$. Therefore,
\begin{align*}
    \de D_{ij}(t)
    &=
    \Big[
        2\inner{\xi_i(t)}{\bV\Bar\xi(t)}
        +
        2\inner{\xi_j(t)}{\bV\Bar\xi(t)}
        -
        2\lambda_1\|\xi_i(t)\|^2
        -
        2\lambda_1\|\xi_j(t)\|^2
    \Big]\de t
    \\
    &\quad
    -
    2
    \Big[
        \inner{\xi_i(t)+\xi_j(t)}{\bV\Bar\xi(t)}
        -
        2\lambda_1\inner{\xi_i(t)}{\xi_j(t)}
    \Big]\de t
    \\
    &\quad
    +
    \kappa
    \Big[
        2\big(d-1+(d-1)( B(t)-\frac{Q(t)}{n})\big)
        -
        d\|\xi_i(t)\|^2
        -
        d\|\xi_j(t)\|^2
    \Big]\de t
    \\
    &\quad
    -
    2\kappa
    \Big[
        d-1
        +(d-1)( B(t)-\frac{Q(t)}{n})
        -
        \|\xi_i(t)\|^2
        -
        \|\xi_j(t)\|^2
        -
        (d-2)\inner{\xi_i(t)}{\xi_j(t)}
    \Big]\de t
    \\
    &\quad
    +
    \big(R_{ii}(t)+R_{jj}(t)-2R_{ij}(t)\big)\de t
    +
    \de M_{ii}(t)+\de M_{jj}(t)-2\de M_{ij}(t).
\end{align*}
The barycentric term cancels exactly:
\[
    2\inner{\xi_i}{\bV\Bar\xi}
    +
    2\inner{\xi_j}{\bV\Bar\xi}
    -
    2\inner{\xi_i+\xi_j}{\bV\Bar\xi}
    =
    0.
\]
Moreover, the additive LoRA contribution also cancels:
\[
    2\big(d-1+(d-1)( B-\frac{Q}{n})\big)
    -
    2\big(d-1+(d-1)( B-\frac{Q}{n})\big)
    =
    0.
\]
Thus
\begin{align*}
    \de D_{ij}(t)&=
    -2\lambda_1
    \Big(
        \|\xi_i(t)\|^2+\|\xi_j(t)\|^2
        -
        2\inner{\xi_i(t)}{\xi_j(t)}
    \Big)\de t
    -
    \kappa(d-2)
    \Big(
        \|\xi_i(t)\|^2+\|\xi_j(t)\|^2
        -
        2\inner{\xi_i(t)}{\xi_j(t)}
    \Big)\de t
    \\
    &\quad
    +
    \widetilde R_{ij}(t)\de t
    +
    \de \widetilde M_{ij}(t),
\end{align*}
where
$\widetilde R_{ij}(t):=R_{ii}(t)+R_{jj}(t)-2R_{ij}(t),$ and
$\de \widetilde M_{ij}(t):=
    \de M_{ii}(t)+\de M_{jj}(t)-2\de M_{ij}(t).$
Equivalently,
\begin{equation}\label{eq:D_{ij}}
    \de D_{ij}(t)
    =
    -
    \big(
        2\lambda_1+(d-2)\kappa
    \big)
    D_{ij}(t)\de t
    +
    \widetilde R_{ij}(t)\de t
    +
    \de \widetilde M_{ij}(t).
\end{equation}

Summing over all pairs, 
we obtain
\[
    \de S(t)
    =
    -
    \big(
        2\lambda_1+(d-2)\kappa
    \big)
    S(t)\de t
    +
    R_S(t)\de t
    +
    \de M_S(t),
\]
where
\[
    R_S(t):=
    \frac1{2n}\sum_{i,j=1}^n\widetilde R_{ij}(t),
    \qquad
    \de M_S(t):=
    \frac1{2n}\sum_{i,j=1}^n
    \de \widetilde M_{ij}(t).
\]
On the stopped interval
\[
    \tau_\varepsilon
    :=
    \inf\left\{
    t\ge 0:\max_{1\le i\le n}\|\xi_i(t)\|\ge \varepsilon
    \right\},
\]
the local Taylor remainders satisfy the centered estimate
\[
    |R_S(t)|
    \le
    C(1+\kappa)\varepsilon S(t),
    \qquad t\le \tau_\varepsilon.
\]
Hence, on $[0,\tau_\varepsilon]$,
\[
    \de S(t)
    \le
    -
    \left(
        2\lambda_1+(d-2)\kappa
        -
        C(1+\kappa)\varepsilon
    \right)
    S(t)\de t
    +
    \de M_S(t).
\]
Set
\[
    \gamma_\varepsilon
    :=
    2\lambda_1+(d-2)\kappa
    -
    C(1+\kappa)\varepsilon .
\]
We assume that \(\varepsilon>0\) is chosen small enough so that
\(\gamma_\varepsilon>0\). Up to the exit time
\[
    \tau_\varepsilon
    :=
    \inf\left\{
    t\ge 0:\max_{1\le i\le n}\|\xi_i(t)\|\ge \varepsilon
    \right\},
\]
the previous computations give
\[
    \de S(t)
    \le
    -\gamma_\varepsilon S(t)\,\de t
    +
    \de M_S(t),
    \qquad t<\tau_\varepsilon,
\]
where \(M_S\) is a local martingale. Equivalently, for every \(t\ge 0\),
\[
    S(t\wedge\tau_\varepsilon)
    \le
    S(0)
    -
    \gamma_\varepsilon
    \int_0^t
    \mathbf 1_{\{s<\tau_\varepsilon\}}S(s)\,\de s
    +
    M_S(t\wedge\tau_\varepsilon).
\]
Multiplying by the integrating factor and using Itô's formula gives
\[
    \de\left(e^{\gamma_\varepsilon t}S(t)\right)
    \le
    e^{\gamma_\varepsilon t}\,\de M_S(t),
    \qquad t<\tau_\varepsilon.
\]
Hence
\[
    e^{\gamma_\varepsilon(t\wedge\tau_\varepsilon)}
    S(t\wedge\tau_\varepsilon)
    \le
    S(0)
    +
    \int_0^{t\wedge\tau_\varepsilon}
    e^{\gamma_\varepsilon s}\,\de M_S(s).
\]
By localization and optional stopping, the martingale term has zero expectation.
Therefore
\[
    \mathbb E\left[
    e^{\gamma_\varepsilon(t\wedge\tau_\varepsilon)}
    S(t\wedge\tau_\varepsilon)
    \right]
    \le
    S(0).
\]
In particular,
\[
    \mathbb E\left[
    S(t)\mathbf 1_{\{t<\tau_\varepsilon\}}
    \right]
    \le
    e^{-\gamma_\varepsilon t}S(0).
\]
Since the paths are continuous and \(S(\tau_\varepsilon)\le Q(\tau_\varepsilon)\le n\varepsilon^2\), we also have
\[
    \mathbb E S(t\wedge\tau_\varepsilon)
    \le
    e^{-\gamma_\varepsilon t}S(0)
    +
    n\varepsilon^2\mathbb P(\tau_\varepsilon\le t).
\]
\end{proof}

\begin{lemma}
We have the following inequality
    \begin{equation}
    \left|\E[Q(t)]-q_{\infty}\right|\lesssim(\kappa+\|\delta\|+\varepsilon^{3}) n \,\exp\left(-t\left(\gap+\frac{d \kappa}{2}\right)\right)+\exp(-C_{S}(t))\|\delta\|+(n+q_\infty)\mathbb P(\tau_\varepsilon\le t),
\end{equation}
where $q_{\infty}:=
    n\sum_{a=2}^{d}
    \frac{\kappa}
    {2\left(\lambda_1+\frac{d\kappa}{2}-\lambda_a\right)}.$
\begin{proof}

We first derive the Itô sde for the barycenter of $\xi_i$. On the time interval \([0,\tau_\varepsilon)\), by previous computations,  the local barycenter dynamics has the form
\begin{align*}
    \de \Bar{\xi}(t)&=\proj_{x}\frac{1}{n}\sum_{i=1}^{n}\proj_{x_i(t)}b_{\rho}[\mu(t)](x_i(t))\de t+\proj_{x}\frac{1}{n}\sum_{i=1}^{n}\proj_{x_i(t)}\int_{\Theta}\Delta V m_{\bA}[\mu(t)](x_i(t))W(\de \theta,\de t)\\
    &-\frac{1}{2n}\sum_{i=1}^{n}\left(\int_{\Theta}\|P_{x_i(t)}\Delta V m_{\bA}[\mu(t)](x_i(t))\|^{2}\rho(\de \theta)\right)\xi_i(t)\de t.
\end{align*}
Using the same proof as in \cref{claim:local_value_drift_linearization}, we obtain that the local barycenter dynamics has the following form:
\[
    \de \Bar\xi(t)
    =
    M\Bar\xi(t)\de t
    +
    \sqrt{\kappa}\,\de \beta_t
    +
    \mathcal R_{\Bar\xi}(t)\de t
    +
    \de \mathcal N_{\Bar\xi}(t),
\]
where we introduce the tangent operator $V_T:=\proj_x\bV\proj_x\big|_{T_x\S^{d-1}}$,
and the effective linearized barycenter drift $M:=V_T-\lambda_1 I_{T_x\S^{d-1}}-\frac{d\kappa}{2}I_{T_x\S^{d-1}}$, and \(\beta_t\) is a standard Brownian motion on \(T_x\S^{d-1}\), and the
terms \(\mathcal R_{\Bar\xi}\) and \(\mathcal N_{\Bar\xi}\) collect the local
Taylor remainders. All the following computations are performed on the event
\(\{t<\tau_\varepsilon\}\). The contribution of the complementary event
\(\{\tau_\varepsilon\le t\}\) will be estimated at the end. Denote $\Gamma(t)=\Bar{\xi(t)}\Bar{\xi(t)}^{\top}$, we have 
       \begin{align*}
    \de\Gamma(s)&=\left(M\Gamma(s)+\Gamma(s)M^{\top}+\kappa I_{x^{\perp}}\right)\de s+\\
        &+\sqrt{\kappa}\de \beta_{s}\Bar{\xi}(s)^{\top}+\Bar{\xi}(s)\de \beta_s^{\top} +\cR_{\Bar{\xi}(s)}\Bar{\xi}(s)^{\top}+\Bar{\xi}(s)\cR_{\Bar{\xi}(s)}^{\top}\de s\\
        &+\cE_{\Bar{\xi}}(t) \de t+\de \cM_{\Gamma(t)},
    \end{align*}
    where the two last terms collect i) the quadratic variations erors coming from $\cN_{\Bar{\xi}}$ and the second is a martingale term.
    
    \textbf{Solution of the linearized equation}
    The linearized equation for the covariance is given by $\Gamma_{lin}(t)$ following the Lyapunov equation 
    \begin{equation}
        \frac{\de}{\de t}\Gamma_{lin}(t)=M\Gamma_{lin}(t)+\Gamma_{lin}(t)M^{\top}+\kappa I_{T_{x}\S^{d-1}},\quad \Gamma_{lin}(0)=\Bar{\xi}\otimes \Bar{\xi}.
    \end{equation}
    The solution of this linear SDE is given by 
    \begin{equation}\label{eq:linear_SDE}
        \Gamma_{lin}(t)=e^{tM}\Gamma_{lin}(0)e^{tM^{\top}}+\kappa \int_{0}^{t}e^{sM}I_{x^{\perp}}e^{sM^{\top}}\de s.
    \end{equation}
    \textbf{Comparison between the linearized equation and the original SDE}
    We differentiate $\Gamma-\Gamma_{lin}$, and we obtain 
    \begin{align*}
    \de\Gamma(s)-\de\Gamma_{lin}(s)&=M(\Gamma(s)-\Gamma_{lin}(s))+(\Gamma(s)-\Gamma_{lin}(s))M^{\top}\\
        &+\sqrt{\kappa}\de \beta_{s}\Bar{\xi}(s)^{\top}\de s+\Bar{\xi}(s)\de \beta_s^{\top} +\cR_{\Bar{\xi}(s)}\Bar{\xi}(s)^{\top}+\Bar{\xi}(s)\cR_{\Bar{\xi}(s)}^{\top}\de s\\
       &+\cE_{\Bar{\xi}}(t) \de t+\de \cM_{\Gamma(t)},
    \end{align*}
We now use Itô formula to $s \mapsto e^{(t-s)M}(\Gamma(s)-\Gamma_{lin}(s))e^{(t-s)M^{\top}}$, and we obtain 
\begin{align*}
    &\de e^{(t-s)M}(\Gamma(s)-\Gamma_{lin}(s))e^{(t-s)M^{\top}}\\
    &=-e^{(t-s)M}(M\Gamma(s)-M\Gamma_{lin}(s))e^{(t-s)M^{\top}}\de s-e^{(t-s)M}(\Gamma(s)M^{\top}-\Gamma_{lin}(s)M^{\top})e^{(t-s)M^{\top}}\de s\\
    &+e^{(t-s)M}(M(\Gamma(s)-\Gamma_{lin}(s))+(\Gamma(s)-\Gamma_{lin}(s))M^{\top})e^{(t-s)M^{\top}}\de s\\
    &+e^{(t-s)M}\left(\sqrt{\kappa}\de \beta_{s}\Bar{\xi}(s)^{\top}+\Bar{\xi}(s)\de \beta_s^{\top} +\cR_{\Bar{\xi}(s)}\Bar{\xi}(s)^{\top}+\Bar{\xi}(s)\cR_{\Bar{\xi}(s)}^{\top}\de s\right)e^{(t-s)M^{\top}}\\
    &+e^{(t-s)M}\left(\cE_{\Bar{\xi}}(t) \de t+\de \cM_{\Gamma(t)} \right)e^{(t-s)M^{\top}}\\
    &=e^{(t-s)M}\left(\sqrt{\kappa}\de \beta_{s}\Bar{\xi}(s)^{\top}+\Bar{\xi}(s)\de \beta_s^{\top} +\cR_{\Bar{\xi}(s)}\Bar{\xi}(s)^{\top}+\Bar{\xi}(s)\cR_{\Bar{\xi}(s)}^{\top}\de s\right)e^{(t-s)M^{\top}}\\
    &+e^{(t-s)M}\left(\cE_{\Bar{\xi}}(t) \de t+\de \cM_{\Gamma(t)} \right)e^{(t-s)M^{\top}}
\end{align*}
Integrating this, we obtain 
\begin{align*}
\Gamma(t)-\Gamma_{lin}(t)&=\int_{0}^{t}e^{(t-s)M}\left(\sqrt{\kappa}\de \beta_{s}\Bar{\xi}(s)^{\top}+\Bar{\xi}(s)\de \beta_s^{\top} \right)e^{(t-s)M^{\top}} 
\\&+\int_{0}^{t}e^{(t-s)M}\left(\cR_{\Bar{\xi}(s)}\Bar{\xi}(s)^{\top}+\Bar{\xi}(s)\cR_{\Bar{\xi}(s)}^{\top}+\cN_{\Bar{\xi}(s)}\Bar{\xi}(s)^{\top}+\Bar{\xi}(s)\cN_{\Bar{\xi}(s)}^{\top}\right)e^{(t-s)M^{\top}}\de s
\end{align*}
Recall that we have the following bounds
\begin{equation*}
    \|\cR_{\Bar{\xi}(s)}\|\lesssim (1+\kappa)\varepsilon\left(\|\Bar{\xi}\|+\frac{S^{1/2}}{\sqrt{n}}\right),\quad |\cR_{\Bar{\xi}(s)}\Bar{\xi}(s)^{\top}+\Bar{\xi}(s)\cR_{\Bar{\xi}(s)}^{\top}|\lesssim(1+\kappa)\varepsilon (\|\Bar{\xi}\|^{2}+\frac{S}{n}).
\end{equation*}
Taking expectation, we obtain 
\begin{equation}\label{eq:error_Gamma_lin}
    \left|\E[\Tr(\Gamma(t))]-\Tr(\Gamma_{lin}(t))\right|\lesssim (1+\kappa)\varepsilon \int_{0}^{t}\|\Bar{\xi}(s)\|^{2}\sum_{i=1}^{d}e^{2\lambda_{i}(M)(t-s)}\de s.
\end{equation}
Besides, taking the trace in \eqref{eq:linear_SDE}, we obtain 
\begin{align*}
    \E\left[\Tr\left(\Gamma_{lin}(t)\right)\right]&=\Tr\left(e^{tM}\Gamma_{lin}(0)e^{tM^{\top}}\right)+\kappa\int_{0}^{t}\Tr\left(e^{sM}I_{x^{\perp}}e^{sM^{\top}}\right)\de s\\
    &=\<e^{tM}\Bar{\xi}(0),e^{tM}\Bar{\xi}(0)\>+\kappa \int_{0}^{t}\sum_{i=1}^{d}\<e^{sM}e_i,e^{sM}e_i\>\de s\\
    &=\<e^{tM}\Bar{\xi}(0),e^{tM}\Bar{\xi}(0)\>+\kappa \int_{0}^{t}\sum_{i=2}^{d}e^{2s(\lambda_i-\lambda_1-\frac{d\kappa}{2})}\de s\\
    &=\sum_{i=2}^{d}\frac{\kappa}{2(-\lambda_{i}+\lambda_1+\frac{d\kappa}{2})}-e^{2t(\lambda_i-\lambda_1-\frac{d\kappa}{2})}\sum_{i=2}^{d}\frac{\kappa}{2(-\lambda_{i}+\lambda_1+\frac{d\kappa}{2})}+\<e^{tM}\Bar{\xi}(0),e^{tM}\Bar{\xi}(0)\>.
\end{align*}
Using the comparison with the linearized covariance, we obtain on the local
event \(\{t<\tau_\varepsilon\}\)
\[
\left|
\E\left[
B(t)\mathbf 1_{\{t<\tau_\varepsilon\}}
\right]
-
\operatorname{Tr}\Gamma_{\rm lin}(t)\,
\mathbb P(t<\tau_\varepsilon)
\right|
\lesssim
(\kappa+\|\delta\|+\varepsilon^3)
e^{-2t(\gap+d\kappa/2)}
+
\frac{1}{n}e^{-C_S t}S_0.
\]
Since
\[
    Q(t)=S(t)+nB(t),
\]
and since
\[
    \E\left[
    S(t)\mathbf 1_{\{t<\tau_\varepsilon\}}
    \right]
    \le e^{-C_S t}S_0,
\]
we deduce
\[
\left|
\E\left[
(Q(t)-q_\infty)\mathbf 1_{\{t<\tau_\varepsilon\}}
\right]
\right|
\lesssim
(\kappa+\|\delta\|+\varepsilon^3)n
e^{-2t(\gap+d\kappa/2)}
+
e^{-C_S t}S_0.
\]
Finally,
\[
\begin{aligned}
\left|
\E[Q(t)]-q_\infty
\right|
&\le
\left|
\E\left[
(Q(t)-q_\infty)\mathbf 1_{\{t<\tau_\varepsilon\}}
\right]
\right|
\\
&\quad
+
\E\left[
Q(t)\mathbf 1_{\{\tau_\varepsilon\le t\}}
\right]
+
q_\infty\mathbb P(\tau_\varepsilon\le t).
\end{aligned}
\]
Since \(0\le Q(t)\le n\), this gives
\[
\left|
\E[Q(t)]-q_\infty
\right|
\lesssim
(\kappa+\|\delta\|+\varepsilon^3)n
e^{-2t(\gap+d\kappa/2)}
+
e^{-C_S t}S_0
+
(n+q_\infty)\mathbb P(\tau_\varepsilon\le t).
\]
\end{proof}
\end{lemma}
We bound the probability of the event \(\{\tau_{\varepsilon}\leq t\}\).
Recall that
\[
    \tau_\varepsilon
    :=
    \inf\left\{
    s\ge 0:
    \max_{i=1,\ldots,n}\|\xi_i(s)\|\ge \varepsilon
    \right\}.
\]
Using the decomposition
\[
    \xi_i=\Bar{\xi}+(\xi_i-\Bar{\xi}),
\]
we have, for every \(s\ge 0\),
\[
    \max_{i=1,\ldots,n}\|\xi_i(s)\|
    \le
    \|\Bar{\xi}(s)\|
    +
    \max_{i=1,\ldots,n}\|\xi_i(s)-\Bar{\xi}(s)\|
    \le
    \|\Bar{\xi}(s)\|+\sqrt{S(s)}.
\]
Therefore, if \(\tau_\varepsilon\le t\), then either
\[
    \sup_{0\le s\le t\wedge\tau_\varepsilon}
    \|\Bar{\xi}(s)\|
    \ge
    \frac{\varepsilon}{2},
\]
or
\[
    \sup_{0\le s\le t\wedge\tau_\varepsilon}
    S(s)
    \ge
    \frac{\varepsilon^2}{4}.
\]
Consequently,
\[
    \mathbb P(\tau_\varepsilon\le t) \le
    \mathbb P\left(
    \sup_{0\le s\le t\wedge\tau_\varepsilon}
    \|\Bar{\xi}(s)\|
    \ge
    \frac{\varepsilon}{2}
    \right) +
    \mathbb P\left(
    \sup_{0\le s\le t\wedge\tau_\varepsilon}
    S(s)
    \ge
    \frac{\varepsilon^2}{4}
    \right).
\]
The second term is controlled by \eqref{eq:Estimate_S}. Indeed,
\[
    e^{C_S(s\wedge\tau_\varepsilon)}
    S(s\wedge\tau_\varepsilon)
\]
is a nonnegative supermartingale. Hence, by Doob's maximal inequality,
\[
\begin{aligned}
    \mathbb P\left(
    \sup_{0\le s\le t\wedge\tau_\varepsilon}
    S(s)
    \ge
    \frac{\varepsilon^2}{4}
    \right)
    &\le
    \mathbb P\left(
    \sup_{0\le s\le t}
    e^{C_S(s\wedge\tau_\varepsilon)}
    S(s\wedge\tau_\varepsilon)
    \ge
    \frac{\varepsilon^2}{4}
    \right)
    \\
    &\le
    \frac{4S_0}{\varepsilon^2}.
\end{aligned}
\]
It remains to control the barycenter. Introduce $D(t)=\cM(t)+\cA(t)$, where $\cM$ is a martingale term defined as 
\[
\cM(t)=\sqrt{\kappa}\int_{0}^{t}e^{(t-s)M}\left(\de \beta_{s}\Bar{\xi}(s)^{\top}+\Bar{\xi}(s)\de \beta_s^{\top} \right)e^{(t-s)M^{\top}},
\]
and $\cA$ is a Taylor expansion term i.e.
\[
\cA(t)=\int_{0}^{t}e^{(t-s)M}\left(\cR_{\Bar{\xi}(s)}\Bar{\xi}(s)^{\top}+\Bar{\xi}(s)\cR_{\Bar{\xi}(s)}^{\top}+\cN_{\Bar{\xi}(s)}\Bar{\xi}(s)^{\top}+\Bar{\xi}(s)\cN_{\Bar{\xi}(s)}^{\top}\right)e^{(t-s)M^{\top}}\de s.
\]
Using computations made in the above lemma, we have
\begin{equation*}
\Gamma(t)-\Gamma_{lin}(t)=D(t).
\end{equation*}
We now bound the probability of deviations of the supremum of $D(t).$ Using BDG inequality, we have 
\begin{equation*}
    \P(\sup_{s\leq t\wedge\tau_{\varepsilon}}\Tr\cM(s)\geq u)\leq C \frac{[\Tr \cM]_t}{u^{2}}
\end{equation*}
The quadratic variation of this terms is bounded by 
\[
[\Tr \cM]_t\lesssim \kappa\sum_{i=2}^{d}\int_{0}^{t}e^{4(t-s)\lambda(M)_i}\<e_i,\Bar{\xi}(s)\>^{2}\de s\lesssim \frac{\kappa}{4\lambda(M)_2}\varepsilon^{2}.
\]
For the second term, we use the local estimate 
\begin{equation*}
    \|\cR_{\Bar{\xi}(s)}\|\lesssim (1+\kappa)\varepsilon\left(\|\Bar{\xi}\|+\frac{S^{1/2}}{\sqrt{n}}\right),\quad |\cR_{\Bar{\xi}(s)}\Bar{\xi}(s)^{\top}+\Bar{\xi}(s)\cR_{\Bar{\xi}(s)}^{\top}|\lesssim(1+\kappa)\varepsilon (\|\Bar{\xi}\|^{2}+\frac{S}{n}),
\end{equation*}
and we obtain
\begin{equation*}
  \sup_{s\leq t\wedge\tau_{\varepsilon}}\Tr\cA(s)\lesssim \int_{0}^{t\wedge\tau_{\varepsilon}}e^{-2a_{*}(t-s)}\left((1+\kappa)\varepsilon(B(s)+\frac{S(s)}{n})+\kappa \left(\varepsilon^{2}B(s)+\frac{S(s)}{n}+\varepsilon^{4}\right)\right)\de s
\end{equation*}
Taking expectations, we obtain 
\begin{equation*}
    \E[\sup_{s\leq t\wedge\tau_{\varepsilon}}\Tr\cA(s)]\lesssim \frac{(1+\kappa)\varepsilon}{a_{*}}(B_0+b_{\infty}+\frac{S_0}{nC_S})+\frac{\kappa}{a_{*}}\left(\varepsilon^{2}(B_0+b_{\infty})+\frac{S_0}{nC_S}+\varepsilon^{4}\right).
\end{equation*}
Then, by Markov inequality, we obtain 
\begin{align*}
    \P\left(\sup_{0\leq s\leq t\wedge \tau_{\varepsilon}}\left|\;\|\Bar{\xi}(s)\|^{2}-\Gamma_{lin}(s)\; \right|\geq r\right)&\lesssim \frac{\kappa\varepsilon^{2}}{4\lambda(M)_2r^2}+ \frac{(1+\kappa)\varepsilon}{a_{*}r}(B_0+b_{\infty}+\frac{S_0}{nC_S})+\frac{\kappa}{a_{*}r}\left(\varepsilon^{2}(B_0+b_{\infty})+\frac{S_0}{nC_S}+\varepsilon^{4}\right)\\
    &\lesssim \frac{\kappa\varepsilon^{2}}{4\lambda(M)_2r^2}+\frac{(1+\kappa)\varepsilon}{a_{*}r}\left(\delta+b_{\infty}\right)+\frac{\kappa}{a_{*}r}\left(\varepsilon^{2}b_\infty +\delta\right).
\end{align*}
Combining both estimates, we have that 
\begin{equation*}
     \mathbb P(\tau_\varepsilon\le t)\leq \frac{\kappa}{4\lambda(M)_2}+\frac{(1+\kappa)}{a_{*}}\left(\delta+b_{\infty}\right)+\frac{\kappa}{a_{*}}\left(\varepsilon b_\infty +\frac{\delta}{\varepsilon}\right) \frac{4\delta^{2}}{\varepsilon^2}.
\end{equation*}

Denote $(x_{i}^{\ell})_{i\in\{1,\ldots,n\}}$ the iterates of the Post-Layer norm transformers as described in the main paper. By \cref{thm:weak_error_clean}, we have under Assumption~\ref{ass:high_order_short}, for any $\varphi\in C^4((\S^{d-1})^n)$ the following approximation holds uniformly until macroscopic time $t_L=\eta L$:
\begin{equation}\label{eq:bound_weak_error_2}
\sup_{t \in [0, t_L]}\big|
\E\varphi(x(t))-\E\varphi(x^\eta(t))
\big|
\le
C e^{Ct_L}\,\eta (t_L+1)\,\max(1,\alpha),
\end{equation}
Applying it to $\varphi:x_i\mapsto 1-\<x_i,x\>^{2}$, we obtain 
\begin{align*}
    &\left|\E\left[\sum_{i=1}^{n}\left(1-\<x_{i}^{\ell},x\>^{2}\right)-n\sum_{i=1}^{d}\frac{\kappa}{2(\lambda_1+\frac{d\kappa}{2}-\lambda_i)}\right]\right|\\
    &\lesssim C e^{Ct_L}\,\frac{1}{L} (t_L+1)+(\kappa+\|\delta\|+\varepsilon^{3}) n \,\exp\left(-2t\left(\gap+\frac{d \kappa}{2}\right)\right)+\exp(-C_{S}(t))\|\delta\| +C(\varepsilon).
\end{align*}
Besides, it is clear that in the scaling $\eta_{L}\ll 1/\sqrt{L}$ that the dynamic is given by 
\begin{equation*}
    \frac{\de}{\de t}x_{i}(t)=\proj_{x_i(t)}\E[\sum_{j=1}^{n}\frac{\exp(\<\bA x_i(t),x_j(t)\>)}{Z_{\bA}[\mu_{X(t)}(x_i)]}(\bV +\Delta V)x_j(t)]=\proj_{x_i(t)}\sum_{j=1}^{n}\frac{\exp(\<\bA x_i(t),x_j(t)\>)}{Z_{\bA}[\mu_{X(t)}(x_i)]}\bV x_j(t),
\end{equation*}
which is the original dynamics and then the behavior is given by the Cone collapse lemma, all tokens converge to the biggest eigenvector in the same hemisphere.
\end{proof}
\subsection{Proof of Theorem \ref{thm:depth_phase_transition}}\label{sec:proof_thm_5.6}
\begin{proof}
We prove the two statements separately.

\paragraph{Stability for small depths.}

We use the finite-time Wasserstein stability estimate from
Proposition~\ref{prop: Wass pert propre}. Namely, for every finite time horizon
$T>0$, there exists $C_T>0$ such that
\[
    \sup_{0\leq t\leq T}W_2(\mu_t,\nu_t)
    \leq
    C_T\|\widetilde V-V\|_{\op},
\]
where
\[
    \mu_t:=\frac1n\sum_{i=1}^n\delta_{z_i(t)},
    \qquad
    \nu_t:=\frac1n\sum_{i=1}^n\delta_{\widetilde z_i(t)}.
\]
Taking $T=10T_\delta$ and choosing
\[
    \eta(\delta)
    :=
    \frac{\delta}{2\sqrt n\,C_{10T_\delta}},
\]
we obtain
\[
    \sup_{0\leq t\leq 10T_\delta}
    W_2(\mu_t,\nu_t)
    \leq
    \frac{\delta}{2\sqrt n}.
\]

Fix $t\in[T_\delta,10T_\delta]$. By definition of $T_\delta$,
\[
    \mathrm d(z_i(t),\mathcal C)\leq\delta
    \qquad
    \text{for every }i\in\{1,\dots,n\}.
\]
Let
\[
    E:=\left\{
        y\in\mathbb R^d:
        \mathrm d(y,\mathcal C)>2\delta
    \right\}.
\]
Let $\pi$ be an optimal coupling between $\mu_t$ and $\nu_t$. If $x$ belongs to
the support of $\mu_t$ and $y\in E$, then the triangle inequality gives
\[
    \|x-y\|
    \geq
    \mathrm d(y,\mathcal C)-\mathrm d(x,\mathcal C)
    >
    \delta.
\]
Hence
\[
    W_2(\mu_t,\nu_t)^2
    =
    \int\|x-y\|^2\,\mathrm d\pi(x,y)
    \geq
    \delta^2\pi(\mathbb R^d\times E).
\]
Since the second marginal of $\pi$ is $\nu_t$,
\[
    \pi(\mathbb R^d\times E)
    =
    \nu_t(E)
    =
    \frac1n
    \#\left\{
        i:
        \mathrm d(\widetilde z_i(t),\mathcal C)>2\delta
    \right\}.
\]
Therefore
\[
    \#\left\{
        i:
        \mathrm d(\widetilde z_i(t),\mathcal C)>2\delta
    \right\}
    \leq
    \frac{nW_2(\mu_t,\nu_t)^2}{\delta^2}
    \leq
    \frac14.
\]
The left-hand side is an integer, hence it is zero. Thus
\[
    \widetilde S_{2\delta}(t)=\{1,\dots,n\}.
\]
Since $t\in[T_\delta,10T_\delta]$ was arbitrary, this proves the stability
statement.

\paragraph{Bifurcation at larger depths.}

We now prove the bifurcation statement. We first establish the spectral
dominance estimate.

For notational simplicity, in this paragraph we omit tildes and write
$z_i(t)$ for the modified tokens. Set $m_i(t):=\ell(z_i(t)).$ The attention logit can be written as
\[
    w_{ij}(t)
    :=
    \left\langle
        Qe^{t\widetilde V}z_i(t),
        Ke^{t\widetilde V}z_j(t)
    \right\rangle .
\]
Using the spectral expansion
\[
    e^{t\widetilde V}z_i(t)
    =
    \sum_{p=1}^d
    e^{\lambda_p t}\varphi_p^*(z_i(t))\varphi_p,
\]
we obtain
\[
    w_{ij}(t)
    =
    c_{11}e^{2\lambda_1t}m_i(t)m_j(t)
    +
    R_{ij}(t),
\]
where
\[
    R_{ij}(t)
    :=
    \sum_{(p,q)\neq(1,1)}
    c_{pq}
    e^{(\lambda_p+\lambda_q)t}
    \varphi_p^*(z_i(t))\varphi_q^*(z_j(t)).
\]
Since $\lambda_1>|\lambda_2|$ and
$\mathrm{gap}=\lambda_1-|\lambda_2|$, for every $(p,q)\neq(1,1)$ we have
\[
    \lambda_p+\lambda_q\leq 2\lambda_1-\mathrm{gap}.
\]
Therefore, as long as $\widetilde S_{2\delta}(t)=\{1,\dots,n\}$, the uniform
bound in Assumption~\ref{assump:spectral_escape} gives
\[
    |R_{ij}(t)|
    \leq
    C_{\mathrm{sub}}e^{(2\lambda_1-\mathrm{gap})t}.
\]

Let $i_-$ be the token associated with the non-maximal cluster $c_-$, and let
$j_+$ be a token associated with the maximal cluster $c_+$. If the exit has not
occurred before time $t$, then, by continuity and by the separation of the
cluster tubes induced by the inequalities in
Assumption~\ref{assump:spectral_escape}, these tokens remain in their respective
cluster tubes i.e.
\[
    \|z_{i_-}(t)-c_-\|\leq2\delta,
    \qquad
    \|z_{j_+}(t)-c_+\|\leq2\delta.
\]
In particular,
\[
    m_{i_-}(t)
    \geq
    m_*-2\|\ell\|\delta
    \geq
    \frac{m_*}{2}.
\]
Moreover, for any token $r$ not belonging to the maximal cluster tube,
\[
    m_{j_+}(t)-m_r(t)
    \geq
    D_+-4\|\ell\|\delta
    \geq
    \frac{D_+}{2}.
\]
Hence
\begin{align*}
    w_{i_-j_+}(t)-w_{i_-r}(t)
    &=
    c_{11}e^{2\lambda_1t}
    m_{i_-}(t)\bigl(m_{j_+}(t)-m_r(t)\bigr)
    +
    R_{i_-j_+}(t)-R_{i_-r}(t) \\
    &\geq
    c_{11}e^{2\lambda_1t}
    \frac{m_*}{2}\frac{D_+}{2}
    -
    2C_{\mathrm{sub}}e^{(2\lambda_1-\mathrm{gap})t} \\
    &=
    a_*e^{2\lambda_1t}
    -
    2C_{\mathrm{sub}}e^{(2\lambda_1-\mathrm{gap})t}.
\end{align*}
By the definition of $T_{\mathrm{dom}}(\delta)$, for all
$t\geq T_{\mathrm{dom}}(\delta)$,
\[
    2C_{\mathrm{sub}}e^{(2\lambda_1-\mathrm{gap})t}
    \leq
    \frac{a_*}{2}e^{2\lambda_1t}.
\]
Thus we have $w_{i_-j_+}(t)-w_{i_-r}(t)
    \geq
    \frac{a_*}{2}e^{2\lambda_1t}.$ Equivalently,
\[
    e^{w_{i_-r}(t)}
    \leq
    e^{w_{i_-j_+}(t)}
    \exp\left(
        -\frac{a_*}{2}e^{2\lambda_1t}
    \right).
\]
Summing over all non-maximal tokens gives
\[
    \sum_{r\notin I_+}P_{i_-r}(t)
    \leq
    n\exp\left(
        -\frac{a_*}{2}e^{2\lambda_1t}
    \right).
\]
The second term in the definition of $T_{\mathrm{dom}}(\delta)$ ensures that
\[
    \sum_{r\notin I_+}P_{i_-r}(t)
    \leq
    p_*.
\]
We now use the evolution of the leading coordinate. Since
$\ell=\varphi_1^*$, the modified dynamics gives
\[
    \frac{\mathrm d}{\mathrm dt}m_{i_-}(t)
    =
    \lambda_1
    \sum_{j=1}^n
    P_{i_-j}(t)
    \bigl(m_j(t)-m_{i_-}(t)\bigr).
\]
For $j$ in the maximal cluster tube, we have
\[
    m_j(t)-m_{i_-}(t)
    \geq
    D_- -4\|\ell\|\delta
    \geq
    \frac{D_-}{2}.
\]
For arbitrary $j$, the uniform bound gives
\[
    |m_j(t)-m_{i_-}(t)|\leq2M.
\]
Therefore
\begin{align*}
    \frac{\mathrm d}{\mathrm dt}m_{i_-}(t)
    &\geq
    \lambda_1
    \left[
        \frac{D_-}{2}
        \sum_{j\in I_+}P_{i_-j}(t)
        -
        2M
        \sum_{j\notin I_+}P_{i_-j}(t)
    \right] \\
    &=
    \lambda_1
    \left[
        \frac{D_-}{2}
        -
        \left(\frac{D_-}{2}+2M\right)
        \sum_{j\notin I_+}P_{i_-j}(t)
    \right] \\
    &\geq
    \lambda_1
    \left[
        \frac{D_-}{2}
        -
        \left(\frac{D_-}{2}+2M\right)p_*
    \right].
\end{align*}
By the definition of $p_*$, the last quantity is bounded below by $\frac{\lambda_1D_-}{4}.$
Thus, for every $t\geq T_{\mathrm{dom}}(\delta)$, as long as the modified
dynamics has not exited the original $2\delta$-clustered regime,
\[
    \frac{\mathrm d}{\mathrm dt}\ell(\widetilde z_{i_-}(t))
    \geq
    \frac{\lambda_1D_-}{4}.
\]

We now conclude. Set $t_0:=\max\{T_\delta,T_{\mathrm{dom}}(\delta)\}.$ If $\widetilde S_{2\delta}(t_0)\neq\{1,\dots,n\},$
then $T^*(\delta)\leq t_0$ and there is nothing to prove. Otherwise, the token
$i_-$ lies in the $2\delta$-tube around $c_-$. Hence
\[
    |\ell(\widetilde z_{i_-}(t_0))-\ell(c_-)|
    \leq
    2\|\ell\|\delta.
\]
As long as the token remains inside this same tube, one must have
\[
    |\ell(\widetilde z_{i_-}(t))-\ell(c_-)|
    \leq
    2\|\ell\|\delta.
\]
But the drift lower bound implies that this cannot remain true for longer than
\[
    \frac{4\|\ell\|\delta}{\lambda_1D_-/4}
    =
    \frac{16\|\ell\|\delta}{\lambda_1D_-}.
\]
Consequently, the modified dynamics exits the original $2\delta$-clustered
regime before time
\[
    t_0+\frac{16\|\ell\|\delta}{\lambda_1D_-}.
\]
Thus
\[
    T^*(\delta)
    \leq
    \max\{T_\delta,T_{\mathrm{dom}}(\delta)\}
    +
    \frac{16\|\ell\|\delta}{\lambda_1D_-}.
\]
This proves the bifurcation estimate. The final asymptotic bound follows from
the definition of $T_{\mathrm{dom}}(\delta)$.
\end{proof}

\section{Numerical analysis}
In this section, we will detail the numerical experiments we have carried out to illustrate the results we have obtained and then compare them with real cases.

\begin{figure}[!h]
        \includegraphics[width=0.3\linewidth]{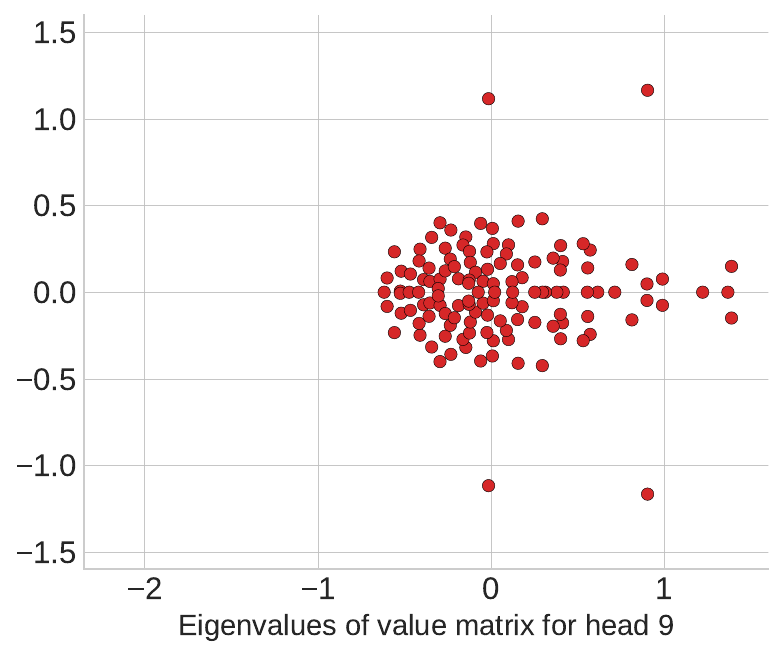}
    \includegraphics[width=0.3\linewidth]{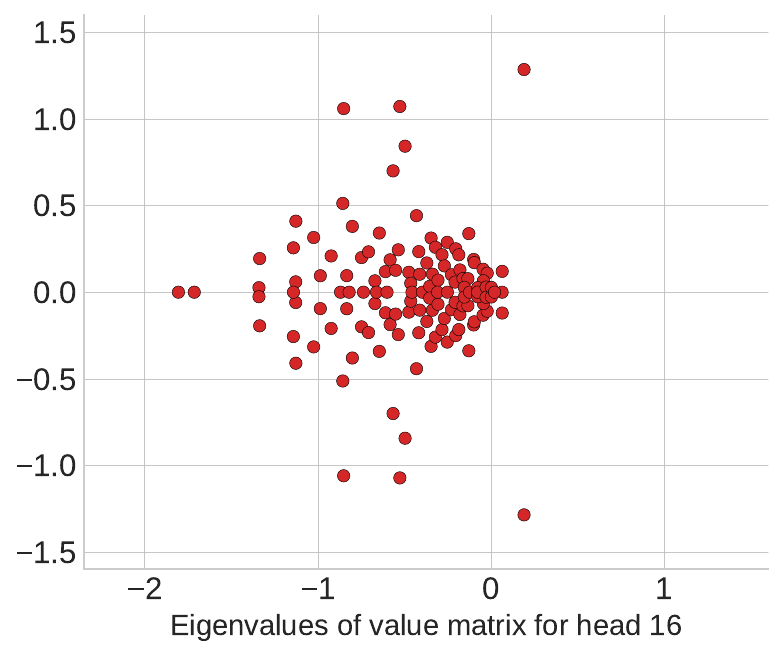}\includegraphics[width=0.3\linewidth]{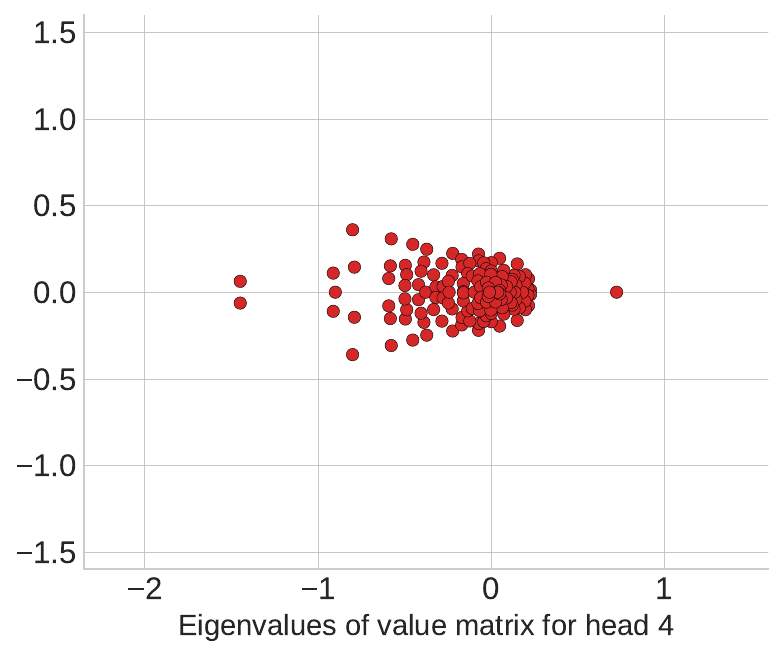}
    
    \caption{The eigenvalues of $V$ in the pre-trained ALBERT model (Head 9,16 and 4). Notice the spectral gap between the largest and second-largest eigenvalues.}
    \label{fig:PhaseTransition V_pert}
\end{figure}
\subsection{Simulation of self-attention dynamic}\label{sec: simulations}
Unless indicated otherwise, all figures presented in this paper were
generated by discretizing the underlying dynamics (either \eqref{eq:self_attention_ode} or \eqref{d: self-attention dynamics z} using a
fourth-order Runge-Kutta scheme with a step size of 0.1 inspired from the code of the paper \citet{geshkovski2023emergence}. Depending on the figure, we implement different types of tokens initialization, and also different attention parameters.
\subsubsection{Figure \eqref{fig: Phasetransition} illustrating \cref{thm:depth_phase_transition}}

For the \cref{fig: Phasetransition}, we have taken the same setting as the phase diagram i.e $\Tilde{Q}=Q=\Tilde{K}=K=I_2$, and $V=I_2$, $\Tilde{V}=\begin{pmatrix}
    1&0\\
    0&1-\varepsilon\\\end{pmatrix}$
    with $\varepsilon=0.01$.
\subsection{Illustration and numerical analysis of Theorem 4.2}
\begin{figure}[h]
    \centering
    \includegraphics[width=0.7\linewidth]{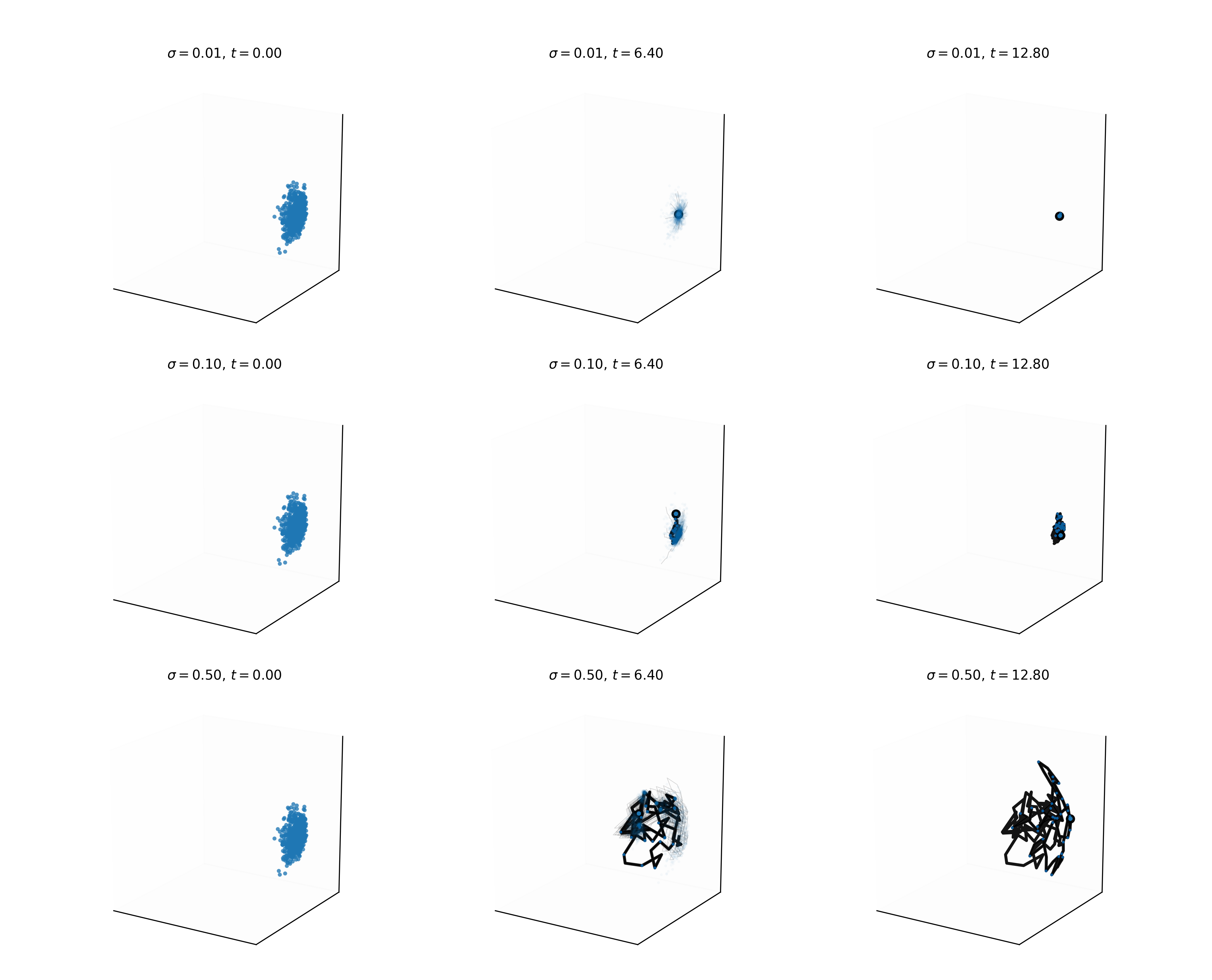}
    \caption{Particles stages for small, intermediate and large noise i.e $(\sigma \in \{0.01,0.10,0.5.\}.$
    The blurred lines shows the evolution of the particles over time. We observe that the particles converge to a single point i.e synchronize, and the magnitude of deviation around this point is dependent on the noise magnitude $\sigma$. }
    \label{fig:THM4.2}
\end{figure}
We also empirically test the result
\begin{figure}[h]
    \centering
    \includegraphics[width=0.7\linewidth]{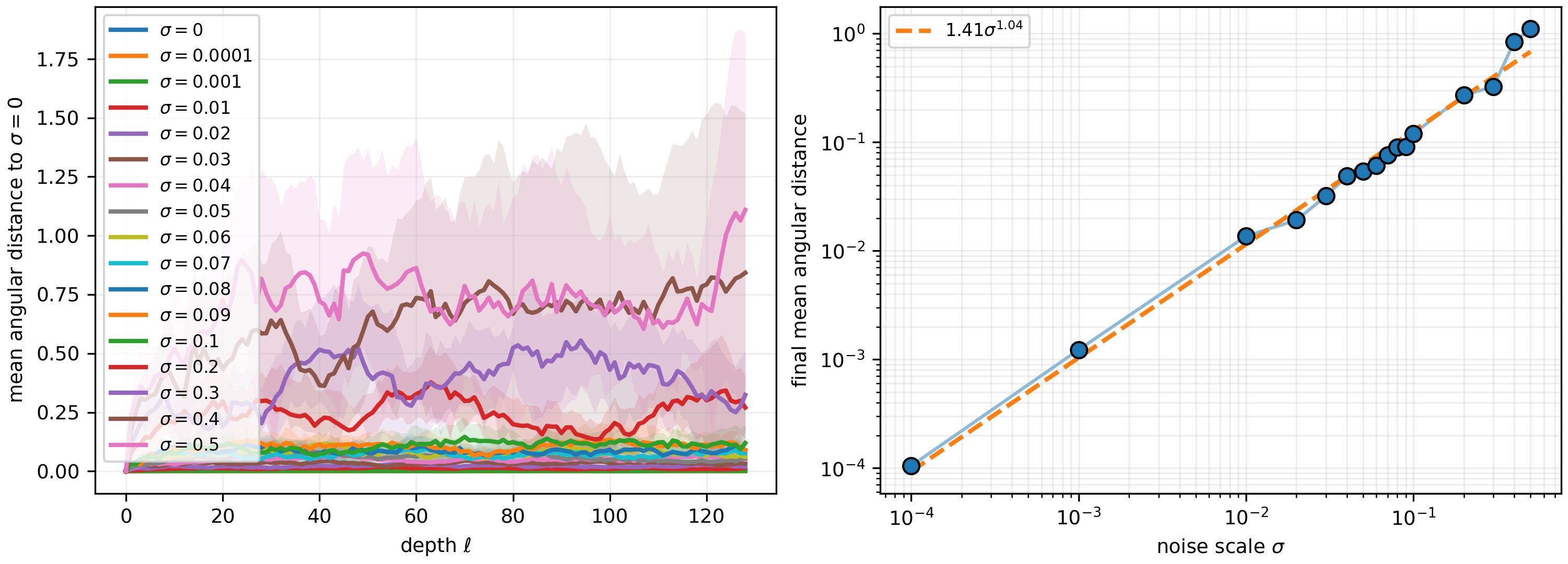}
    \caption{Evolution of the mean distance to the stationnary vector for the original dynamics and scaling with the noise scale.}
    \label{fig:placeholder}
\end{figure}
\subsection{Methodology for Geometric Alignment}
\label{app:geometric_alignment_details}

In Section \ref{sec:geometric_alignment_main}, we presented the geometric alignment between LoRA updates and base model features. 
For a given layer, let $W_0 \in \mathbb{R}^{d_{out} \times d_{in}}$ be the frozen base weight. We compute its Singular Value Decomposition (SVD) and extract the top-$k$ singular vectors (with $k=1$ for the stable direction analysis).
\begin{equation}
W_0 \approx U_k \Sigma_k V_k^T 
\end{equation}
The LoRA update is defined as $\Delta W = BA$. We extract the orthonormal basis of the row space of $A$ (Input subspace) and the column space of $B$ (Output subspace) using SVD.

We then calculate the principal angles between these LoRA subspaces and the base model's top-$k$ singular vectors. The raw alignment score is the mean singular value of the projection of the LoRA basis onto the base model basis.
To make this metric comparable across layers of different dimensions and adapter ranks, we normalize it against a random baseline:
\begin{equation}
    y = \frac{\text{Measured Cosine Similarity}}{\sqrt{r / d}}
\end{equation} 
A score of $y=1.0$ implies the LoRA update aligns with the base model no better than a random subspace. The results in Figure \ref{fig:lora_geometric_alignment} show values exceeding $1.0$, confirming non-random interference.
\subsubsection{Figure \eqref{fig: Phasetransition} illustrating \eqref{thm:depth_phase_transition}}
For this figure, we initialize tokens with uniform random variables on the hypercube $[-5,5]^{2}$, we consider $\Tilde{V}=V=I_2$, and $\Tilde{Q}=\Tilde{V}=\textbf{e}_1 \textbf{e}_1^{T}$ and $Q=K=I_2$.
\subsection{Spectrum of Value, Query and Key matrices in real-world Transformers}\label{sec: spectrum}
\subsubsection{Eigenvalues of ALBERT’s value matrices.}
 In Figure \ref{fig:PhaseTransition V_pert} we illustrate the
eigenvalues of the value matrices $V_h$ for a couple of heads h in a pre-trained ALBERT
model. We focus on ALBERT-xlarge-v2 available online at
\url{https://huggingface.co/albert-xlarge-v2}.
This version uses 16 heads, with sequences of length $n=256$ and tokens of dimension $d=128$. 

\subsubsection{Eigenvalues of ALBERT’s Query, Key matrices.}
 We download the ALBERT model, compute the rank of the matrix parameters $A=Q^{T}K$, and plot this histogram. Notice that in ALBERT, all the attention heads share this property of being low rank. it is suggested by the following figure

\begin{figure}[H]
    \centering
    \begin{minipage}{0.25\textwidth}
        \includegraphics[width=\linewidth, height=0.12\textheight]{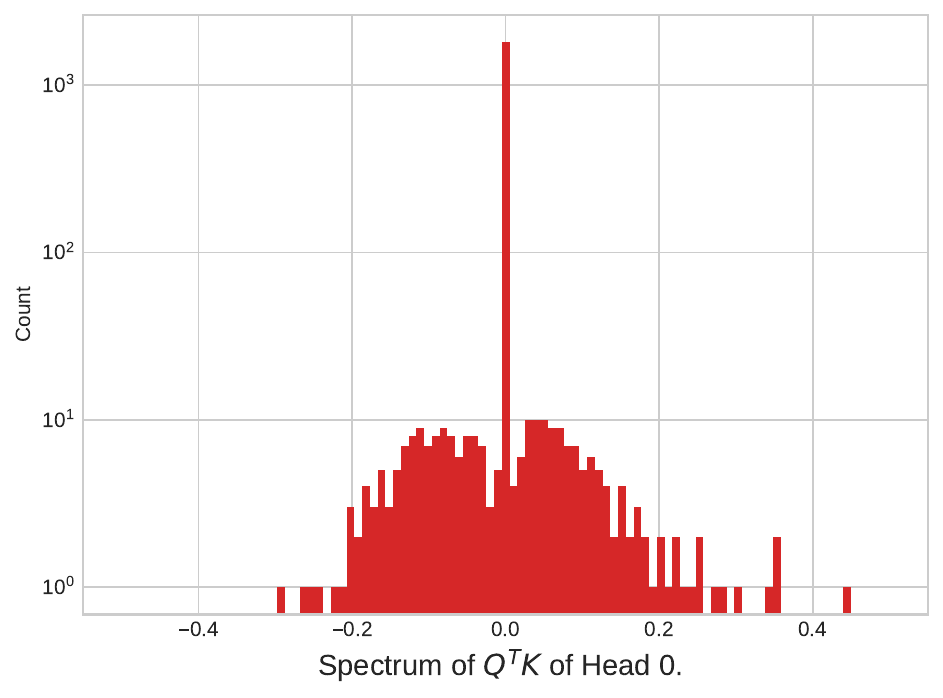}
        \includegraphics[width=\linewidth, height=0.12\textheight]{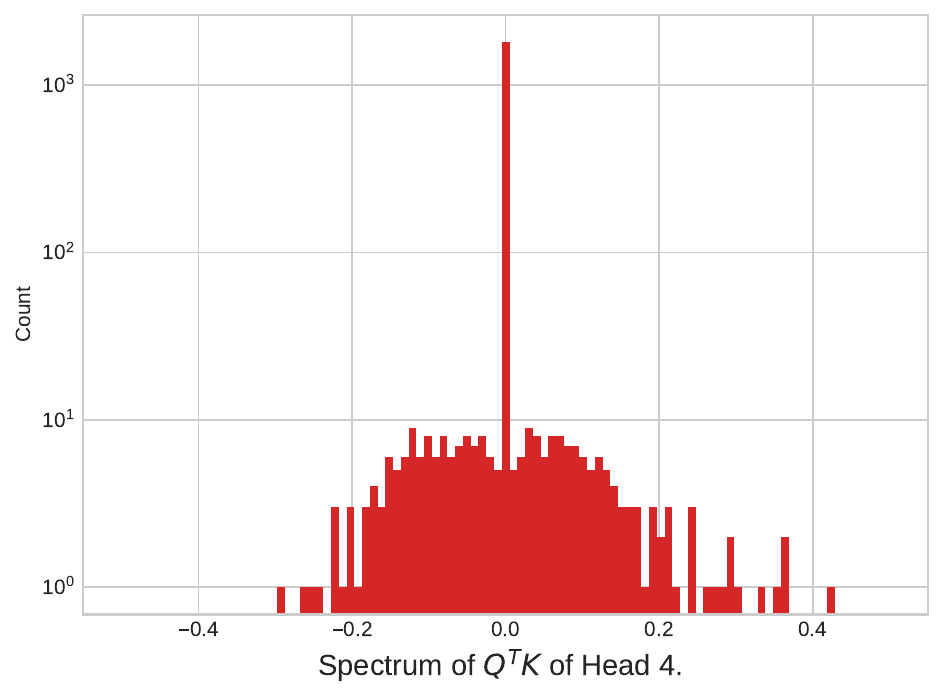}
        \includegraphics[width=\linewidth, height=0.12\textheight]{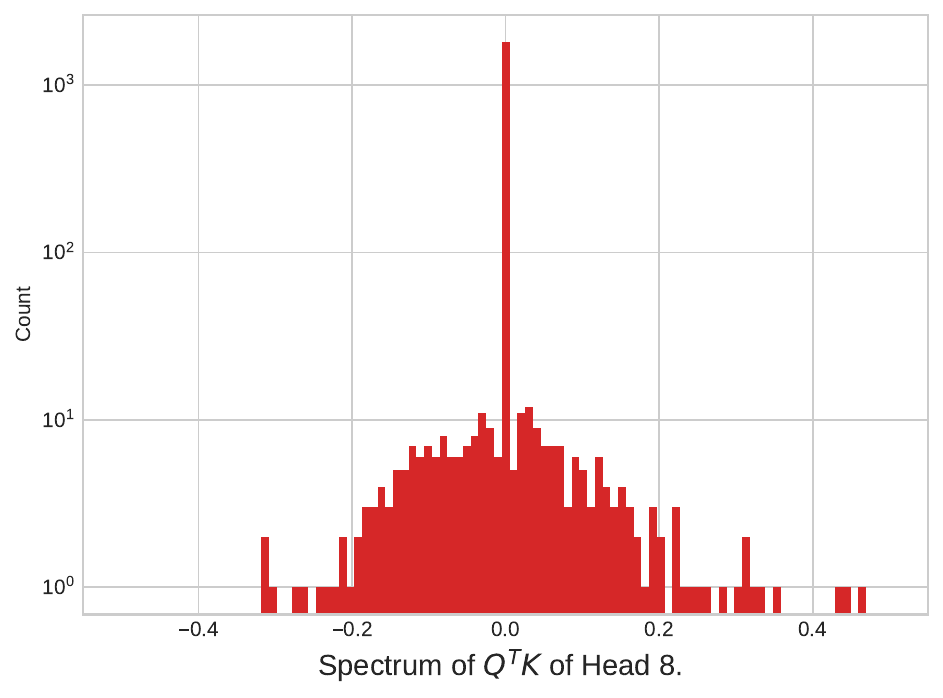}
        \includegraphics[width=\linewidth, height=0.12\textheight]{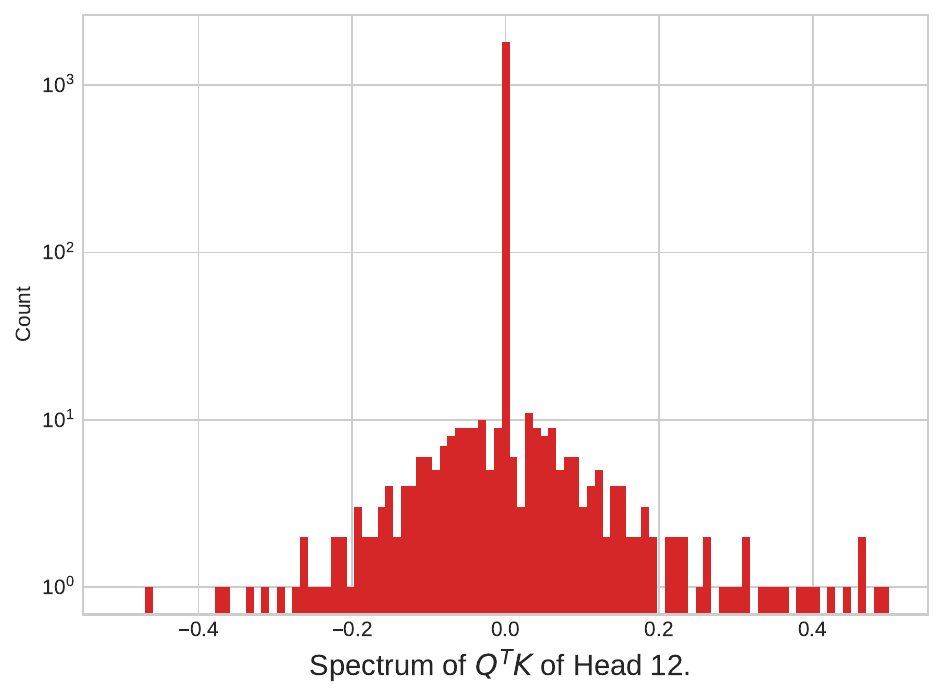}
    \end{minipage}%
    \begin{minipage}{0.25\textwidth}
        \includegraphics[width=\linewidth, height=0.12\textheight]{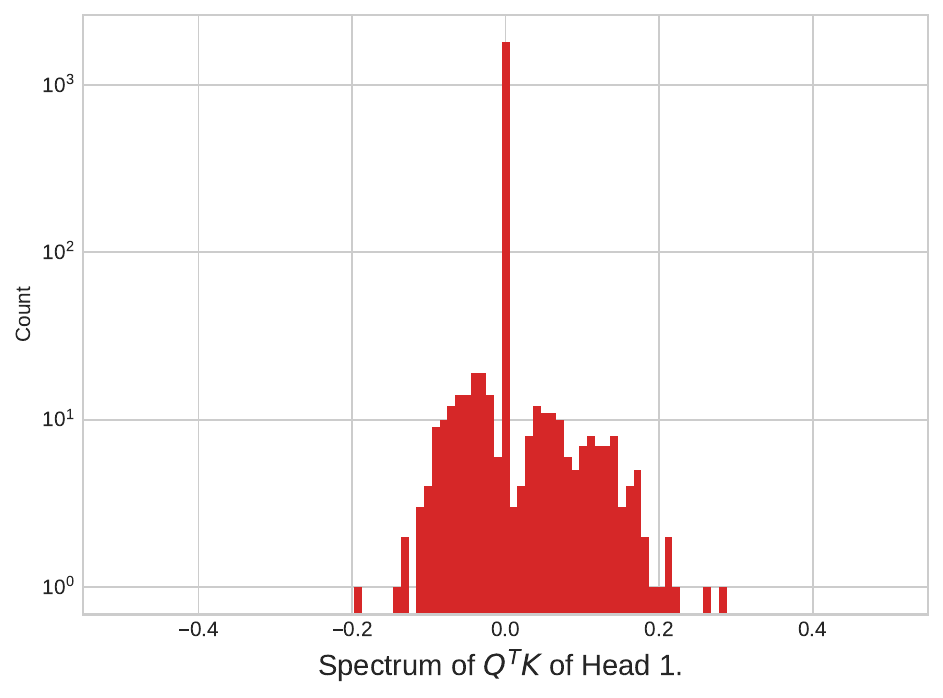}
        \includegraphics[width=\linewidth, height=0.12\textheight]{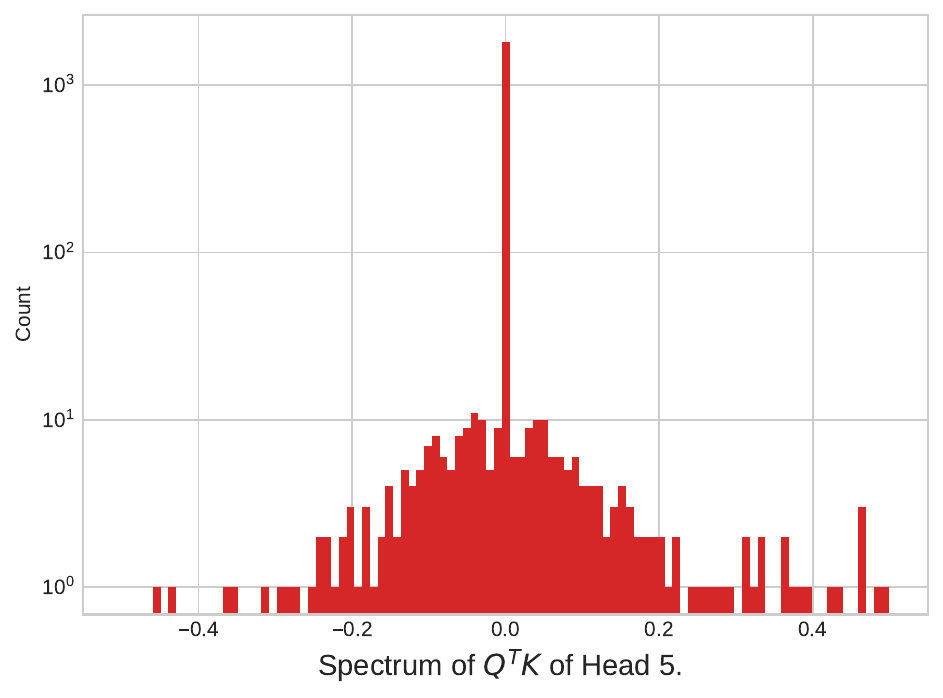}
        \includegraphics[width=\linewidth, height=0.12\textheight]{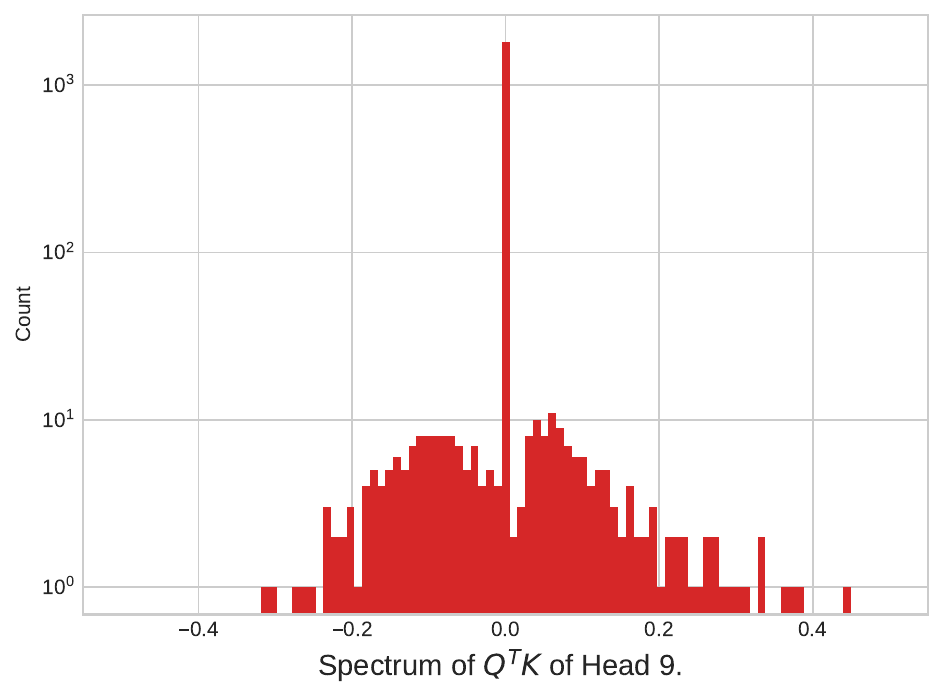}
        \includegraphics[width=\linewidth, height=0.12\textheight]{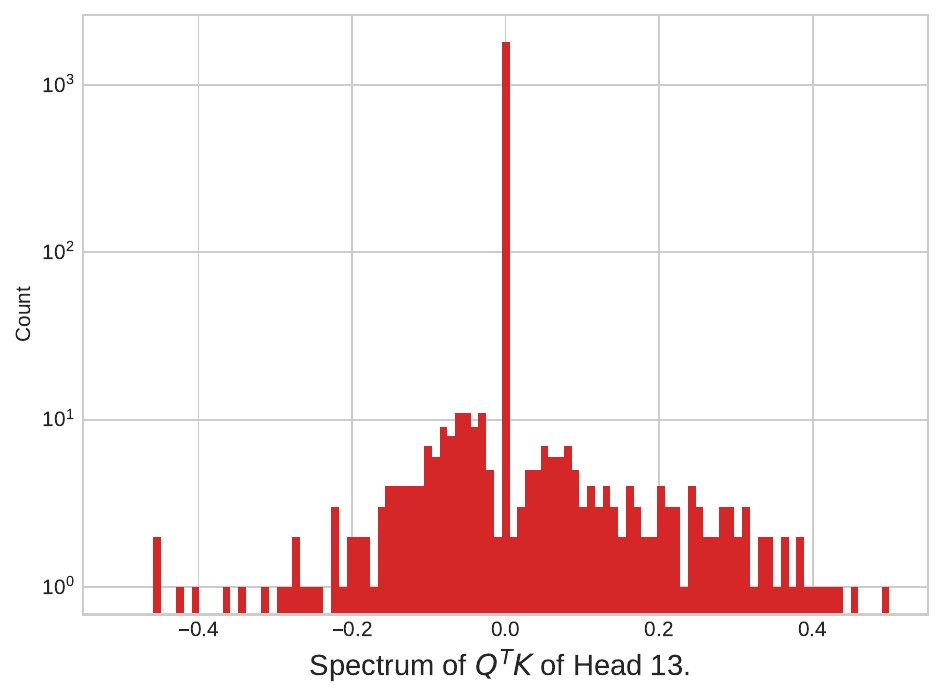}
    \end{minipage}%
    \begin{minipage}{0.25\textwidth}
        \includegraphics[width=\linewidth, height=0.12\textheight]{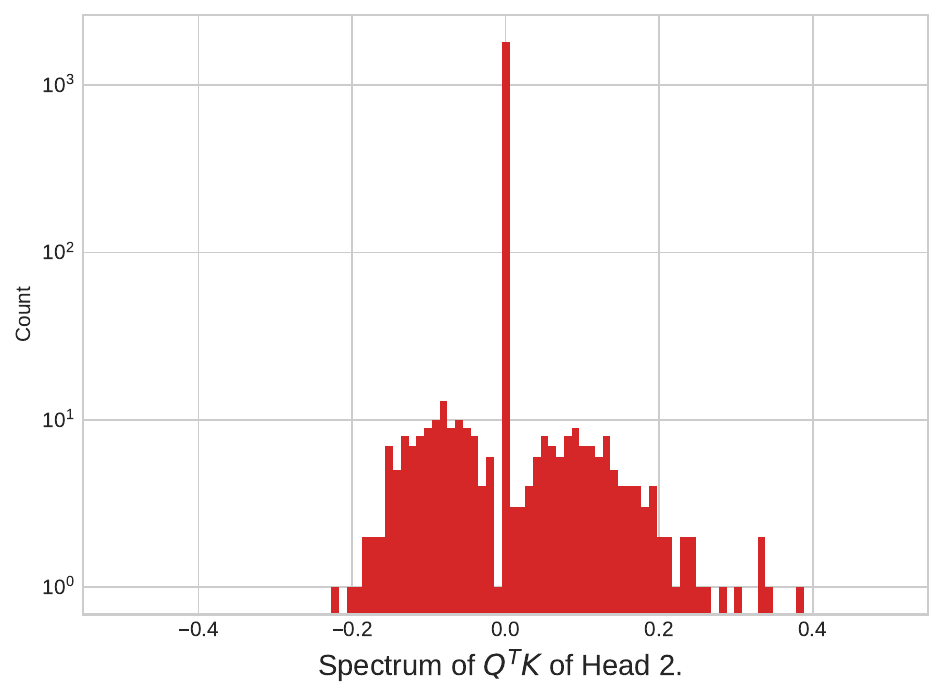}
        \includegraphics[width=\linewidth, height=0.12\textheight]{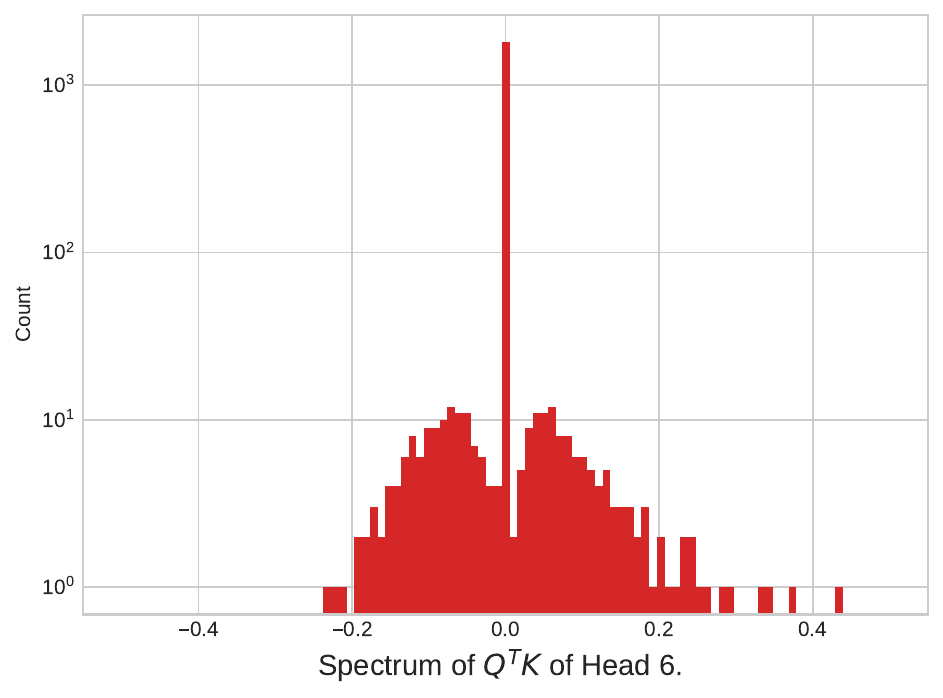}
        \includegraphics[width=\linewidth, height=0.12\textheight]{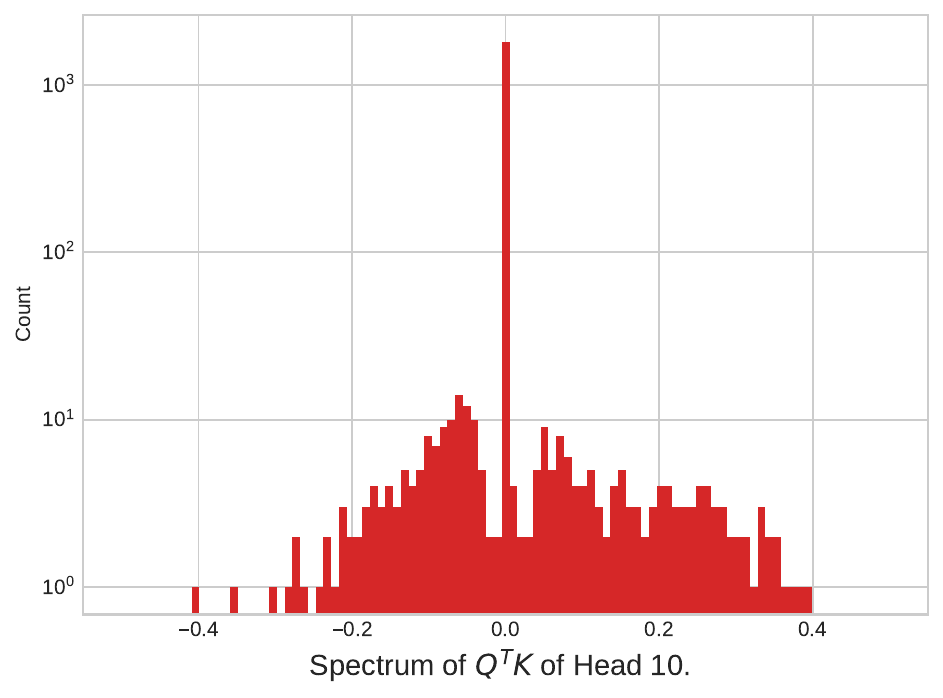}
        \includegraphics[width=\linewidth, height=0.12\textheight]{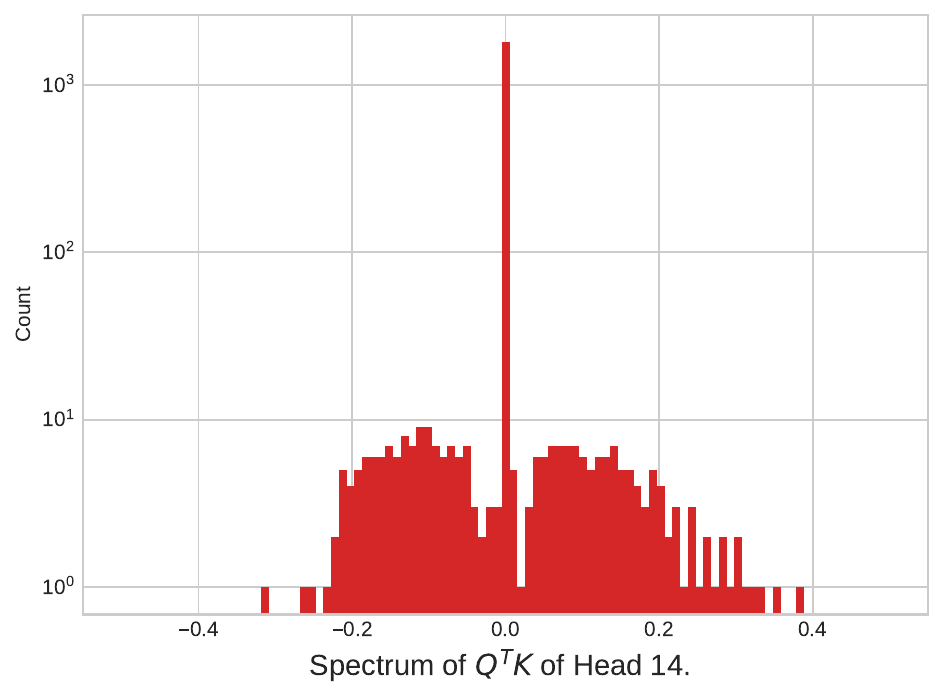}
    \end{minipage}%
    \begin{minipage}{0.25\textwidth}
        \centering
        \includegraphics[width=\linewidth, height=0.12\textheight]{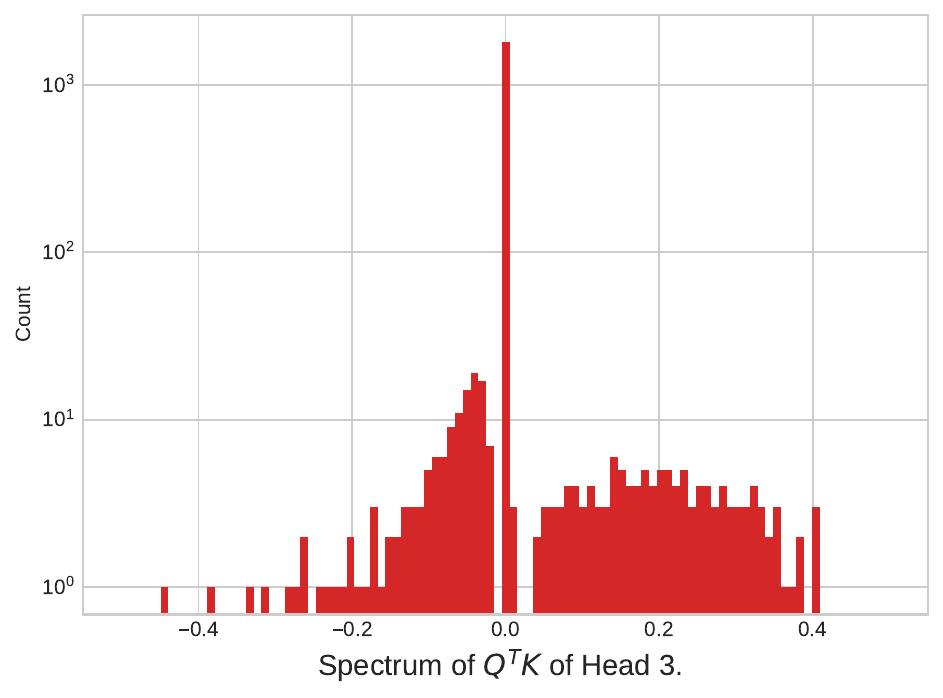}
        \includegraphics[width=\linewidth, height=0.12\textheight]{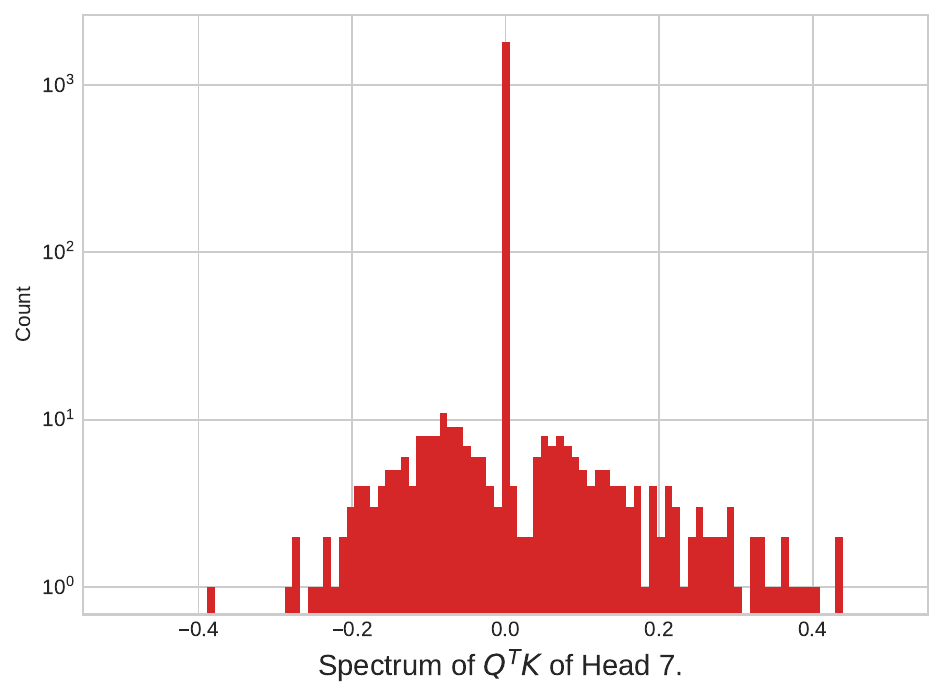}
        \includegraphics[width=\linewidth, height=0.12\textheight]{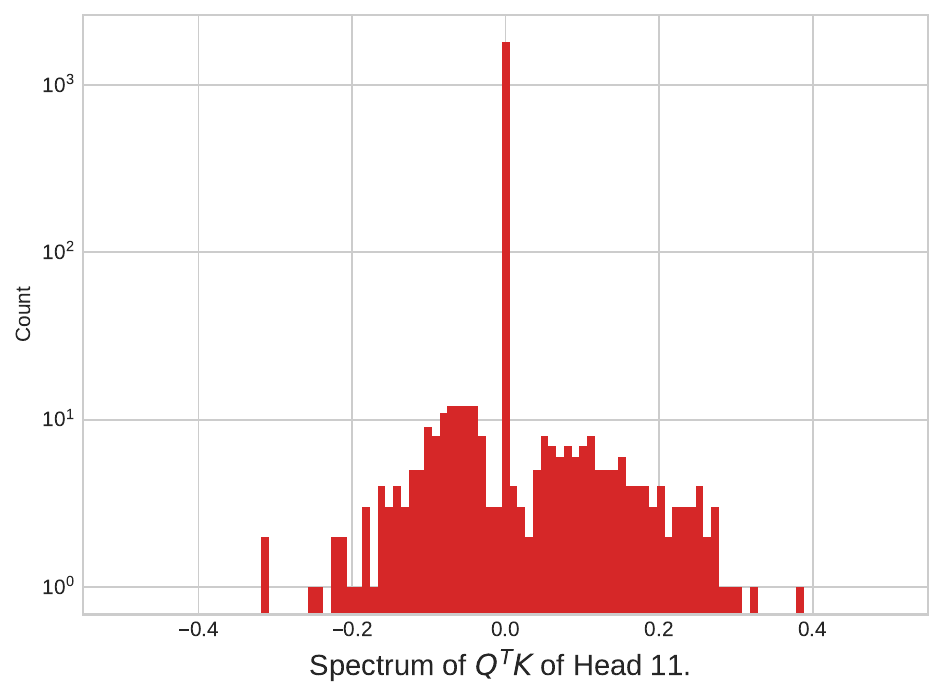}
        \includegraphics[width=\linewidth, height=0.12\textheight]{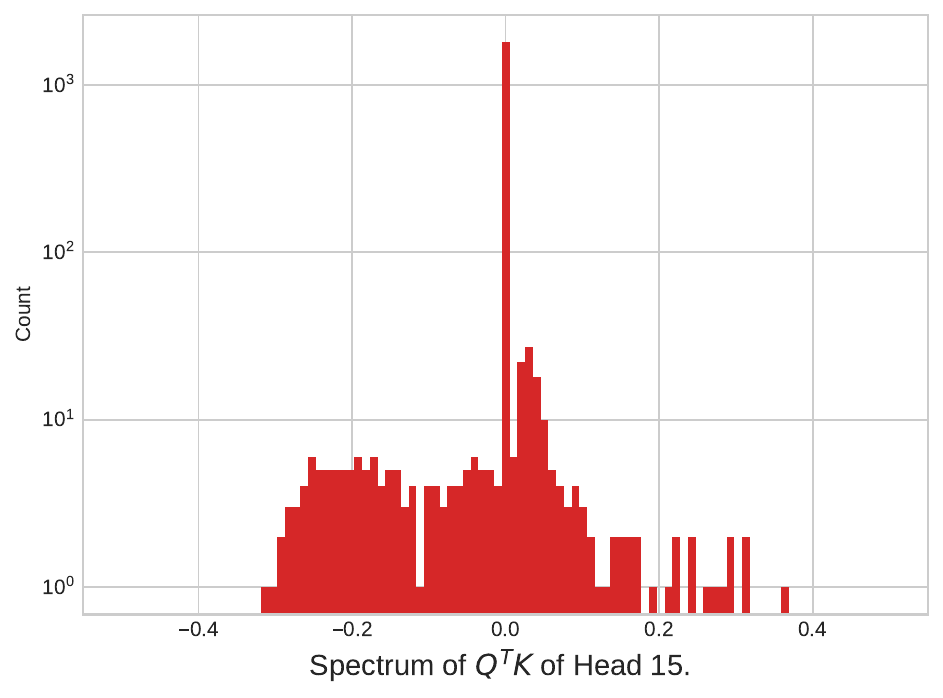}
    \end{minipage}
    \caption{Histograms of the eigenvalues of matrix $A_{h}=Q_{h}^{T}K_{h}$ where $h$ denotes the number of attention head in a Albert XL model. The y-axis is in log-scale. The number of eigenvalues associated with the $0$ eigenvalue is very large (roughly 200 ) which implies that the matrix is low-rank.  }
    \label{fig:histogramms_A_AlbertXL}
\end{figure}

\subsubsection{Eigenvalues of Llama-2 7B’s Value matrices.}

We investigated whether the assumptions of our theorems were valid for LLM models such as LlaMa 2 7B. The result is that very few attention layers seem to satisfy the assumptions of our theorems. This leaves many questions open from a theoretical point of view. We focus on Llama-2 7B avalaible online at \url{https://huggingface.co/meta-llama/Llama-2-7b-chat-hf}. This model is composed of 32 layers with dimension $d=4096$. In addition, Llama 2 has $32$ attention heads in each layer. Here we have represented the spectrum of the matrix obtained by concatenating the $32$ Values matrices in each layer.

\begin{figure}
    \centering
     \begin{subfigure}[b]{0.19\textwidth}
         \centering
         \includegraphics[width=\textwidth]{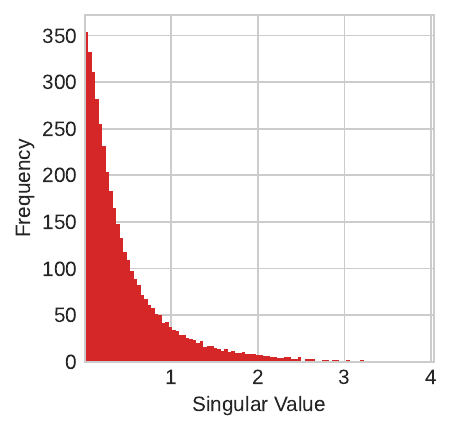}
         \caption{Layer 0}
         \label{fig:llama_block_0_svd}
     \end{subfigure}
     \hfill
     \begin{subfigure}[b]{0.19\textwidth}
         \centering
         \includegraphics[width=\textwidth]{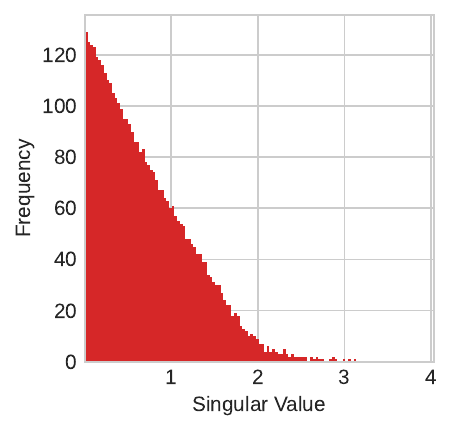}
         \caption{Layer 7}
         \label{fig:llama_block_7_svd}
     \end{subfigure}
     \hfill
     \begin{subfigure}[b]{0.19\textwidth}
         \centering
         \includegraphics[width=\textwidth]{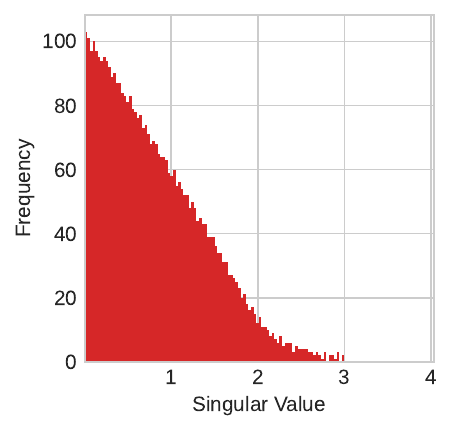}
         \caption{Layer 15}
         \label{fig:llama_block_15_svd}
     \end{subfigure}
     \hfill
     \begin{subfigure}[b]{0.19\textwidth}
         \centering
         \includegraphics[width=\textwidth]{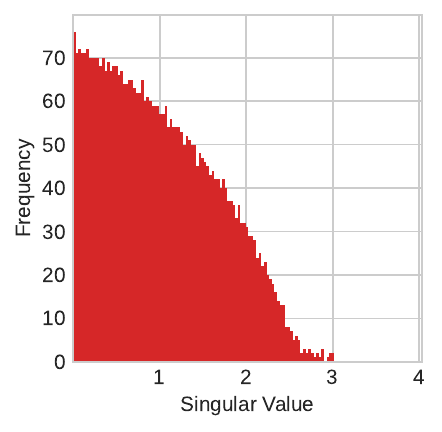}
         \caption{Layer 23}
         \label{fig:llama_block_23_svd}
     \end{subfigure}
     \hfill
     \begin{subfigure}[b]{0.19\textwidth}
         \centering
         \includegraphics[width=\textwidth]{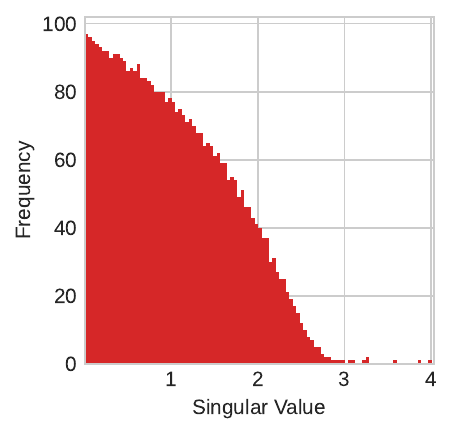}
         \caption{Layer 31}
         \label{fig:llama_block_31_svd}
     \end{subfigure}\caption{The SVD values show an interesting behavior. We can see for all blocks that there are outliers, and the distribution is becoming more and more concave on the latter blocks.}
    \label{fig:SVD Llama}
\end{figure}

\begin{figure}
    \centering
     \centering
     \begin{subfigure}[b]{0.4\textwidth}
         \centering
         \includegraphics[width=\textwidth]{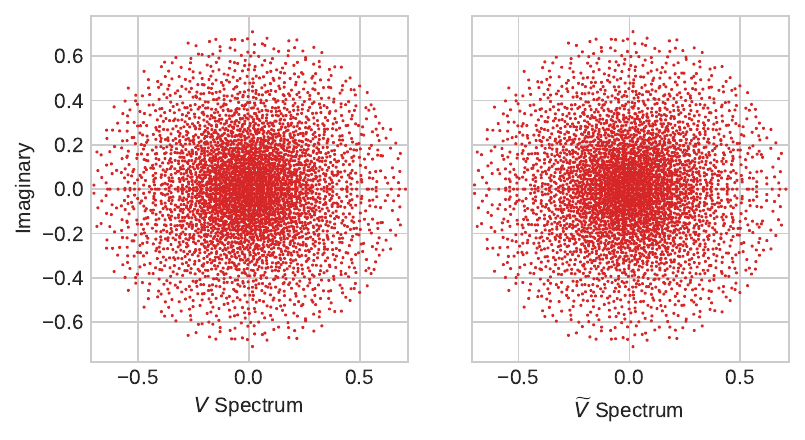}
         \caption{Layer 0}
         \label{fig:llama_block_0_v}
     \end{subfigure}
     \begin{subfigure}[b]{0.4\textwidth}
         \centering
         \includegraphics[width=\textwidth]{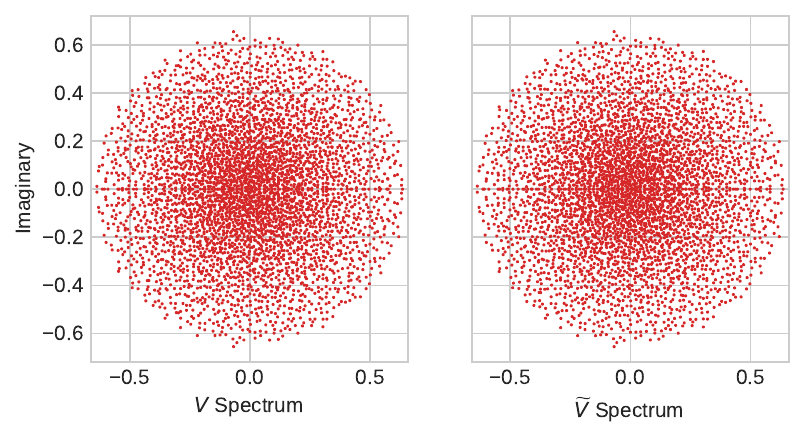}
         \caption{Layer 1}
         \label{fig:llama_block_1_v}
     \end{subfigure}
     \begin{subfigure}[b]{0.4\textwidth}
         \includegraphics[width=\textwidth]{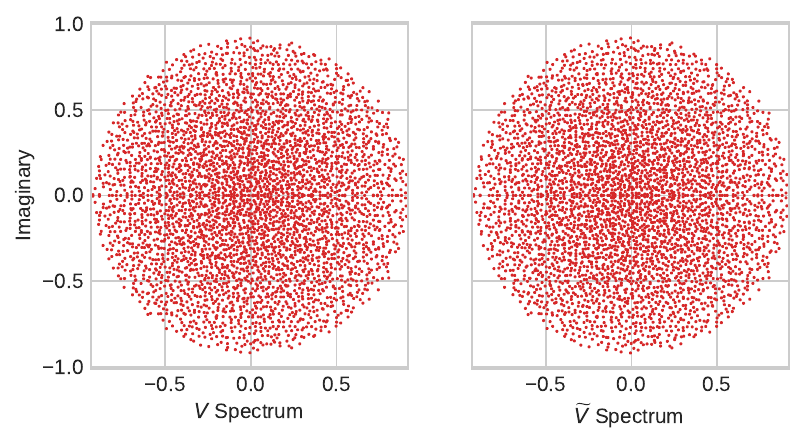}
         \caption{Layer 2}
         \label{fig:llama_block_2_v}
     \end{subfigure}
     \begin{subfigure}[b]{0.4\textwidth}
         \centering
         \includegraphics[width=\textwidth]{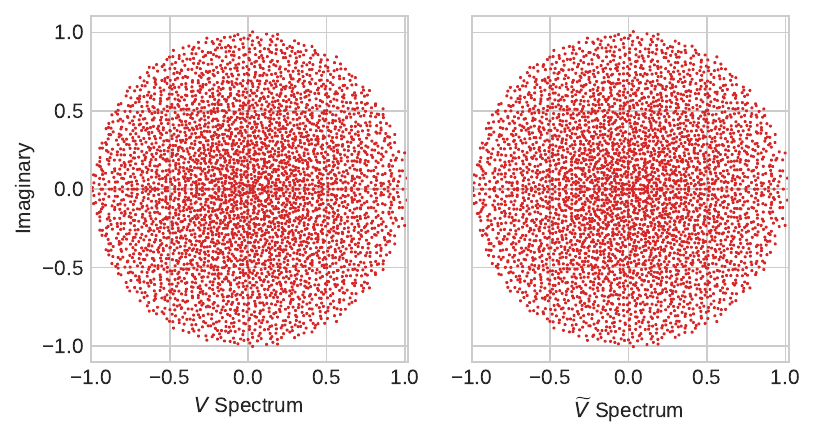}
         \caption{Layer 15}
         \label{fig:llama_block_15_v}
     \end{subfigure}
     \begin{subfigure}[b]{0.4\textwidth}
         \centering
         \includegraphics[width=\textwidth]{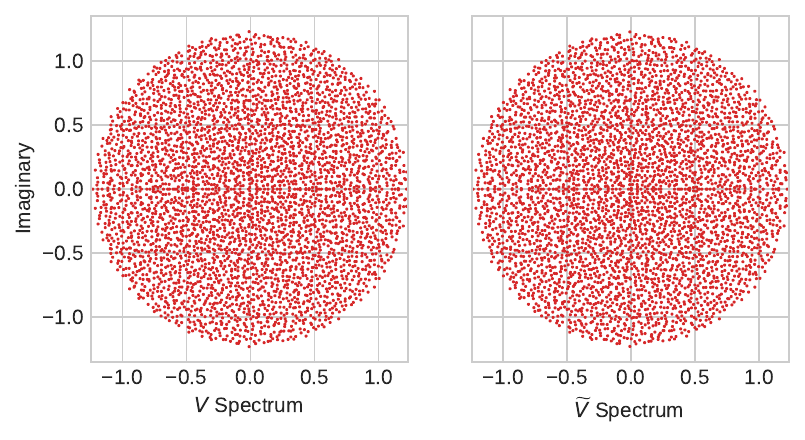}
         \caption{Layer 31}
         \label{fig:llama_block_31_v}
     \end{subfigure}
    \caption{ Spectrum of different Values $V_{h}$ and $\tilde{V}_h$ matrix parameters for different layers $h$.
    The spectrum of both $V$ and $\tilde{V}$ seems similar to a random matrix of the Complex Ginibre Ensemble (random matrix with i.i.d Gaussian variables at each index). The spectrum of the early layers is less localized than the layers of the late layers in the sense that it seems that there are more outliers in the first layers.}
    \label{fig:V_LLama}
\end{figure}
\subsubsection{Rank of Llama-2 7B’s Query, Key matrices.}
We investigated whether the assumptions of our theorems were valid for LLM models such as LlaMa 2 7B. The result is that very few attention layers seem to satisfy the assumptions of our theorems, however, from layer 3, it can be observed that the rank of the matrix $K^{T}Q$ is approximately $3500$ in a $4096$-dimensional space, which is not insignificant.. This leaves many questions open from a theoretical point of view. One surprising fact is that at the first layer, the matrix $A=K^TQ$ is low rank around $500$ in a space of dimension $4096$, then the rank of the matrices $A_{h}=K_{h}^{T}Q_{h}$ increases with the number of layers up to around $3600$ which is not anymore small compare to the ambient space. 
\begin{figure}

    \centering
    \includegraphics[width=6cm]{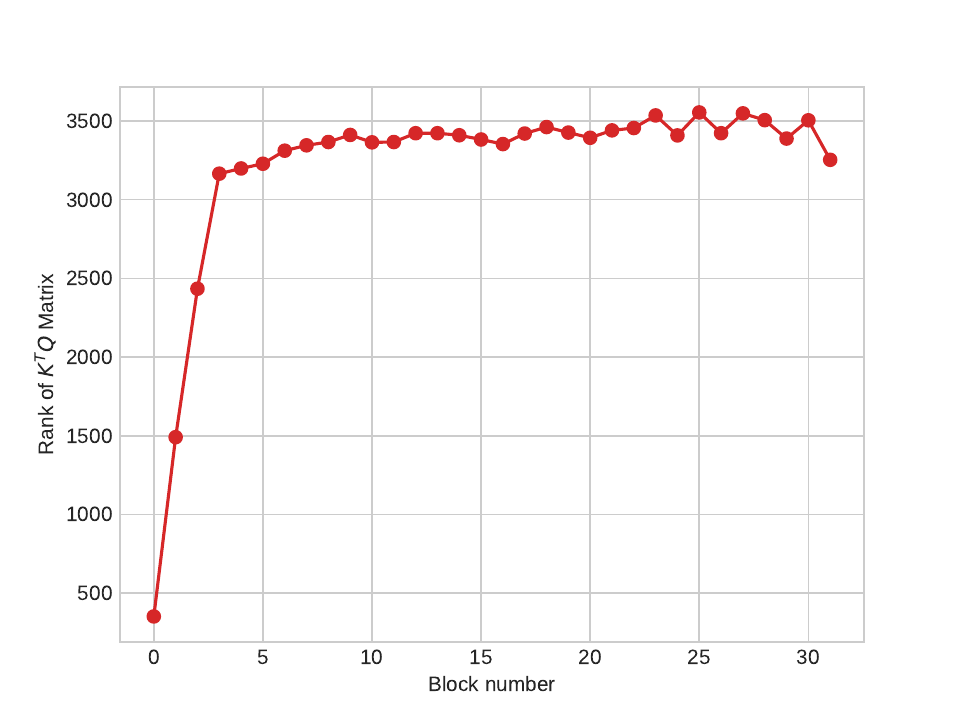}
    \caption{Rank of $K^TQ$ matrices for each of the 32 attention blocks of LLama-2-7b}
    \label{fig:Rk_LLama_QK}
\end{figure}

\subsection{Llama 2 clustering}\label{sec: Llama2cluster}
To achieve this, we computed the correlations between the outputs of the tokens at layer $L$ in Llama 2. Surprisingly, we did not observe clear clustering when the input was text from Wikipedia, but we did observe clustering in the case of random text inputs 

\bigskip
In this research, the Llama 2 7b model was subjected to further analysis. The model was modified by doubling the number of hidden layers from the original 32 to 64, with the latter 32 layers mirroring the weights of the first. For the input, English sentences were generated randomly using \url{https://www.dummytextgenerator.com/}, as well as from a random Wikipedia article (which happened to be about \href{https://www.wikiwand.com/en/Overdetermination}{overdetermination}). These sentences were then fed into the modified model, and the scalar product between each token was calculated at various stages within the model. The data, as illustrated in Figure~\ref{fig:token_clustering}, showed a consistent trend: the average scalar product between tokens gradually increased, approaching unity as the tokens progressed through the augmented layers. Interestingly, it is more visible with the dummy text. This pattern may be indicative of a systematic process of token representation refinement occurring within the model. Specifically, the rise in scalar product values suggests the possibility of clustering or convergence in the token vector space, which may imply that the model is effectively refining and aligning token representations as it processes the input through its increased depth.

\begin{figure}[H]
    \centering
    \begin{subfigure}{0.45\linewidth}
    \includegraphics[width=\linewidth]{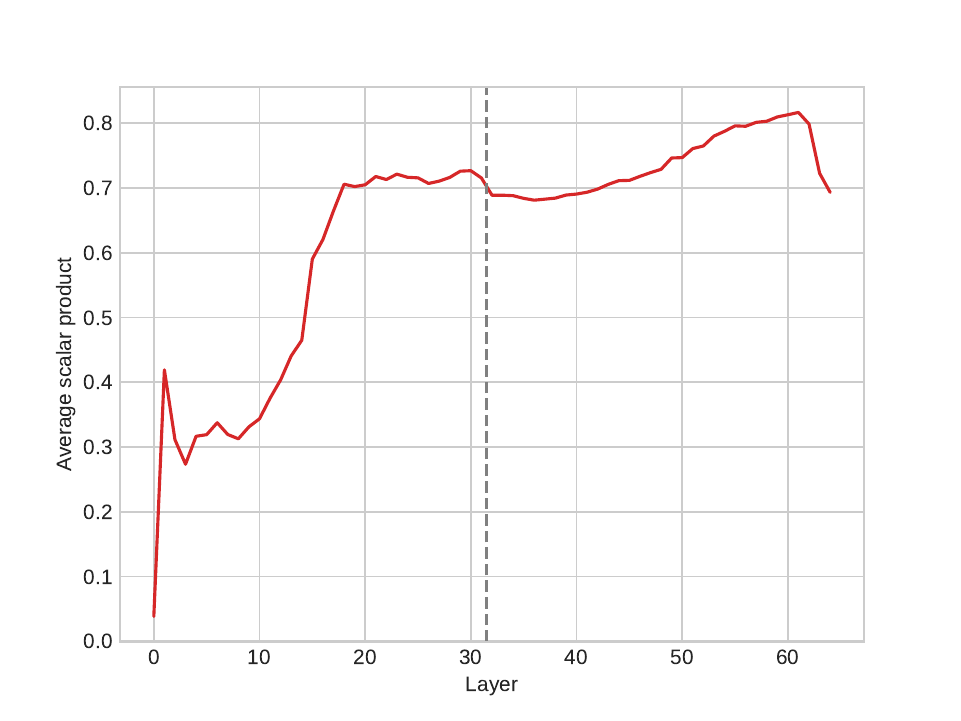}
    \caption{Dummy text example}
    \end{subfigure}
    \begin{subfigure}{0.45\linewidth}
    \includegraphics[width=\linewidth]{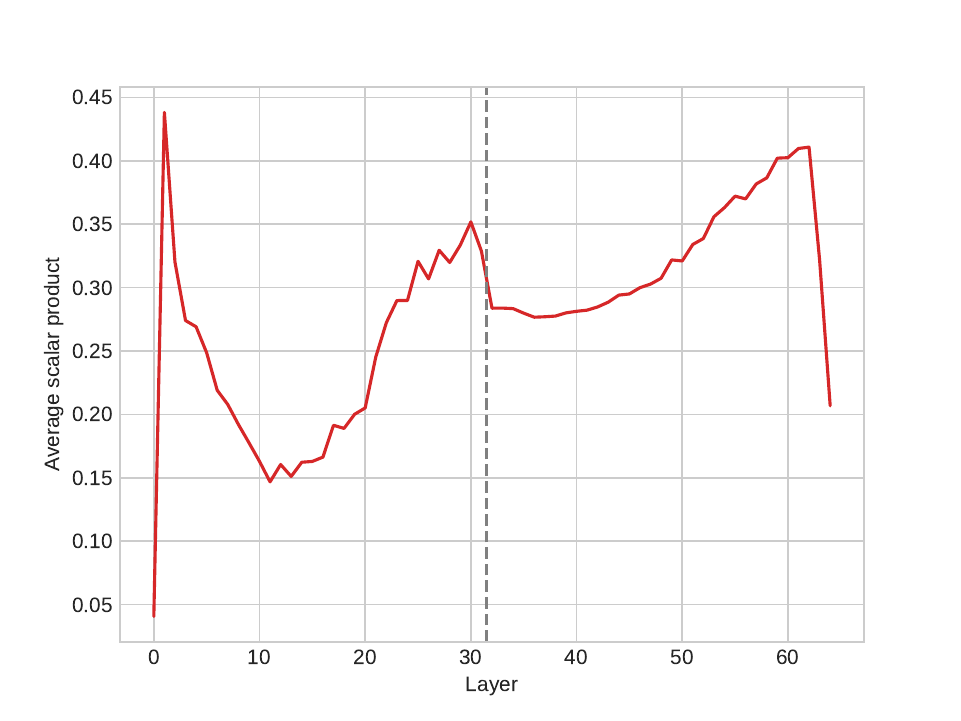}
    \caption{Wikipedia example}
    \end{subfigure}
    \caption{Average scalar product across tokens, for each layer in the augmented Llama 2 7b model, the vertical dashed line indicates where the regular Llama-2 model would have stopped}
    \label{fig:token_clustering}
\end{figure}

\begin{figure}[H]
    \centering
    \begin{subfigure}{0.190\linewidth}
        \centering
        \includegraphics[width=\linewidth]{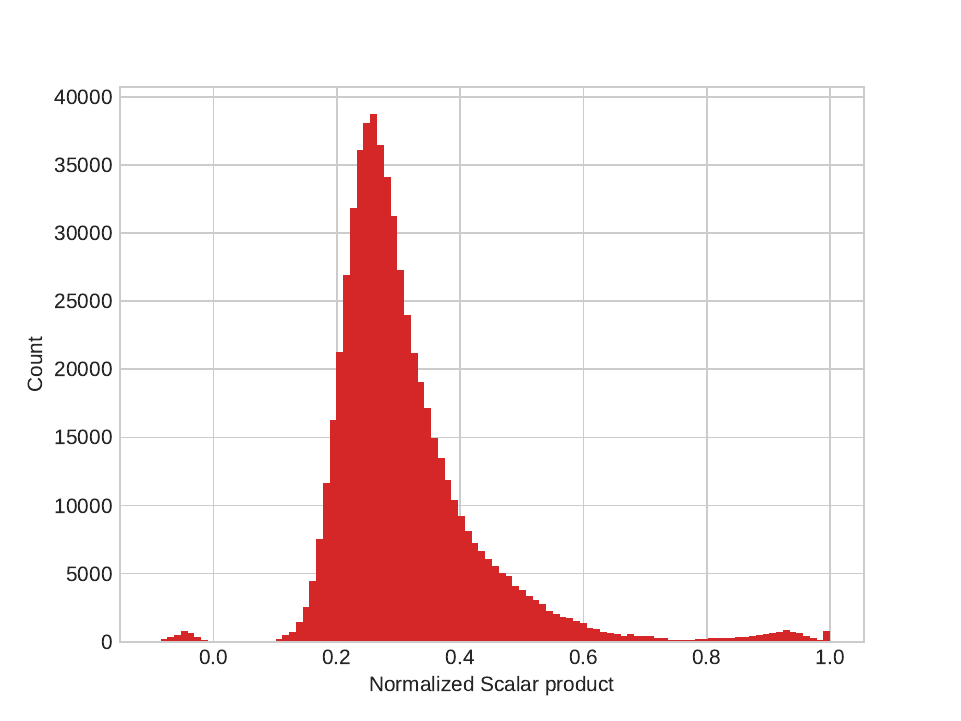}
        \caption{Layer 2}
    \end{subfigure}
    \begin{subfigure}{0.19\linewidth}
        \centering
        \includegraphics[width=\linewidth]{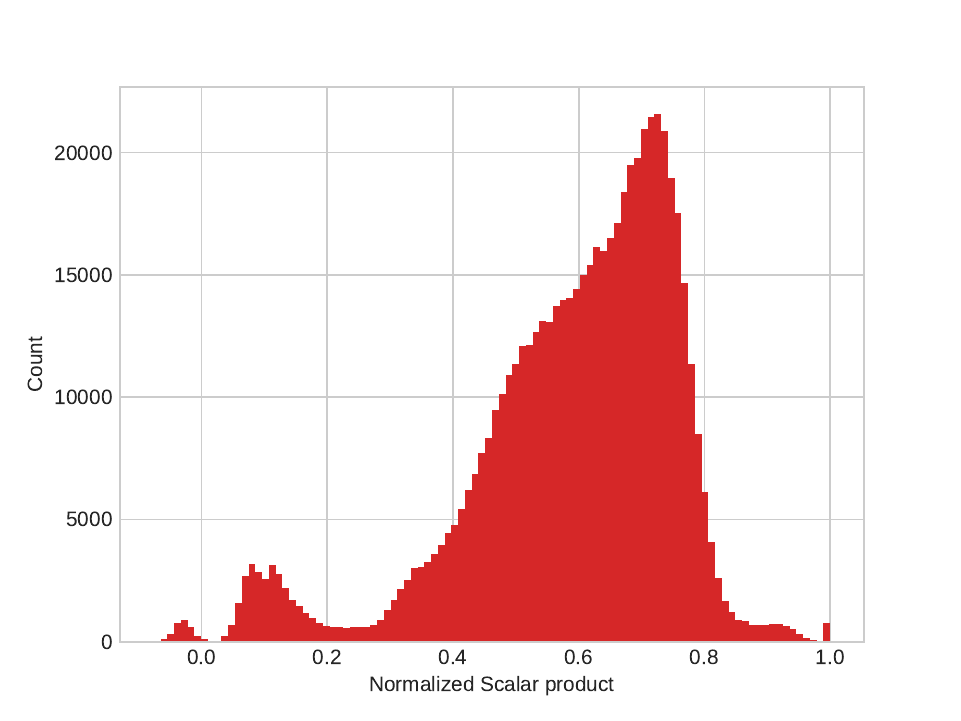}
        \caption{Layer 15}
    \end{subfigure}
    \begin{subfigure}{0.19\linewidth}
        \centering
        \includegraphics[width=\linewidth]{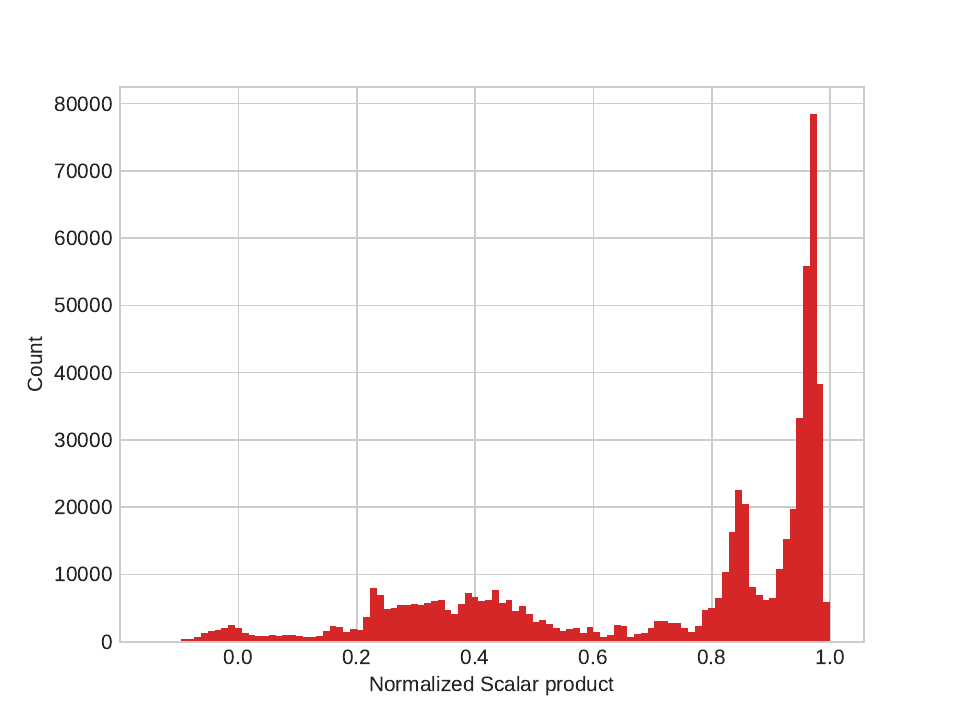}
        \caption{Layer 31}
    \end{subfigure}
    \begin{subfigure}{0.19\linewidth}
        \centering
        \includegraphics[width=\linewidth]{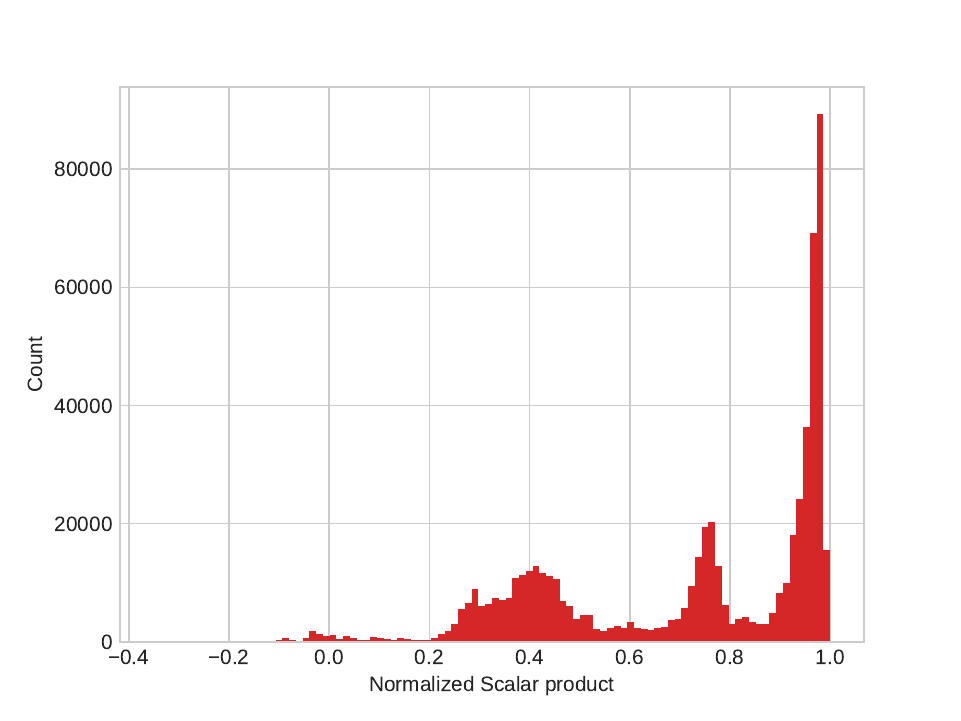}
        \caption{Layer 47}
    \end{subfigure}
    \begin{subfigure}{0.19\linewidth}
        \centering
        \includegraphics[width=\linewidth]{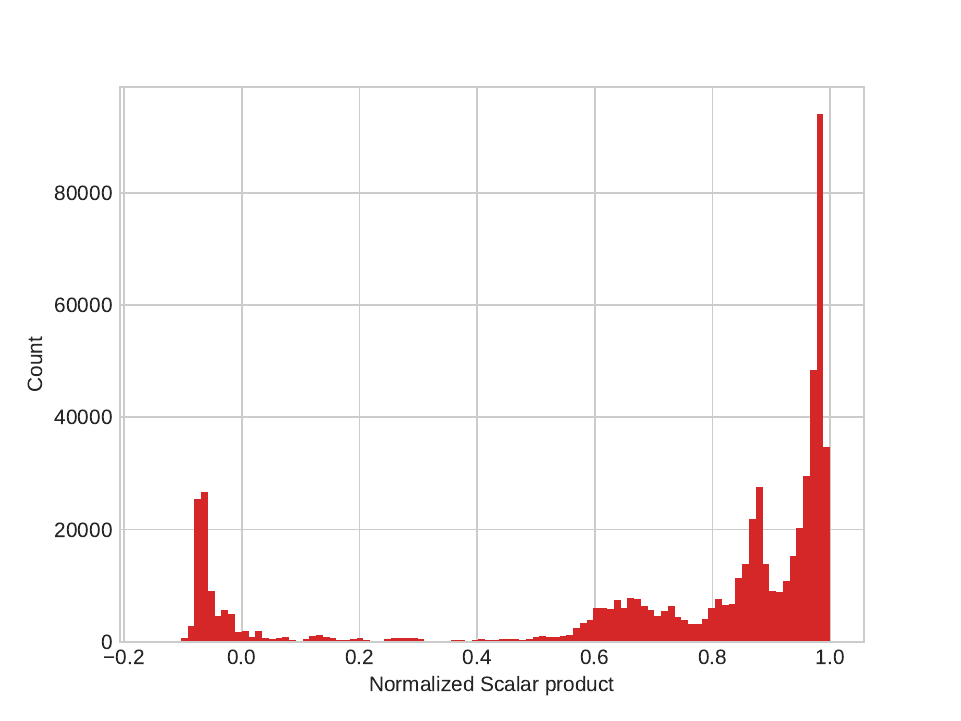}
        \caption{Layer 63}
    \end{subfigure}
    
    \caption{Distribution of scalar products after passing through several layers of the model, for the dummy text example.}
\end{figure}

\begin{figure}[H]
    \centering
    \begin{subfigure}{0.190\linewidth}
        \centering
        \includegraphics[width=\linewidth]{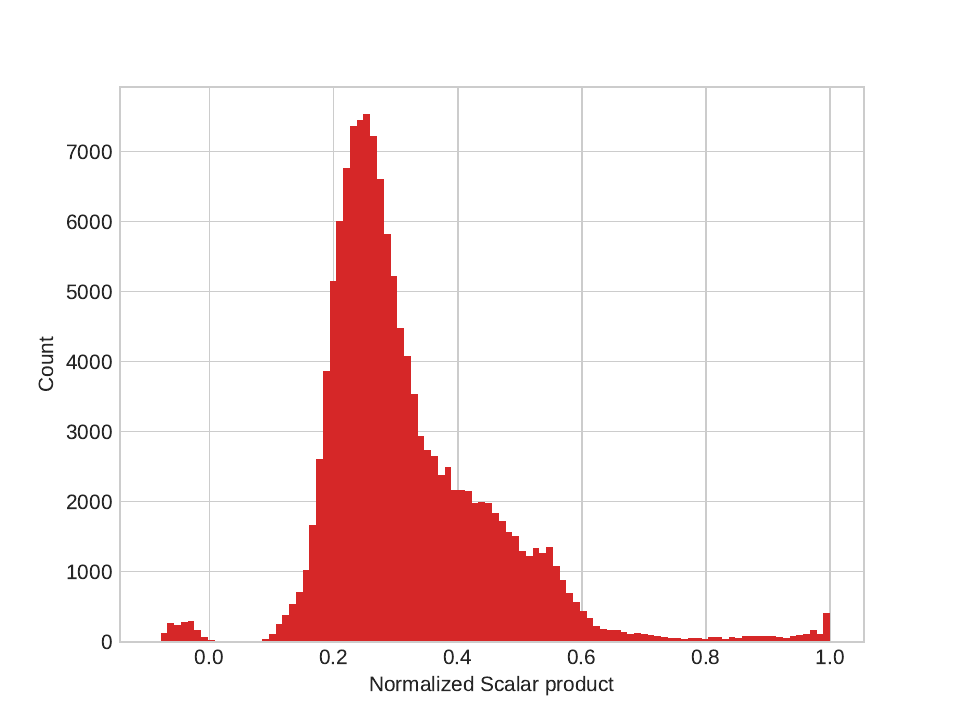}
        \caption{Layer 2}
    \end{subfigure}
    \begin{subfigure}{0.19\linewidth}
        \centering
        \includegraphics[width=\linewidth]{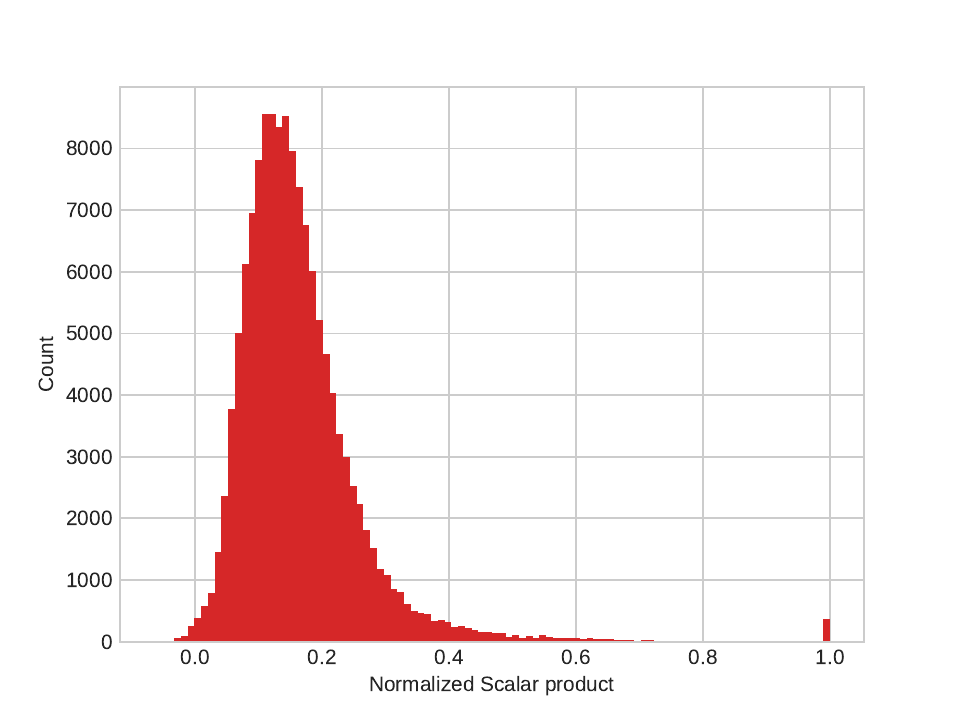}
        \caption{Layer 15}
    \end{subfigure}
    \begin{subfigure}{0.19\linewidth}
        \centering
        \includegraphics[width=\linewidth]{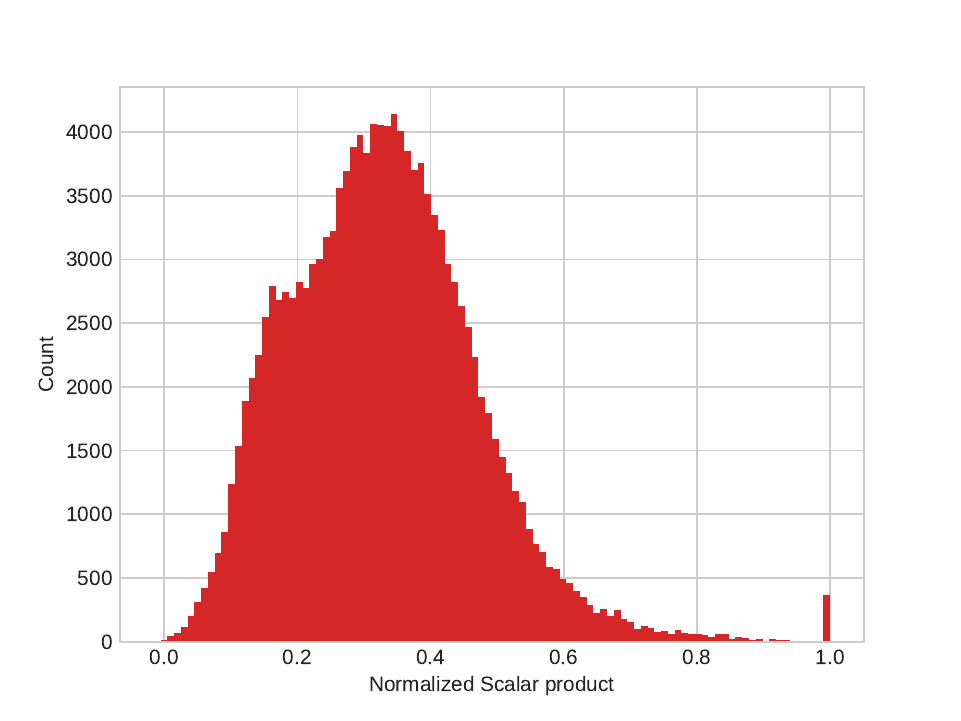}
        \caption{Layer 31}
    \end{subfigure}
    \begin{subfigure}{0.19\linewidth}
        \centering
        \includegraphics[width=\linewidth]{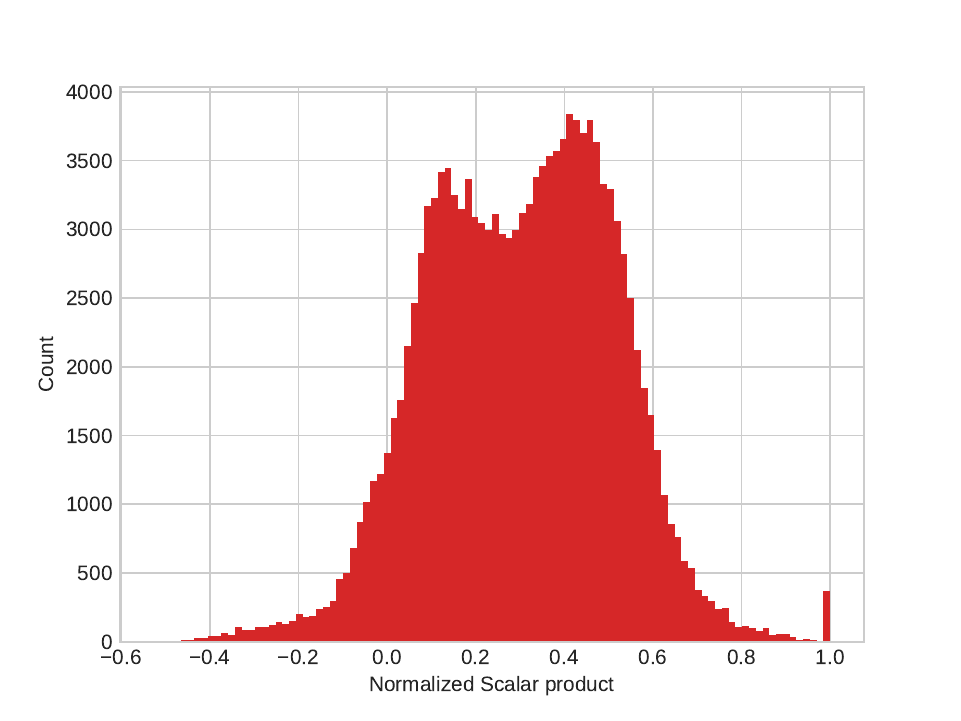}
        \caption{Layer 47}
    \end{subfigure}
    \begin{subfigure}{0.19\linewidth}
        \centering
        \includegraphics[width=\linewidth]{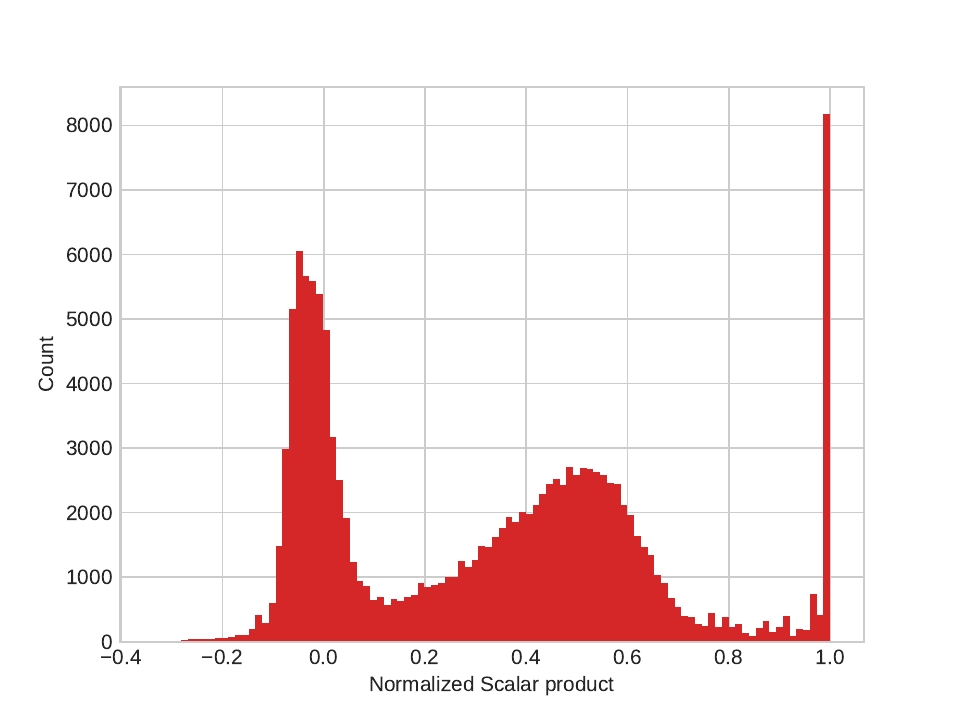}
        \caption{Layer 63}
    \end{subfigure}
    
    \caption{Distribution of scalar products after passing through several layers of the model, for the Wikipedia example.}
\end{figure}
\newpage

\end{document}